\setlist[1]{itemsep=-5pt}
\newtheorem{theorem}{Theorem}
\newtheorem{lemma}[theorem]{Lemma}
\newtheorem{definition}{Definition}
\newtheorem{assumption}{Problem Setup}
\newtheorem{assumption2}{Assumption}
\newcommand{\citealp}[1]{\cite{#1}}
\newcommand{\citep}[1]{\cite{#1}}
\newcommand{\citet}[1]{\cite{#1}}
\newcommand{\bigo}[1]{O \left( {#1} \right)}
\newcommand{\logo}[1]{\tilde{O} \left( {#1} \right)}
\newcommand{\bigomega}[1]{\Omega \left( {#1} \right)}
\newcommand{\bigtheta}[1]{\Theta \left( {#1} \right)}
\newcommand{\remove}[1]{}
\DeclareMathOperator{\gain}{{\texttt{Gain}}}
\newcommand{\sample}{\mathcal{S}}
\newcommand{\F}{\mathcal{F}}
\newcommand{\Z}{\mathcal{Z}}
\newcommand{\setG}{\mathcal{G}}
\newcommand{\tildeF}{{\tilde{\F}}}
\newcommand{\I}{\mathcal{I}}
\newcommand{\distr}{\mathcal{D}}
\newcommand{\N}{N}
\newcommand{\K}{K}
\newcommand{\samplesize}{{S}} 
\newcommand{\LFD}{\textsf{\textsc{UseRep}}} 
\newcommand{\ID}{\textsf{\textsc{ImproveRep}}}
\newcommand{\nodeA}{u}
\newcommand{\nodeB}{v}
\newcommand{\nodeC}{w}
\newcommand{\Var}{\textnormal{\texttt{var}}}
\newcommand{\Fold}{{\tilde{\mathcal{F}}_{\text{old}}}}
\DeclareMathOperator{\dlin}{ {\in_{\texttt{dl}}} }
\DeclareMathOperator{\dlnotin}{ {\notin_{\texttt{dl}}} }
\DeclareMathOperator{\dlsubset}{{\subset_{\texttt{dl}}}}
\DeclareMathOperator{\dlsubseteq}{{\subseteq_{\texttt{dl}}}}
\newcommand{\dtspace}{\mathbb{DT}}
\newcommand{\x}{\mathbf{x}}
\newcommand{\G}{\mathbf{g}}
\newcommand{\g}{g}
\newcommand{\tildeG}{\tilde{\G}}
\newcommand{\Pref}{\textnormal{\textsf{Pref}}}
\newcommand{\sPref}{\textnormal{\textsf{Pref}}_{\star}}
\newcommand{\sSuff}{\textnormal{\textsf{Suff}}_{\star}}
\newcommand{\ssSuff}{\textnormal{\textsf{Suff}}_{\star\star}}
\newcommand{\f}{\mathbf{f}}
\newcommand{\w}{\mathbf{w}}
\newcommand{\E}{\mathbb{E}}
\newcommand{\C}{\mathbb{C}}
\newcommand{\affix}{\textsf{\textsc{affix}}}
\newcommand{\lab}{\textsf{\textsc{label}}}
\newcommand{\conflict}{\textsf{\textsc{conflict}}}
\newcommand{\induce}{\textsf{\textsc{induce}}}
\begin{document}

\title{Lifelong Learning in Costly Feature Spaces\footnote{Authors' addresses: \texttt{\{ninamf,avrim,vaishnavh\}@cs.cmu.edu}.}}
 \author{Maria-Florina Balcan \and
 Avrim Blum \and
 Vaishnavh Nagarajan }
\date{\today}

\sloppy 
\maketitle
\begin{abstract}
An important long-term goal in machine learning systems is to build learning agents that, like humans, can learn many tasks over their lifetime,
and moreover use information from these tasks to improve their ability to do so efficiently.
 In this work, our goal is to provide new theoretical insights into the potential of this paradigm. In particular, we propose a lifelong learning framework that adheres to a novel notion of resource efficiency that is critical in many real-world domains where feature evaluations are costly. That is, our learner aims to reuse information from previously learned related tasks to learn future tasks in a {\em feature-efficient} manner. 
Furthermore, we consider novel combinatorial ways in which learning tasks can relate.  Specifically, we design lifelong learning algorithms
for two structurally different and widely used families of target functions: decision trees/lists and monomials/polynomials.
We also provide strong feature-efficiency guarantees for these algorithms; in fact, we show that in order to learn future targets, we need only slightly more feature evaluations per training example than what is needed to predict on an arbitrary example using those targets.
We also provide algorithms with guarantees in an 
 agnostic model where not all the targets are related to each other. Finally, we also provide lower bounds on the performance of a lifelong learner in these models, which are in fact tight under some conditions.

\end{abstract}

\section{Introduction}
Machine learning algorithms have found widespread use in solving naturally occurring tasks in   domains like medical diagnosis, autonomous navigation and document classification. Accompanying this rapid growth, there has been remarkable progress in theoretically understanding how machine learning can solve single tasks in isolation. However, real-world tasks rarely occur in isolation. For example, an autonomous robot may have to accomplish a series of control learning tasks during its life, and to do so well it should employ methods that improve its ability to learn as it does so, needing less resources as it learns more \citep{TP97,ll-robot}. As we scale up our goals from learning a single function to learning a stream of many functions, we need to develop sound theoretical foundations to analyze these large-scale learning settings.

Broadly, the goal of a {\em lifelong learner} is to solve a series of many tasks over its lifetime by a) extracting succinct and useful representations about the relations among previously learned tasks, and then b) using these representations to learn future tasks more efficiently. 
In this work, we provide new insights into this paradigm by first proposing a metric for lifelong learning that exposes an important type of resource efficiency gain. Then we design algorithms for important and widely used classes of functions 
with strong theoretical guarantees in this metric. 

In particular, we consider a setting where evaluating the features of data points is costly and hence the learner wishes to exploit task relations to improve its {\em feature-efficiency} over time.  
Feature-efficiency is critical in applications such as medical diagnosis and high-dimensional data domains where evaluating feature values of a data point might involve performing expensive or intrusive medical tests or accessing millions of values. 
In fact, one of the reasons decision trees (which is one of the important function classes we study in this paper) are commonly used in medical diagnosis \citep{Podgorelec2002}
is that once the trees are learned, one can then make {\em predictions} on new examples by evaluating very few features---at most the depth of the tree.  

We consider lifelong learning from the perspective of this feature evaluation cost, and show how we can use commonalities among previously-learned target functions to perform much better in learning new related targets according to this cost. Specifically, if we face a stream of $m$ {\em adversarially chosen} related learning tasks over the same set of $N$ features, each with about $\samplesize$ training examples, we will make $\bigo{\samplesize m N}$ feature evaluations if we learn each task from scratch individually. Our goal will be to leverage task relatedness to learn very few tasks from scratch and learn the rest in a feature-efficient manner, making as few as $\bigo{\samplesize(m+N)}$ feature evaluations in total. 

We study two structurally different classes of 
target functions. 
In Section~\ref{sec:dt} we focus on decision trees (and lists) which are a widely used class of target functions  \citep{top10dt,RM08,Quinlan86,cart84} popular because of their naturally interpretable structure -- to make a prediction one has to simply make a sequence of feature evaluations -- and their usefulness in the context of prediction in costly feature spaces. In Section~\ref{sec:monomials} we analyze monomial and polynomial functions, an expressive family that can approximate many realistic functions (e.g., Lipschitz functions \citep{andoni}) and is relevant in common machine learning techniques like polynomial regression, curve fitting and basis expansion \citep{hastie}. Our study of polynomials also demonstrates how feature-efficient learning is possible even when the function class is not intrinsically feature-efficient for prediction. The non-linear structure of both of these function classes  poses interesting technical challenges in modeling their relations and proposing feature-efficient solution strategies. 
 Indeed our algorithms will use their learned information to determine an adaptive feature-querying strategy that 
significantly minimizes feature evaluations. 

In Section~\ref{sec:dt}, we present our results for decision trees and lists. First, we describe intuitive relations among our targets in terms of a small {\em unknown} set of $K$ ``metafeatures'' or parts of functions common to all targets (think of $K$ much less than $N$). More specifically, these metafeatures are subtrees that can be combined sequentially to represent the target tree. We then present our feature-efficient lifelong learning protocol which involves addressing two key challenges. First, we need a computationally-efficient strategy that can recover useful metafeatures from previously learned targets (Algorithm~\ref{alg:id-dt}). Interestingly, we show that the learned metafeatures can be useful even if they do not exactly match the unknown $K$ metafeatures, so long as they ``contain'' them in an appropriate sense. Second, we need a feature-efficient strategy that can learn new target functions using these learned metafeatures (Algorithm~\ref{alg:lfd-dt}).  Making use of these two powerful routines, we present a lifelong learning protocol that learns only at most $K$ out of $m$ targets from scratch and for the remaining targets examines only $Kd$ features per example (where $d$ is the depth of the targets), thus making $\bigo{S(NK + mKd)}$ feature evaluations in total (Theorem~\ref{thm:dt}).

In Section~\ref{sec:monomials}, we study monomials and polynomials which are similarly related through $K$ unknown metafeatures. We adopt a natural model where the metafeatures are monomials themselves, so that the monomial targets are simply products of metafeatures. In the case of polynomials, this defines a two-level relation, where each polynomial is a sum of products of  metafeatures.  For polynomials, we present an algorithm that learns only $K$ of $m$ targets from scratch and on the remaining targets, evaluate s$\bigo{K+d}$ features per example (where $d$ is the degree of the target), thus making only $\bigo{\samplesize(KN + m(K+d))}$ feature evaluations over all tasks. More interestingly, in the case of large-degree monomials, our algorithm may need fewer feature evaluations per example ($K$) to learn the monomial than that needed ($d$) to evaluate the monomial on an input point. 

Next in Section~\ref{sec:agnostic}, we consider a relaxation of the original model, more specifically, an agnostic case where the learner faces $m+r$ targets, $r$ of which are ``bad'' targets adversarially chosen to be unrelated to the other $m$ interrelated ``good'' targets. As a natural goal, we want the learner to minimize the feature evaluations made on the training data of the $m$ good targets. We show that when $r$ is not too large,
the above lifelong learners can be easily made to work as well as they would when $r=0$. To address greater values of $r$, we first highlight a trade-off between allowing the learner to learn more targets from scratch and learning the remaining targets with more feature evaluations. We then present a technique that strikes the right balance between the two. 
 
Finally, in Section~\ref{sec:lb} we present lower bounds on the performance of a lifelong learner for all values of $r$, including $r=0$ by designing randomized adversaries. Ignoring the sample size $\samplesize$ and other problem-specific parameters, for small $r$ we prove a lower bound of $\bigomega{KN+mK}$ feature evaluations which proves that our above approaches are in fact tight. For sufficiently large $r$, we prove a bound of $\bigomega{mN}$, thereby demarcating a realm of $r$ where lifelong learning is simply futile. 

We present a summary of our results in Tables~\ref{table:upper} and \ref{table:lower} below.

\begin{table}[H]
\begin{center}
\begin{tabular}{|p{8cm}|c|}
\hline 
\textbf{Problem} &  \textbf{Total number of feature evaluations} \\ \hline
Decision trees of depth $d$ & $ \bigo{\samplesize(KN + mKd)}$ \\ \hline
Decision trees of depth $d$ in semi-adversarial model & $ \bigo{\samplesize(\frac{\log K}{p_{\min}}  N + m(K+d))}$ \\ \hline
Decision trees of depth $d$ with anchor variables & $ \bigo{\samplesize(KN + m(K+d))}$ \\ \hline
Decision lists of depth $d$ & $ \bigo{\samplesize(K^2N + m(K^2+d))}$  \\ \hline
Monomials of degree $d$ & $ \logo{KN + m(K+d)}$ \\ \hline
Polynomials of degree $d$, sparsity $t$ & $ \bigo{\samplesize(KN + m(K+td))}$ \\ \hline
\end{tabular} 
\caption{Performance of our approaches}
\label{table:upper}
\end{center}
\end{table}

\begin{table}[H]
\begin{center}
 \begin{tabular}{|c|c|c|c|}
 \hline
 Range of $r$ & Performance of algorithm & Lower bound  \\ \hline 
 $0 \leq r\leq r_{\min}$ & $O(S (NK + Km))$ & $\bigomega{NK + Km}$ \\ \hline 
  $r\in [r_{\min}, r_{\max}]$ & $O(S (\underbrace{\sqrt{rKNm}}_{\leq \sqrt{\frac{r_{\max}}{r}}\max\left( \frac{r}{N-K}, 1 \right)KN } + Km))$ & $\bigomega{\max\left( \frac{r}{N-K}, 1 \right) KN + Km}$ \\ \hline
$r \geq r_{\max}$ & $\bigo{S mN}$ & $\bigomega{mN}$ \\
\hline 
 \end{tabular}
\caption{Performance of our algorithms for different values vs  the lower bounds for different values of $r$. Here, we will define $r_{\min}=\max \left( \frac{ m}{N}, \frac{K N}{m} , K \right)$ and $r_{\max}  = \min \left( \frac{mN}{K} , \frac{(N-K)^2 m}{K N}\right)$}
\label{table:lower}
\end{center}
\end{table}

\subsection{Related Work}

Related work in multi-task or transfer learning \citep{halnips12,pm:13,PY10} considers the case where tasks are drawn from an easily learnable distribution or are presented to the learner all at once. The theoretical results in that setting are sample complexity results that guarantee low  error averaged over all tasks \citep{Baxter97,baxter2}. On the other hand, research in lifelong learning has been mostly empirical  \citep{ll-robot,ll-a-star,ll-recom,TP97}. 
There has been a small amount of recent theoretical work \citep{eff-rep,wmv}.
\cite{eff-rep} consider fairly simple targets and commonalities such as linear separators that lie in a common low-dimensional subspace. \cite{wmv} consider a setting where except for a small subset of target functions, each target can be written as a weighted majority vote over the previous ones.  \cite{eff-rep} also consider conjunctions that share a set of conjunctive metafeatures, but assume that the metafeatures contain a unique ``anchor variable''.  Though decision trees have a more elaborate combinatorial structure than conjunctions, in this work we are able to achieve strong guarantees for lifelong learning of decision trees (and other classes) without making such assumptions about the metafeatures. We also note that one of main technical challenges addressed by \cite{eff-rep} is that of controlling error propagation during lifelong learning.  However, for the problems considered in this paper, it is possible to learn targets exactly from scratch, so we do not have to deal with error propagation.

Feature-efficiency has been considered in the single-task setting, often under the name of budgeted learning \citep{LizotteMG03,KapoorG05,BLworkshop10},  where one has to learn an accurate model subject to a limit on feature evaluations, somewhat like bandit algorithms. 
\cite{fs,mtl} consider a related problem  in a multi-task setting with all tasks present up-front, where the learner has free access to all  features but uses commonalities between targets to identify useful common features in order to be sample-efficient.

\section{Preliminaries}
\label{sec:model}

In this section, we define our notations (later summarized in Table~\ref{tab:notations}) and present a high level protocol which will provide a framework for presenting our algorithms in the later sections. We consider a setting in which the learner faces a sequence of $m$ related target functions $g^{(j)}$ over the same set of $\N$ 
features/variables (where both $m$ and $N$ are very large).  
The target functions arrive one after the other, each with its own set of training data $\sample^{(j)}$ with at most $\samplesize$ examples to learn from.  Also, feature evaluation (or equivalently, feature query or feature examination) is costly: if we view our training data for $g^{(j)}$ as an $\samplesize \times N$ matrix, we pay a cost of 1 for each cell probed in the matrix.

Our belief is that the targets are related to each other through an unknown set $\F$ of {\em metafeatures} that are parts of functions. More specifically, all targets in the series can be expressed by combining metafeatures in $\F$ using a known set of legal combination rules, such as concatenating lists or trees.  Our algorithms will learn a set of hypothesized metafeatures $\tildeF$ that allows them to learn new targets using a small number of feature evaluations except for a limited number of targets learned from scratch i.e., by examining all features on all examples. We call $\tildeF$ our {\em learned  representation}.  Note that we will refer to $\tildeF$ as just metafeatures if it is clear from context that it does not refer to the true metafeatures $\F$. 

Then, our lifelong protocol is as follows. We make use of two basic subroutines: a \LFD{} routine that uses $\tildeF$ to learn new related targets, and an \ID{} routine that improves our representation $\tildeF$ whenever the first subroutine fails.  We begin with an empty $\tildeF$. On task $j$, using $\tildeF$ and $\sample^{(j)}$, we attempt to cheaply learn target $g^{(j)}$ with \LFD{}. If \LFD{} fails to learn the target, we evaluate all features in $\sample^{(j)}$ and learn $g^{(j)}$ from scratch. Then, we provide $\tildeF$ and $g^{(j)}$ as input to \ID{} to update $\tildeF$.  For clarity, we present this generic approach, which we will call as (\LFD{}, \ID{})-protocol, in Algorithm \ref{alg:generic}. 
In the following sections,  we will present concrete approaches for these subroutines, specific to each class of targets.  We will then analyze the performance of the protocol in terms of the total number of feature evaluations (across all samples over all the tasks) given an adversarial stream of tasks.

Our setting can be viewed as analogous to that of dictionary learning  \citep{LS00,EA06,AroraGM14} in which the goal is to find a small set of vectors that can express a given set of vectors via sparse linear combinations.  Here, we will be interested in broader classes of objects and richer types of combination rules.

\begin{table}[H]
\centering
\begin{tabular}{|c|c|}
\hline
\textbf{Notation} & \textbf{Meaning} \\ \hline
$m$ & No. of targets in sequence \\ \hline
$N$ & No. of features/variables \\ \hline
$\F$ & True metafeature set/representation \\ \hline
$\tildeF$ & Learned representation \\ \hline
$K$ & No. of true metafeatures \\ \hline 
$\samplesize$ & No. of samples for each task\\ \hline
$\sample^{(j)}$ & Training data for task $j$ \\ \hline 
\end{tabular}
\caption{Important notations}
\label{tab:notations}
\end{table}

\begin{algorithm}[H]
\caption{$(\mathcal{A}_{\sf UR}, \mathcal{A}_{\sf IR})$-protocol for lifelong learning}
\label{alg:generic}
\begin{algorithmic}[1]
\STATE \textbf{Input}: A sequence of $m$ training sets $\sample^{(1)}, \sample^{(2)}, \hdots, $
corresponding to targets $g^{(1)}, g^{(2)}, \hdots $, each of which can be represented using an unknown set $\F$ of $K$ metafeatures.
\STATE Let $\tildeF$ be our current learned representation. Initialize $\tildeF$ to be empty.
\FOR{$j=1,2, \hdots m$}
	\STATE Using $\tildeF$ and $S^{(j)}$, attempt to cheaply learn $g^{(j)}$ with 
\LFD{} algorithm $\mathcal{A}_{\sf UR}$.
	\STATE If learning was not successful, extract all features in $\sample^{(j)}$ and learn $g^{(j)}$ from scratch; provide $\tildeF$ and  $g^{(j)}$ as input to \ID{}  algorithm $\mathcal{A}_{\sf IR}$ to update $\tildeF$.
\ENDFOR
\end{algorithmic}
\end{algorithm}

\section{Decision Trees}
\label{sec:dt}
We first formally define decision tree metafeatures and describe our learning model. Based on this we describe our problem concretely in Problem Setup~\ref{ass:dt-general}.  To simplify our discussion, we consider decision trees over Boolean features, though we later present a simple extension to real values. Formally, in a decision tree $g: \{0,1\}^{\N} \to \{+,-\}$, each internal node corresponds to a split over one of $N$ variables and each leaf node corresponds to one of the two labels $\{+,-\}$. No internal node and its ancestor split on the same variable. 

Now, we define a metafeature to be an \textit{incomplete} decision tree, a tree where any of the leaf nodes can be empty i.e., the labels of some leaf nodes are left unspecified. Then,  there are two natural ways of combining metafeatures to form a (complete) decision tree. Let $\nodeA$ be one of the empty leaf nodes of a metafeature $f$. We may combine $f$ with another incomplete tree $f'$ using an $\affix(f,\nodeA,f')$ operation which simply affixes the root node of $f'$ at $\nodeA$ (as illustrated in Figure~\ref{fig:affix}). As a result, $\nodeA$ now becomes an internal node of a larger incomplete tree. The variable at $\nodeA$ and its descendants correspond to the variables in $f'$. Alternatively, we may perform a $\lab(f,\nodeA,l)$ operation which assigns a label $l \in \{+,- \}$ to the empty node $\nodeA$ in $f$.
We can then pick an arbitrary element $f \in \F$, apply an arbitrary sequence of $\lab$ and $\affix$ operations (affixing only trees from $\F$) and eventually grow $f$ into a decision tree. In this manner, we define below what it means to be able to represent a decision tree using a set of metafeatures $\F$.  Both $\lab$ and $\affix$ are described for completeness in Appendix~\ref{app:dt}.

\begin{definition}\label{def:dict}
Let $\F  = \{f_1, f_2, \ldots \}$ where each metafeature $f_i$ is an incomplete decision tree.  We define $\dtspace(\F)$ to be the set of all 
decision trees that can be grown by using the elements of $\F$ and sequentially applying {\lab} and {\affix} operations on them. 
We say that a decision tree $g$ can be expressed using $\F$ if $g \in \dtspace(\F)$.
\end{definition}
\begin{figure}[H]
\centering
\includegraphics[scale=0.55]{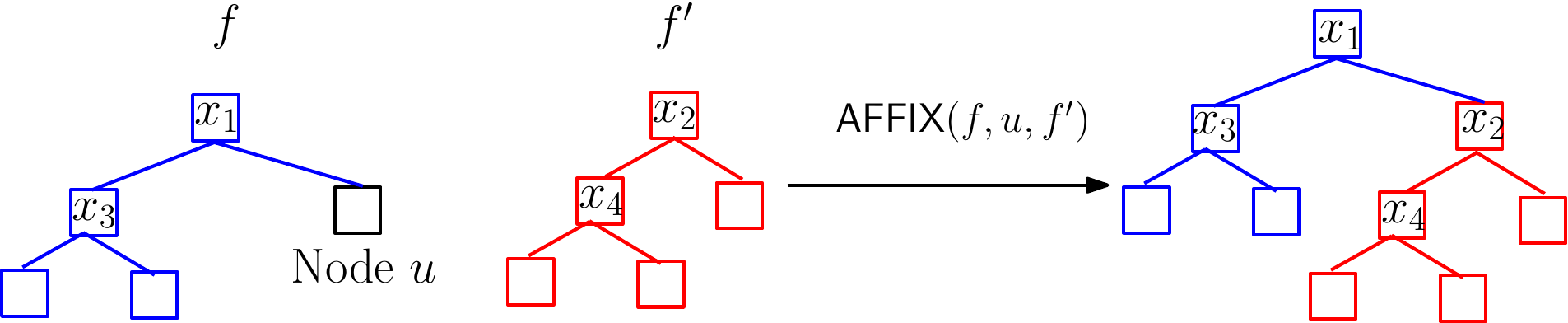}
\caption{Illustration of \affix}
\label{fig:affix}
\end{figure}

A modeling challenge here is that there are no known polynomial-time algorithms to learn decision trees, even ignoring the issue of costly features and even for trees of depth $d = O(\log N)$.    On the other hand, there are popular top-down tree-learning algorithms (like ID3 and C4.5) that work well empirically \citep{RM08,Quinlan86,cart84}.  Therefore, we will assume that we are given such an algorithm that indeed correctly produces $g^{(j)}$ from $\sample^{(j)}$ if we are willing to evaluate all the features in all the examples.  More specifically, these methods are defined by a ``gain function'' $\gain(\sample,i)$ that given a set of labeled examples $\sample$ and a feature $i$, returns a score indicating the desirability of splitting the set $\sample$ using feature $i$.  
For instance, ID3 uses {\em information gain} as its splitting criterion,\footnote{If feature $i$ splits data set $S$ into two sets $L$ and $R$, its information gain of feature $i$ is then $Ent(S) - [\frac{|L|}{|S|}Ent(L) + \frac{|R|}{|S|}Ent(R)]$.  Here, $Ent$ is the binary entropy of the label proportions in the given set; that is, if a $p$ fraction of the labels in $S'$ are positive, then $Ent(S')=p\log_2(1/p)+(1-p)\log_2(1/(1-p))$.} and an elegant theoretical analysis of the use of different such gain functions is given in \cite{KM}.   The algorithm begins at the root, chooses the variable of highest gain to put there, and then recurses on the nodes on each side.  This process continues until all leaves are pure (all positive or all negative). 
\begin{assumption}\label{ass:dt-general} The 
decision tree targets $g^{(1)}, \hdots g^{(m)}$ and data sets $\sample^{(1)}, \hdots, \sample^{(m)}$, each of at most $\samplesize$ examples, satisfy the following conditions:

\begin{enumerate}
\item There exists an unknown set
$\F$ of $K$ metafeatures ($K \ll N$) such that $\forall j$, $g^{(j)} \in \dtspace(\F)$.
\item The target $g^{(j)}$ can be learned by running top-down decision-tree learning on $\sample^{(j)}$ using a given $\gain$ function. In other words, 
{\em always choosing to recursively split on the variable of highest $\gain$ based on $\sample^{(j)}$ produces $g^{(j)}$}.
\item We are given $s, d$ ($d \ll N$) such that $g^{(j)}$ has  at most $s$ internal nodes and depth at most $d$. Then, $\samplesize=\bigo{s \log N}$ examples are sufficient to guarantee that $g^{(j)}$ has high accuracy over the underlying distribution over data. 
\end{enumerate}
\end{assumption}
A straightforward lifelong learning approach would be as follows: \ID{} simply adds to $\tildeF$ features seen in tasks learned from scratch as metafeatures, and \LFD{} examines only those (meta)features in $\tildeF$ when learning a target. Since each metafeature in $\F$ can have at most $s$ distinct features, this learns at most $K$ targets from scratch and evaluates only $Ks$ features per example on the rest i.e., $\bigo{ \samplesize (K N + m Ks) }$ feature queries overall (see Appendix~\ref{app:dt} for details). However, when $s=\bigomega{N}$ this is no better than learning all tasks individually from scratch. In this section, we will present a significantly better protocol:

\begin{restatable}{theorem}{dt}
\label{thm:dt}
The  (\LFD{} Algorithm~\ref{alg:lfd-dt}, \ID{} Algorithm~\ref{alg:id-dt})-protocol for decision trees makes $\bigo{\samplesize(K N + m  Kd)}$ feature evaluations overall and runs in time $poly(m,N,K,\samplesize, s,d)$.\footnote{
It may seem that this result can be equivalently stated in terms of the average number of features examined per example i.e., $\bigo{K N + m  Kd}$. However, such a performance metric is different from what we defined. Under certain independence conditions it may be possible to learn a target simply by drawing a large number of examples and examining only a single feature per example while still making many feature evaluations in total.} 
 \end{restatable}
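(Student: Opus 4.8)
The plan is to charge the total feature-evaluation cost to two disjoint sources and bound each separately: the tasks learned \emph{from scratch} (for which \LFD{} fails and we probe all $N$ features on all $\le \samplesize$ examples, at cost $\bigo{\samplesize N}$ each), and the $m$ invocations of \LFD{} itself (exactly one per task, each of which I will argue costs only $\bigo{\samplesize K d}$). Writing $k_0$ for the number of tasks learned from scratch, the total is then $\bigo{k_0 \samplesize N + m \samplesize K d}$, so it suffices to prove (i) that \LFD{} probes at most $\bigo{Kd}$ features per example, and (ii) that $k_0 \le K$. Summing these yields $\bigo{\samplesize(KN + mKd)}$.

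For (i), the key is that \LFD{} (Algorithm~\ref{alg:lfd-dt}) builds its hypothesis tree top-down but restricts attention at every node to the metafeatures currently in $\tildeF$: at an empty leaf it need only decide which metafeature of $\tildeF$ to $\affix$ there, so the candidate split variables are just the $\le K$ root variables of the metafeatures in $\tildeF$. Crucially, for the purpose of routing a \emph{fixed} example the tree is traversed along a single root-to-leaf path of length at most $d$. I would therefore argue that each example incurs $\bigo{K}$ probes at each of the $\le d$ affixing decisions along its path---examining the $\le K$ candidate root variables to select the correct metafeature via $\gain$ and then following the example down the chosen metafeature---for a total of $\bigo{Kd}$ probes per example. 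These same per-example path features also suffice to verify the hypothesis: since labels are free and, by Problem Setup~\ref{ass:dt-general}, the target can be learned exactly, \LFD{} simply checks each routed prediction against the given label and declares failure iff some example is misclassified.

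For (ii), I would run a monotone potential argument over the true metafeature set $\F$. The right invariant is that once a metafeature of $\F$ is ``contained'' by $\tildeF$ (in a tree-containment sense I would formalize via $\dtsubseteq$), it is never lost and is thereafter usable by \LFD{}; the subtle point flagged earlier is that a learned element of $\tildeF$ need not equal any $f \in \F$ exactly, but need only contain it as a prefix. I would then show that whenever \LFD{} fails on a task $g^{(j)} \in \dtspace(\F)$, the ensuing from-scratch learn together with \ID{} (Algorithm~\ref{alg:id-dt}) strictly enlarges the set of $f \in \F$ contained by $\tildeF$: failure certifies that $g^{(j)}$ exercises some true metafeature not yet contained, and \ID{} extracts exactly such a piece from the freshly learned $g^{(j)}$. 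Since $|\F| = K$ and this contained subset only grows, there can be at most $K$ failures, giving $k_0 \le K$. I expect this containment/monotonicity argument to be the main obstacle: one must pin down a notion of containment strong enough that contained metafeatures \emph{guarantee} \LFD{} succeeds, yet weak enough that each failure is \emph{forced} to discover a genuinely new element of $\F$, so that distinct failures cannot be double-counted against the same metafeature.

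Finally, polynomial running time follows by inspection: there are $m$ tasks, each running \LFD{} once and \ID{} at most once; \LFD{} performs top-down construction with $\gain$ evaluations over $\le \samplesize$ examples and $\le K$ candidate metafeatures per node of a depth-$d$, $s$-node tree, while \ID{} manipulates trees of size $\bigo{s}$. Each such step is $poly(m,N,K,\samplesize,s,d)$ and there are polynomially many of them, yielding the claimed runtime.
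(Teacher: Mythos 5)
Your high-level decomposition (cost of from-scratch tasks plus cost of \LFD{} invocations, with $k_0 \le K$ and $\bigo{Kd}$ probes per example) is exactly the paper's, and your part (ii) sketch of a monotone prefix-containment invariant over $\F$ matches the paper's induction. But part (i) analyzes a different, weaker algorithm than \LFD{} Algorithm~\ref{alg:lfd-dt}, and this creates a genuine gap that also breaks part (ii). In Algorithm~\ref{alg:lfd-dt} the candidate set $\I$ at a node $\nodeA$ is \emph{not} just the root variables of the metafeatures in $\tildeF$: it contains every variable induced at $\nodeA$ by superimposing some $\tilde{f}\in\tildeF$ at $\nodeA$ \emph{or at any ancestor of} $\nodeA$ (subject to no conflict). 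This extra machinery is forced by precisely the subtlety you flag in part (ii): the invariant that makes the induction work is that each learned $f\in\F$ is a \emph{prefix} of some element of $\tildeF$, so \LFD{} must succeed on every target in $\dtspace(\Pref(\tildeF))$ (Lemma~\ref{lem:lfd-correctness}), not merely $\dtspace(\tildeF)$. Your commit-and-follow learner---pick one of the root variables, then walk down that whole metafeature---fails whenever the target uses only a proper prefix of some $\tilde{f}\in\tildeF$ and then switches to a different metafeature, which is the typical case since \ID{} stores subtrees of entire previously-learned targets, not the true metafeatures. With your simplified \LFD{}, a failure no longer certifies that the target contains an element of $\F$ not yet represented, so the charging argument behind $k_0 \le K$ collapses: the same true metafeature can cause unboundedly many failures.

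There is also a counting error that matters even if one grants your mechanism. \ID{} Algorithm~\ref{alg:id-dt} adds the subtree rooted at \emph{every} node of a failed path, i.e., up to $d$ new metafeatures per failure, so the paper only guarantees $|\tildeF| \le Kd$, not $|\tildeF| \le K$ as your part (i) assumes. With the correct algorithm and the correct size of $\tildeF$, the naive per-example count is $\bigo{|\tildeF| d}$ or worse (each node on a path can query an induced variable for each $\tilde{f}$ and each ancestor), i.e., $\bigo{Kd^2}$ per example---which overshoots the theorem's budget. The paper closes exactly this gap with the amortized analysis of Lemma~\ref{lem:lfd-dt-feature-efficiency}: queries are split into type A (superimposition rooted at $\nodeA$ itself, at most $|\tildeF|$ distinct root variables per example) and type B (rooted at an ancestor), and each node at which $k_\nodeA>1$ type-B queries are made is charged to $k_\nodeA - 1$ metafeatures that are permanently eliminated, via $\conflict$, from producing further queries on that path. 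This yields $\bigo{|\tildeF|+d} = \bigo{Kd}$ probes per example, which is the technical core of the per-example bound and is entirely absent from your proposal. To repair the proof you would need to (a) analyze the actual superimposition-based \LFD{}, so that Lemma~\ref{lem:lfd-correctness} applies and failures genuinely force new elements of $\F$ into $\Pref(\tildeF)$, and (b) supply the elimination/charging argument to get the per-example cost down from $\bigo{|\tildeF|d}$ to $\bigo{|\tildeF|+d}$.
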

This is a significant improvement especially in the case of shallow bushy trees for which $d\ll s $ e.g., when $d = \bigo{\log N}$ but $s = \bigomega{N}$. To achieve this improvement, we need a computationally efficient approach that extracts bigger decision tree substructures from previous tasks and also knows how to learn future tasks using such a representation. We first address the latter problem: we present an \LFD{} routine, Algorithm~\ref{alg:lfd-dt}, that takes as input a set of hypothesized metafeatures $\tildeF$ and a training dataset $\sample$ consistent with an unknown tree $g$ and either outputs a consistent tree $\tilde{g}$ or halts with failure. To appreciate its guarantees,
define $\Pref(f)$ to denote the set of all ``prefix'' trees (prunings) of some incomplete tree $f$.  For any set of hypothesized metafeatures $\tildeF$, let $\Pref(\tildeF) =\{\Pref(\tilde{f}) \; | \tilde{f}  \in \tildeF\}$. We show that Algorithm~\ref{alg:lfd-dt}, given $\tildeF$, can effectively learn a target that can be represented  using not only $\tildeF$, but also the exponentially larger metafeature set $\Pref(\tildeF)$. That is, our \LFD{} algorithm can effectively learn trees from a much larger space $\dtspace(\Pref(\tildeF))$ compared to just $\dtspace(\tildeF)$. \\

We now describe Algorithm~\ref{alg:lfd-dt}. Though we limit our discussion to Boolean feature values for simplicity, we later extend it to real values.  In Algorithm~\ref{alg:lfd-dt}, we basically grow an incomplete decision tree $\tilde{g}$ one node at a time, by picking one of its empty leaf nodes $\nodeA$, and either assigning a label to $\nodeA$ or splitting $\nodeA$ on a particular feature. Before doing so, we first make sure that we have not failed already (Step~\ref{step:failure}). More specifically, if $\nodeA$ is at a depth greater than $d$ or if $\tilde{g}$ already has more than $s$ nodes, we halt with failure because we were not able to find a small tree consistent with the data.  If not, we proceed to examine samples from the training set that have reached $\nodeA$, which we will denote by $\sample_{\nodeA}$. If all $x \in \sample_{\nodeA}$ have the same label, we make $\nodeA$ a leaf with that label and proceed to other nodes in $\tilde{g}$. 
 
Otherwise, we evaluate a small set of features on $\sample_{\nodeA}$ to compute their $\gain$ and pick the best of those features to be the variable at $\nodeA$ (denoted by $\Var(\nodeA)$). The way we pick this set of features at $\nodeA$, which we will call $\I$, is based on the following intuition. Assume we have grown $\tilde{g}$ identically to $g$ so far and let $\nodeA'$ be the node in $g$ that corresponds to $\nodeA$. Then the correct variable to be assigned at $\nodeA$ is $\Var(\nodeA')$ which is in fact the gain maximizing variable on $\sample_{\nodeA}$ (as assumed in the second point of Problem Setup~\ref{ass:dt-general}).  Thus, our goal is to ensure $\Var(\nodeA')\in \I$.  

If indeed $g \in \dtspace(\Pref(\tildeF))$, this variable must in fact correspond to the variable in some node in some $\tilde{f} \in \tildeF$. In other words, we should be able to ``superimpose'' some $\tilde{f}$ over $\tilde{g}$ with the root of $\tilde{f}$ at either $\nodeA$ or one of its ancestors such that the variable in $\tilde{f}$ that has been superimposed over $\nodeA$ is in fact the correct variable for $\nodeA$.  Additionally,
the variables in $\tilde{f}$ should not conflict with those that have already been assigned to the ancestors of $\nodeA$ in $\tilde{g}$.  Since we do not know which $\tilde{f}$ and which superimposition of $\tilde{f}$ induces the correct variable at $\nodeA$, we add to $\I$ the variable induced at $\nodeA$ by every possible superimposition: we pick every $\tilde{f} \in \tildeF$ and every node $\nodeC$ that is either an ancestor of $\nodeA$ or $\nodeA$ itself, and then superimpose $\tilde{f}$ over $\tilde{g}$ with its root at $\nodeC$. We add to $\I$ the variable thus induced at $\nodeA$, provided the variables in $\tilde{f}$ do not conflict with those in the ancestors of $\nodeA$. In Algorithm~\ref{alg:lfd-dt}, we use helper routines, $\induce(\tilde{g}, \nodeC, \nodeA, \tilde{f})$ which outputs the induced variable and $\conflict(\tilde{g}, \nodeC, \nodeA, \tilde{f})$ which outputs false if there is no conflict (both these simple subroutines are described for completeness in Appendix~\ref{app:dt} and illustrated in Figure~\ref{fig:conflict-induce}). Finally, since no variable should repeat along any path down the root, we remove from $\I$ any variable already assigned to an ancestor of $\nodeA$. Then, we assign the gain maximizing feature from $\I$ to $\nodeA$. 

Observe that, at $\nodeA$, in total over all $\tilde{f}$ we may examine $O(|\tildeF|d)$ features on $\sample_{\nodeA}$. Therefore, for a particular sample, considering all nodes along a path from the root, we may examine $O(|\tildeF|d^2)$ features. However, with a more rigorous analysis we prove a tighter bound:

\begin{restatable}{theorem}{lfddt}
\label{thm:lfd-dt}
 \LFD{} Algorithm~\ref{alg:lfd-dt} has the property that given $\tildeF$ and data $\sample$,  a) if the underlying target $g \in \dtspace(\Pref(\tildeF))$, the algorithm outputs $g$ and b) conversely, if the algorithm outputs $\tilde{g}$ without halting on failure, then $\tilde{g}$ has depth at most $d$,  size at most $s$ and is consistent with $\sample$, 
c) the algorithm evaluates $O(|\tildeF|+d)$ features per example. 
\end{restatable}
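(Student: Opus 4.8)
The plan is to prove the three claims separately: (a) and (b) follow fairly directly from the stopping rules of Algorithm~\ref{alg:lfd-dt} together with Problem Setup~\ref{ass:dt-general}, while the tight feature count in (c) requires a genuinely new global argument. For (a) I would argue by induction on the top-down construction that $\tilde{g}$ is grown identically to $g$. The inductive hypothesis is that the partially built $\tilde{g}$ agrees with $g$ on every node processed so far; consider a frontier node $\nodeA$ and the corresponding node $\nodeA'$ of $g$. If $\sample_{\nodeA}$ is pure then, since by the second item of Problem Setup~\ref{ass:dt-general} running top-down learning on $\sample$ reproduces $g$ and such learning makes a node a leaf exactly when its data is pure, $\nodeA'$ must be a leaf of $g$ with the same label, so the algorithm matches $g$. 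If $\sample_{\nodeA}$ is impure, the algorithm assigns $\arg\max_{i\in\I}\gain(\sample_{\nodeA},i)$, so it suffices to show the correct variable $\Var(\nodeA')$ lies in $\I$ (being the global gain maximizer by Problem Setup~\ref{ass:dt-general}, it is then also the maximizer over $\I$). Here I would use $g\in\dtspace(\Pref(\tildeF))$: the node $\nodeA'$ lies in a block that is a pruning of some $\tilde{f}\in\tildeF$ rooted at some ancestor-or-self $\nodeC'$ of $\nodeA'$, so superimposing this $\tilde{f}$ at the corresponding node $\nodeC$ of $\tilde{g}$ reproduces exactly the variables of $g$ between $\nodeC'$ and $\nodeA'$; hence $\conflict$ returns false, $\induce$ returns $\Var(\nodeA')$, and the no-repeat filter cannot delete $\Var(\nodeA')$ since $g$ is a valid tree. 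As $g$ has depth $\le d$ and $\le s$ internal nodes, the test in Step~\ref{step:failure} never fires, so the algorithm outputs $g$.

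\textbf{Claim (b)} is immediate: the depth and size bounds are enforced syntactically by Step~\ref{step:failure}, so any non-failing output respects them; and since a node is labeled only when $\sample_{\nodeA}$ is pure and is otherwise split, every example reaches a leaf carrying its own label, giving consistency with $\sample$.

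\textbf{Claim (c) is the heart of the theorem and the main obstacle}, since the naive bound is $O(|\tildeF|d^2)$ per example and I must shave it to $O(|\tildeF|+d)$. Fix an example $x$ and let $W$ be the sequence of variables $\Var(\cdot)$ along $x$'s root-to-leaf path in $\tilde{g}$; by the no-repeat filter $W$ has pairwise distinct entries and length $\le d$. Assuming feature values are memoized per example, the cost on $x$ equals the number of distinct features ever placed in some $\I$ along this path. Two structural facts drive the bound. First, because branching always follows $x$'s own bits, every feature induced from a fixed $\tilde{f}$ (under any superimposition, at any node) is a variable on the single root-to-leaf path that $x$ traces inside $\tilde{f}$; write its variable sequence as $t_0 t_1 \cdots$. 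Second, the depth-$\delta$ variable $t_\delta$ is actually induced (its superimposition is non-conflicting) only when the prefix $t_0\cdots t_{\delta-1}$ matches a contiguous block of $W$; since $W$ has distinct entries, the root variable $t_0$ matches $W$ in at most one position, so this alignment is \emph{unique}, and all matched-prefix variables $t_1,\dots,t_{\delta-1}$ coincide with entries of $W$. Consequently each $\tilde{f}$ contributes at most one feature outside $W$: either $t_0$ itself (when $t_0\notin W$, in which case only self-superimposition fires), or the single variable at which $t_0 t_1\cdots$ first diverges from $W$. Charging one feature to each $\tilde{f}$ and all matched-prefix variables to the $\le d$ entries of $W$, the total number of distinct features evaluated on $x$ is $O(|\tildeF|+d)$.

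\textbf{Where I expect to spend the most care.} The two delicate sub-steps are making precise the identification ``$\induce$ at $\nodeA$ equals walking down $x$'s path inside $\tilde{f}$'' — so that all inductions of a fixed $\tilde{f}$ provably live on one path of $\tilde{f}$ — and the uniqueness-of-alignment argument, namely that a non-conflicting superimposition of a fixed $\tilde{f}$ reaching a node is forced by the distinctness of the entries of $W$ (equivalently, two superimpositions of the same $\tilde{f}$ rooted at distinct ancestors cannot both be conflict-free at the same node, since that would repeat $\tilde{f}$'s root variable along one of its own root-paths). Once these are pinned down, the $O(|\tildeF|+d)$ bound follows from the charging argument above, and together with (a) and (b) completes the proof.
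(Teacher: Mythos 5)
Your proof is correct, and parts (a) and (b) are essentially the paper's own argument: the paper proves (a) as its Lemma~\ref{lem:lfd-correctness} by the same induction that $\tilde{g}$ stays a prefix of $g$, with the same two key observations you make — the globally gain-maximizing variable $\Var(\nodeA')$ enters $\I$ via the conflict-free superimposition of the metafeature (pruning) containing it, and the no-repeat filter of Step~\ref{step:valid} cannot delete it — while (b) is noted to follow directly from Steps~\ref{step:failure} and~\ref{step:leaf}. For part (c), however, your bookkeeping genuinely differs from the paper's Lemma~\ref{lem:lfd-dt-feature-efficiency}. The paper splits queries into type A (self-superimpositions, $\nodeC=\nodeA$, at most $|\tildeF|$ distinct root features) and type B (proper-ancestor superimpositions), proves the same uniqueness claim you prove (the root position $\nodeC_{\tilde{f}}$ of each $\tilde{f}$ on the path is unique because variables along a path are distinct), and then closes with an \emph{amortized elimination} argument: at any node where $k_{\nodeA}>1$ type-B queries occur, the single variable actually assigned creates conflicts that permanently eliminate the other $k_{\nodeA}-1$ metafeatures from querying lower on the path, so $\sum_{\nodeA}(k_{\nodeA}-1)\le|\tildeF|$ while $\sum_{\nodeA:k_{\nodeA}>1}1\le d$. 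You instead give a \emph{static characterization}: every feature a fixed $\tilde{f}$ ever induces lies on the single path the example traces inside $\tilde{f}$, conflict-freeness forces the matched prefix of that path to coincide with a contiguous block of the learned path $W$, and hence each $\tilde{f}$ contributes at most one feature outside $W$ (its first divergence point), after which you charge features to $W$ (at most $d$) or to metafeatures (at most $|\tildeF|$). Your route buys a cleaner structural statement — at most one ``new'' feature per metafeature — and avoids the paper's B(a)/B(b) case split; the paper's amortized route bounds the number of type-B \emph{query events} rather than distinct features, so it leans slightly less on the per-example memoization you explicitly assume (though both proofs implicitly deduplicate the type-A root queries). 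Both hinge on the same core fact, that distinctness of variables along the path makes each metafeature's alignment unique, so the difference lies in the counting scheme rather than the key insight; either is a valid proof of the $O(|\tildeF|+d)$ bound.
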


\begin{algorithm}[H]
\caption{\LFD{} - Learning a decision tree using metafeatures \label{alg:lfd-dt}}
\begin{algorithmic}[1]
\STATE \textbf{Input:} Metafeatures $\tildeF$, samples $\sample$ consistent with unknown $g$, depth bound $d$, size bound $s$.
\STATE Initialize the tree $\tilde{g}$ to be an empty leaf node. Let $\Z$ be the set of empty leaf nodes in $\tilde{g}$.
\WHILE{$\exists \; \nodeA \in \Z$} \label{step:choose}
	\STATE \label{step:failure} Halt with failure if a) $\nodeA$ is at depth $ > d$ or b) the size of $\tilde{g}$ is $ > s$.
	\STATE Let $\sample_{\nodeA}$ be the examples that have reached $\nodeA$.
	\IF{all $x \in \sample_\nodeA$ have the same label $l$} \label{step:leaf}
		\STATE Make $\nodeA$ a leaf with the label $l$.
	\ELSE 
		\STATE Let $\I$ be the set of features to be examined at $\nodeA$. Initialize $\I$ to be empty.
		\FOR{ each $\tilde{f} \in \tildeF$ and each node $\nodeC$ in the path starting from the root of $\tilde{g}$ to $\nodeA$}
		\STATE \label{step:superimpose} If {$\conflict(\tilde{g}, \nodeC, \nodeA, \tilde{f})$ is false}, add $\induce(\tilde{g}, \nodeC, \nodeA, \tilde{f})$ to $\I$.
		\ENDFOR
		\STATE \label{step:valid} Remove from $\I$ any variable assigned to an ancestor of $\nodeA$.
		\STATE \label{step:assign}Evaluate only the features $\I$ on $\sample_\nodeA$. Assign $\Var(\nodeA) \gets \arg\max_{i \in \I}\gain(\sample_\nodeA, i)$.  

	\ENDIF
\ENDWHILE
\STATE Output $\tilde{g}$.
\end{algorithmic}
\end{algorithm}

\begin{proof}
(a) and  (c) follow from Lemma~\ref{lem:lfd-correctness} and Lemma~\ref{lem:lfd-dt-feature-efficiency} respectively, which we prove below. (b) follows immediately from the algorithm, more specifically from Step~\ref{step:failure} and ~\ref{step:leaf}. We need this guarantee so that when the learner does not fail, its output is guaranteed to be correct.
\end{proof}

\begin{lemma}
\label{lem:lfd-correctness}
If  $g \in \dtspace(\Pref(\tildeF))$ 
, Algorithm~\ref{alg:lfd-dt} outputs $\tilde{g} = g$. 
\end{lemma}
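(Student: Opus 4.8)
The plan is to prove by induction that the tree $\tilde{g}$ grown by Algorithm~\ref{alg:lfd-dt} is built node-for-node identically to $g$. Concretely, I will process the nodes in the order the algorithm visits them and maintain the invariant that, at every point, the partial tree $\tilde{g}$ is exactly the pruning of $g$ consisting of the nodes handled so far; equivalently, each node $\nodeA$ of $\tilde{g}$ corresponds to a unique node $\nodeA'$ of $g$ reached by the same sequence of splits, and the examples reaching them coincide, $\sample_{\nodeA}=\sample_{\nodeA'}$. The base case is immediate since $\tilde{g}$ starts as a single empty leaf (the root), matching the root of $g$. For the inductive step I must show that when the algorithm handles an empty leaf $\nodeA$ it does exactly what $g$ does at $\nodeA'$: either it turns $\nodeA$ into a leaf with the right label, or it splits it on the variable $\Var(\nodeA')$.

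The leaf case is the easy one. Because $g$ is produced by top-down learning that continues until all leaves are pure (condition 2 of Problem Setup~\ref{ass:dt-general}), the node $\nodeA'$ is a leaf of $g$ if and only if all examples reaching it share a label. Since $\sample_{\nodeA}=\sample_{\nodeA'}$, the purity test in Step~\ref{step:leaf} fires exactly when $\nodeA'$ is a leaf, and it assigns the same label. I also need to rule out a spurious failure in Step~\ref{step:failure}: since $\tilde{g}$ is always a pruning of $g$ and $g$ has depth at most $d$ and at most $s$ nodes, the depth and size bounds are never violated, so the algorithm never halts with failure and runs to completion.

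The crux is the internal-node case: I must prove $\Var(\nodeA')\in\I$ after Steps~\ref{step:superimpose}--\ref{step:valid}. Here I unpack the hypothesis $g\in\dtspace(\Pref(\tildeF))$: $g$ is grown from a sequence of $\lab$ and $\affix$ operations that affix only prunings of metafeatures in $\tildeF$. Thus $\nodeA'$ lies within some affixed copy coming from a pruning $\tilde{f}'\in\Pref(\tilde{f})$ of a metafeature $\tilde{f}\in\tildeF$, whose root was affixed at an ancestor $\nodeC'$ of $\nodeA'$ (possibly $\nodeA'$ itself). By the invariant, $\nodeC'$ corresponds to a node $\nodeC$ on the path from the root of $\tilde{g}$ to $\nodeA$. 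The key observation is that although the algorithm superimposes the full metafeature $\tilde{f}$ (not the pruning $\tilde{f}'$, which it does not know), the path from $\nodeC$ to $\nodeA$ lies within $\tilde{f}'$, which is a prefix of $\tilde{f}$, so both trees carry the same variables along this path; hence superimposing $\tilde{f}$ at $\nodeC$ induces exactly $\Var(\nodeA')$ at $\nodeA$, i.e.\ $\induce(\tilde{g},\nodeC,\nodeA,\tilde{f})=\Var(\nodeA')$. For the same reason the variables of this superimposition along the ancestors of $\nodeA$ coincide with those already present in $\tilde{g}$ (which equal $g$'s), so $\conflict(\tilde{g},\nodeC,\nodeA,\tilde{f})$ is false and the variable is added to $\I$. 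Finally, because $g$ is a valid decision tree in which no variable repeats along a root-to-leaf path, $\Var(\nodeA')$ is not assigned to any ancestor of $\nodeA$, so it survives the removal in Step~\ref{step:valid}.

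It then only remains to note that $\Var(\nodeA')$ is selected in Step~\ref{step:assign}: by condition 2 of Problem Setup~\ref{ass:dt-general} it is a highest-$\gain$ variable on $\sample_{\nodeA}=\sample_{\nodeA'}$ over all features, and $\I$ is a subset of all features containing it, so (under the same tie-breaking as the assumed learner) the $\arg\max$ over $\I$ returns $\Var(\nodeA')$. Hence $\nodeA$ is split identically to $\nodeA'$, its children become empty leaves matching the children of $\nodeA'$, and the invariant is preserved. Since the invariant holds throughout and the algorithm never fails, when it terminates $\tilde{g}=g$. The main obstacle is the internal-node argument---in particular justifying that superimposing the full $\tilde{f}$ rather than the unknown pruning $\tilde{f}'$ still induces the correct variable and passes the conflict check---which is exactly where the definition of $\Pref(\tildeF)$ and the idea that the learned metafeatures need only ``contain'' the true ones is used.
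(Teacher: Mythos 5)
Your proof is correct and follows essentially the same route as the paper's: induction maintaining that $\tilde{g}$ is always a pruning of $g$ (with $\sample_{\nodeA}=\sample_{\nodeA'}$), handling the leaf and failure cases via the prefix property, and establishing $\Var(\nodeA')\in\I$ by exhibiting the superimposition of the metafeature whose affixed pruning contains $\nodeA'$. If anything, you are slightly more explicit than the paper on two points it leaves implicit---that superimposing the \emph{full} $\tilde{f}$ rather than the unknown pruning $\tilde{f}'$ still induces the right variable and passes the conflict check, and that the $\arg\max$ in Step~\ref{step:assign} requires consistent tie-breaking---but these are elaborations of the same argument, not a different one.
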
 

\begin{proof} 
We are given that $g \in \dtspace(\Pref(\tildeF))$. We will show by induction that $\tilde{g}$ is always grown correctly i.e., $\tilde{g} \in \Pref(g)$. This is trivially true at the beginning.  Consider the general case. Let $\nodeA$ be the node in $\tilde{g}$ that is chosen in Step~\ref{step:choose} to be grown. By our induction hypothesis that $\tilde{g}$ is a prefix of $g$, there exists $\nodeA'$ in $g$ that corresponds to $\nodeA$ and furthermore, $\sample_{\nodeA} = \sample_{\nodeA'}$. Now to show that $\nodeA$ will be grown identical to  $\nodeA'$, since $\tilde{g}$ is only a prefix, the size and depth constraints will be satisfied and so we are guaranteed to not halt with failure at this node. Next, if $\nodeA'$ was a leaf node, since $\sample_{\nodeA} = \sample_{\nodeA'}$, we are guaranteed to label $\nodeA$ as a leaf and assign it the correct label. 

If $\nodeA'$ is not a leaf node, let $\Var(\nodeA')$ be the variable present in $\nodeA'$ i.e.,   $\Var(\nodeA')=\arg\max_{i \in [N]}\gain(\sample_{\nodeA'}, i)$. Therefore, to show that we assign  $\Var(\nodeA')$ to $\nodeA$ in Step~\ref{step:assign}, we only need to prove that $\Var(\nodeA') \in \I$ i.e., we consider this feature for examination.  To prove this, note that in $g$, $\Var(\nodeA')$ belongs to the prefix of some metafeature $\tilde{f}^*$ from $\tildeF$ that is rooted either at  some $\nodeB'$ which is either $\nodeA'$ itself or at one of its ancestors (because $g \in \dtspace(\Pref(\tildeF))$). We can show that in Step~\ref{step:superimpose}, when $\nodeC = \nodeB$ and $\tilde{f} = \tilde{f}^*$, we end up adding $\Var(\nodeA')$ to $\I$. First, if $\nodeB$ is the corresponding node in $\tilde{g}$ we will have that $\conflict(\tilde{g}, \nodeB, \nodeA, \tilde{f}^*)$ is false. Furthermore, clearly $\induce(\tilde{g}, \nodeB, \nodeA, \tilde{f}) = \Var(\nodeA')$.  Now since $g$ has no variable repeating along any root-to-leaf path, $\Var(\nodeA')$ does not occur in any of the ancestor nodes of $\nodeA'$, and similarly in $\tilde{g}$, it does not occur in any of the ancestor nodes of $\nodeA$. Thus, the conditions in Step~\ref{step:superimpose} succeed, following which $\Var(\nodeA')$ is added to $\I$. 
\end{proof}

\begin{lemma}
\label{lem:lfd-dt-feature-efficiency}
Algorithm \ref{alg:lfd-dt} makes at most $O(|\tildeF| + d)$ feature queries per example.
\end{lemma}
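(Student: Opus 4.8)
The plan is to work example-by-example and to exploit the cost model directly: since we are charged once per probed cell of the $\samplesize \times \N$ matrix, the cost incurred for a fixed example $x$ is the number of \emph{distinct} features we ever evaluate on $x$, not the sum of $|\I|$ over the nodes on its path. So I would fix $x$, let $P = (\nodeA_0, \ldots, \nodeA_L)$ with $L \le d$ be the root-to-leaf path that $x$ follows in the tree $\tilde{g}$ finally produced, and observe that the features queried on $x$ are exactly $\bigcup_k \I_{\nodeA_k}$, the union taken over the internal nodes of $P$ (leaves are only labelled in Step~\ref{step:leaf} and cost nothing). The goal is then to bound the size of this union by $O(|\tildeF| + d)$. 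I would split every feature that enters some $\I_{\nodeA_k}$ according to the superimposition that put it there: a \emph{self} contribution comes from a superimposition rooted at the current node ($\nodeC = \nodeA_k$), and a \emph{deep} contribution comes from one rooted at a strict ancestor of $\nodeA_k$.

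The self contributions are immediate: when $\nodeC = \nodeA_k$ the routine $\induce$ returns the root variable of $\tilde f$, so over the whole path these contributions are drawn from the pool of root variables of the metafeatures in $\tildeF$, of which there are at most $|\tildeF|$ distinct ones.

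The deep contributions are where the real work is, and I would control them with two observations. First, a deep superimposition of $\tilde f$ rooted at $\nodeA_j$ (with $j<k$) survives the $\conflict$ test only if the root variable of $\tilde f$ equals the variable $\Var(\nodeA_j)$ already assigned at $\nodeA_j$; since $\tilde{g}$ is a decision tree, no variable repeats along $P$, so at most one node on $P$ carries the root variable of $\tilde f$. Hence every deep contribution of a fixed $\tilde f$ comes from a single superimposition, whose contributing nodes form one contiguous sub-path $\nodeA_j, \nodeA_{j+1}, \ldots, \nodeA_{k^\star}$ of $P$. Second, within this range every contributing node except the last induces a variable already assigned on $P$: if $\nodeA_k$ (with $k<k^\star$) is a contributing node and the range continues to $\nodeA_{k+1}$, then consistency of the superimposition at $\nodeA_{k+1}$ forces the variable $\tilde f$ induces at $\nodeA_k$ to equal $\Var(\nodeA_k)$, for otherwise the superimposition would conflict at $\nodeA_{k+1}$ and the range would have stopped at $\nodeA_k$. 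Consequently the deep contributions lie in the set of $\le L \le d$ variables assigned along $P$ together with at most one genuinely new variable per metafeature, namely the variable induced at the last node $\nodeA_{k^\star}$ of its range, giving at most $d + |\tildeF|$ distinct deep features.

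Adding the two buckets yields $|\bigcup_k \I_{\nodeA_k}| \le 2|\tildeF| + d = O(|\tildeF| + d)$, as claimed. I expect the main obstacle to be precisely the gap between the naive $O(|\tildeF|d^2)$ bound (or even the $O(|\tildeF|d)$ one obtains after noting that at any single node at most one non-self superimposition of each $\tilde f$ survives) and the target bound; the two ideas that close it are (i) charging per distinct cell, so that we bound a union rather than a sum, and (ii) the observation that inside a consistent deep range every induced variable but the last coincides with a variable already on the path, so deep superimpositions essentially ``re-derive'' features we have already paid for. A minor point to verify carefully is that $\conflict$ does compare the root variable of $\tilde f$ against the variable sitting at $\nodeC$, so that the ``at most one deep range per $\tilde f$'' step is valid; this should follow directly from the definition of $\conflict$ in Appendix~\ref{app:dt}.
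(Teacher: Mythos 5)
Your proof is correct, and its skeleton matches the paper's: the same split of queries into superimpositions rooted at the current node versus at a strict ancestor (the paper's ``type A'' / ``type B''), and the same key lemma that a deep superimposition of a fixed $\tilde f$ must be rooted at the \emph{unique} node of the path carrying $\tilde f$'s root variable (since $\conflict$ compares $\Var(\nodeC)$ against that root, and no variable repeats along a path). Where you genuinely differ is the final accounting of the deep contributions. The paper runs an amortized elimination argument over query events: at a node $\nodeA$ where $k_{\nodeA}>1$ deep queries are made, at most one induced variable gets assigned, so the remaining $k_{\nodeA}-1$ metafeatures conflict at $\nodeA$ and never induce again further down; summing gives $\sum_{\nodeA}(k_{\nodeA}-1)\le|\tildeF|$ on top of a per-node baseline of $d$. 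You take the contrapositive view and bound the \emph{set} of distinct features instead: a deep superimposition that survives past $\nodeA_k$ must have induced exactly $\Var(\nodeA_k)$ there, so every deep contribution except the terminal one of each metafeature's range is a variable already assigned on the path, yielding the containment of the deep features in the union of the $\le d$ path variables and at most one terminal variable per metafeature. The two arguments are dual (``not assigned $\Rightarrow$ eliminated below'' versus ``survives below $\Rightarrow$ was the assigned variable'') and give the same $O(|\tildeF|+d)$ bound; yours is slightly cleaner in that it avoids the case split on $k_{\nodeA}=1$ versus $k_{\nodeA}>1$ and directly bounds the number of distinct probed cells, which is exactly the paper's cost measure, whereas the paper bounds the (possibly larger) count of induce-events. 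Your flagged worry checks out: in the appendix definition of $\conflict$, the node $\nodeC$ itself lies in the compared set $\mathcal{V}$ and is mapped to the root of $\tilde f$, so the root-matching requirement you rely on does hold.
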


\begin{proof} 
First of all note that each example corresponds to a particular path in $\tilde{g}$. Thus, the features examined on that example as $\tilde{g}$ was grown, correspond to the different features computed from $\induce(\tilde{g}, \nodeC, \nodeA, \tilde{f})$ for different nodes $\nodeB$ and $\nodeA$ on that path.
These feature queries can be classified into two types depending on whether A) $\nodeC = \nodeA$ or  B) $\nodeC$ is an ancestor of $\nodeA$. For type A, since $\nodeC = \nodeA$, $\induce(\tilde{g}, \nodeC, \nodeA, \tilde{f})$ can only be one of the fixed set of features that occur at the root of metafeatures in $\tildeF$. In total this may account for at most $|\tildeF|$ feature examinations. 

Now consider the type B features queries corresponding to $\nodeC \neq \nodeA$. Each feature examined in this case corresponds to a 3-tuple $(\nodeC, \nodeA, \tilde{f})$ where $\nodeC$ is an ancestor of $\nodeA$. We claim that for a given $\tilde{f}$, $\nodeC$ has to be unique in this path. 
This is because
$\Var(\nodeC)$ must equal the root variable of $\tilde{f}$ by definition of
$\induce$, and any given variable appears at most once on any path by Step~\ref{step:valid}.

Thus type B feature query effectively corresponds to a 2-tuple $(\nodeA, \tilde{f})$ instead of a 3-tuple $(\nodeC, \nodeA, \tilde{f})$ because $\tilde{f}$ corresponds to a unique $\nodeC$. Let $\nodeC_{\tilde{f}}$ denote this unique node for $\tilde{f}$.  Now, let $k_{\nodeA}$ be the number of type B feature queries made at $\nodeA$. We can divide this case further into type B(a) consisting of nodes $\nodeA$, such that $k_{\nodeA} = 1$ and type B(b) corresponding to $k_{\nodeA} > 1$. In total over the $d$ nodes in $\tilde{g}$, we would examine only $d$ type B(a) features.  Now, for type B(b), at node $\nodeA$, where we evaluate  $k_{\nodeA} $ features at $\nodeA$, we claim that this eliminates at least $k_{\nodeA}-1 $ different metafeatures from resulting in feature examinations of type B further down this path. This is because each of the $k_{\nodeA}$ features that we examine at $\nodeA$ correspond to  $\induce(\tilde{g}, \nodeC_{\tilde{f}}, \nodeA, \tilde{f})$ for some $\tilde{f} \in \tildeF$. Let this set of metafeatures be $\tildeF_{\nodeA}$, where $|\tildeF_{\nodeA}| = k_{\nodeA}$. Now, we assign only one feature to $\nodeA$ that corresponds to say, $\tilde{f}^*\in \tildeF_{\nodeA}$. After this, when we are growing a descendant node $\nodeB$, for the $k_{\nodeA}-1$ other metafeatures $\tilde{f} \in \tildeF_{\nodeA}$ and $\tilde{f} \neq \tilde{f}^*$, $\conflict(\tilde{g}, \nodeC_{\tilde{f}}, \nodeB, \tilde{f})$ will be true as there will be a conflict at $\nodeA$. 
However, since $\conflict(\tilde{g},  \nodeC_{\tilde{f}}, \nodeB,\tilde{f})$ needs to be false in Step~\ref{step:superimpose} for $\tilde{f}$ to result in a feature query, we conclude that  there are $k_{\nodeA}-1$ different metafeatures that do not result in a feature query beyond this point. 

Using the above claim, we can now bound $\sum_{\nodeA: k_{\nodeA} > 1} k_{\nodeA}$, which will account for the total feature queries of type B(b) along the path. Since  $k_{\nodeA}-1$ denotes the number of eliminated metafeatures beyond $\nodeA$, and since only at most $|\tildeF|$ can be eliminated, we have $\sum_{\nodeA: k_{\nodeA} > 1} (k_{\nodeA} - 1) \leq |\tildeF|$. Now, since $\sum_{\nodeA: k_{\nodeA} > 1}1 \leq d$, we have that $\sum_{\nodeA: k_{\nodeA} > 1} k_{\nodeA} \leq |\tildeF| + d$ i.e., we make at most $|\tildeF| +d$ type B(b) feature queries of the last kind on this path. 
Thus, in summary, we examine at most $O(|\tildeF|+d)$ features on each example.
\end{proof}

Now, to provide a lifelong learning protocol for Problem Setup~\ref{ass:dt-general}, the challenge is to design a computationally efficient \ID{} routine\footnote{As a warm-up, consider a semi-adversarial scenario where each element of $\F$ has a reasonable chance of being the topmost metafeature in any target. We can then learn the first few targets from scratch and simply add them to $\tildeF$ so that with high probability, each metafeature from $\F$ is guaranteed to be the prefix of some element in $\tildeF$. Then we can use Algorithm~\ref{alg:lfd-dt} to learn the remaining targets using $\tildeF$ as all those targets will lie in $\dtspace(\Pref(\tildeF))$. We provide a careful analysis of this simpler case in Appendix~\ref{app:dt} Theorem~\ref{thm:semi-random-dt}.}.
 To this end, we present Algorithm~\ref{alg:id-dt} that creates useful metafeatures 
by adding to $\tildeF$ well-chosen subtrees from target functions. In particular, after learning a target $g$ from scratch, we identify a root-to-leaf path in $g$ that we failed to learn using $\tildeF$. We add to $\tildeF$ the subtrees rooted at every node in that path.  The intuition is that one of these subtrees  makes the representation more useful.  To describe how the path is chosen, let $\tilde{g}$ be the incomplete tree learned using $\tildeF$ just before we halted with failure. Since either the depth or the node count was exceeded in $\tilde{g}$, there must be a path from the root of $\tilde{g}$ longer than the corresponding path in $g$. We pick the corresponding path in $g$ which was incorrectly learned in $\tilde{g}$ (see Figure~\ref{fig:path}).

Finally, as we see below in the proof sketch for Theorem~\ref{thm:dt}, the resulting protocol evaluates only $\bigo{Kd}$ features per example when learning from $\tildeF$, besides learning $K$ trees from scratch. Recall that this is a significant improvement of our straightforward \LFD{}  which evaluates $\bigo{Ks}$ features per example. 
In Appendix~\ref{app:dt}, we present results for more models for decision trees. 


\begin{algorithm}[H]
\caption{\ID{} - Decision Trees}
\label{alg:id-dt}
\begin{algorithmic}[1]
\STATE Input: Old representation $\Fold$ and a tree $\g \in \dtspace(\F)$ learned from scratch and the (incorrect) incomplete tree $\tilde{\g}$ learned using $\Fold$.
\STATE $\tildeF \gets \Fold$
\STATE Identify a path from root of $\tilde{g}$ such that the corresponding path in $g$ has fewer internal nodes.
\STATE For each node in the corresponding path in $g$, add the subtree rooted at that node to $\tildeF$.
\STATE Output $\tildeF$
\end{algorithmic}
\end{algorithm}

\begin{proof}(for Theorem~\ref{thm:dt})
We will show by induction that at any point during a run of the protocol, if $k$ targets have been learned from scratch, then there exists a subset of $k$ true metafeatures $\F' \subseteq \F$ that have been ``learned'' in the sense that $f \in \F'$  is the prefix of some metafeature in $\tildeF$, implying that $\dtspace(\F') \subseteq \dtspace(\Pref(\tildeF))$. Then after learning $K$ targets from scratch, it has to be the case that $\F' = \F$ after which $\dtspace(\F) \subseteq \dtspace(\Pref(\tildeF))$ and hence from Lemma~\ref{lem:lfd-correctness} it follows that the protocol can never fail while learning from $\tildeF$.

 The base case is when $\tildeF'$ is empty for which the induction hypothesis is trivially true. Now, assume at some point we have metafeatures $\Fold$ and these correspond to true metafeatures $\F'_{\text{old}}\subseteq \F$ such that  $\dtspace(\F'_{\text{old}}) \subseteq \dtspace(\Pref(\tildeF))$ and $|\F'_{\text{old}}| = k$. Now, from Theorem~\ref{thm:lfd-dt}, we can conclude that any target that lies in $\dtspace(\F'_{\text{old}})$ will be successfully learned by \LFD{} Algorithm~\ref{alg:lfd-dt}. Hence, when \LFD{} does fail on a new target $\g$, it means that the $\g$ contains metafeatures from $\F- \F'_{\text{old}}$. In fact, along any path in $\g$ in which learning failed (that is, the tree $\tilde{g}$ that is output differs from $g$ on this path), there must be a node at which some metafeature from $\F - \F'_{\text{old}}$ is rooted. If this was not true for a particular failed path, we can show using an argument similar to Lemma~\ref{lem:lfd-correctness} that this path would have been learned correctly. Therefore, when we add to $\tildeF$ all the subtrees rooted at the nodes in some failed path in $\g$, we are sure to add a tree which has some $f \in \F - \F'_{\text{old}}$ as one of its prefixes. This means that for the updated set of metafeatures, there exists $\F' = \F'_{\text{old}} \cup \{f\}$ of cardinality $k+1$ that satisfies the induction hypothesis.
 
Now, each time \LFD{} fails, we add at most $d$ metafeatures to $\tildeF$, so $|\tildeF| \leq Kd$. From Theorem~\ref{thm:lfd-dt}, it follows that we evaluate only $\bigo{Kd}$ features when learning using $\tildeF$. 
\end{proof}

\begin{figure}[H]
    \centering
    \begin{minipage}[b]{0.4\textwidth}
        \centering
\includegraphics[scale=0.65]{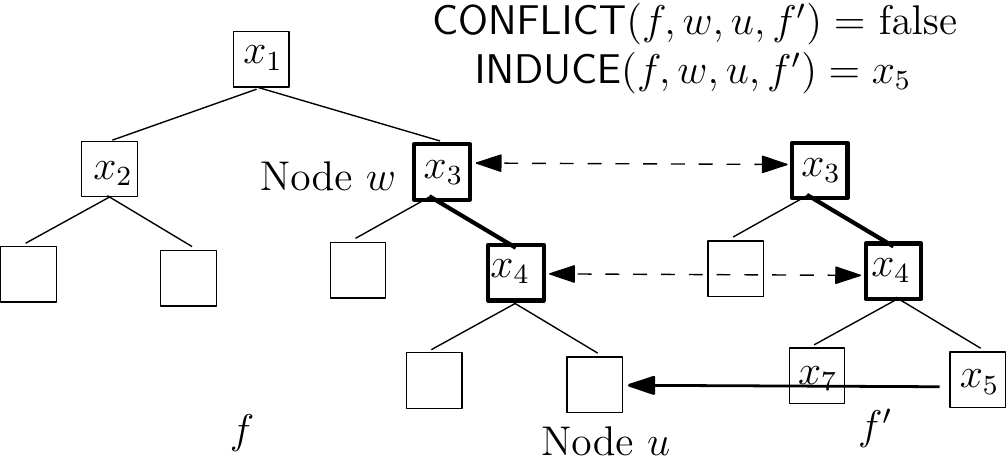}
\caption{Superimposing $f'$ over $f$ with its root at $\nodeC$} 
\label{fig:conflict-induce}
    \end{minipage}%
    \qquad
    \begin{minipage}[b]{0.5\textwidth}
       \centering
		\includegraphics[scale=0.65]{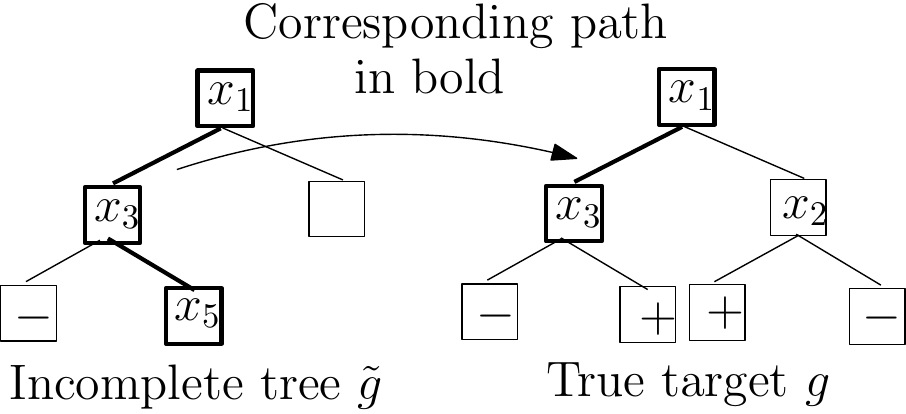}
		\caption{Path chosen by \ID{} Algorithm~\ref{alg:id-dt}.}
		\label{fig:path}
    \end{minipage}
\end{figure}

   \noindent
 \textbf{Extension to real-valued features}: Our results hold also for decision trees over real-valued features, where nodes contain binary splits such as ``$x_1 \geq 7$''.  In particular, we reduce this to the Boolean case by viewing each such split as a Boolean variable.  While this reduction involves an implicitly infinite number of Boolean variables, our bounds still apply. This is because we make only $N$ feature evaluations per example when learning from scratch (and not infinitely many). Also, the feature evaluations made by our \LFD{} is independent of the number of Boolean variables.

\subsection{Decision Lists} 
While we can use the above protocol for decision lists too, it does not effectively provide any improvement over the baseline approach because for lists, $s=d$. However, by making use of the structure of decision lists, we provide a protocol, namely \ID{} Algorithm~\ref{alg:id-dl}, that learns $K^2$ lists from scratch and on the rest examines only $\bigo{K^2+d}$ features per example. The high level idea is that when we fail to learn a target using $\tildeF$, we add to $\tildeF$ only a single suffix of the target list as a new metafeature instead of adding all $d$ suffixes like in Algorithm~\ref{alg:id-dt}.

More specifically,  given a decision list $g$ learned from scratch (that we could not learn from $\tildeF$), we examine $g$ and the actual list $\tilde{g}$ we learned from $\tildeF$. Then we simply ignore the first few nodes of $g$ that we managed to learn using $\tildeF$, and add the remaining part of the list to $\tildeF$. The intuition is that the representation is improved by introducing a part of $g$ that we could not learn using $\tildeF$. 
Note that here $\tilde{g}$ might not even be a complete decision list as \LFD{} may have simply failed in finding a decision list using $\tildeF$ that fits the data. However, it may have still been successful in learning the first few nodes of $g$. 

Here, we use the term suffix to denote a subtree (i.e., sublist) of a list. In other words, a suffix of a list would be a path beginning anywhere on the list and ending at  the leaf. Similarly, we use the term prefix to denote a path beginning at the root of the list and ending anywhere on the list. In our proof, we will  use the notation $\nodeA \dlin f$ to denote that node $\nodeA$ is present in the list $f$, and $f' \dlsubset f$ to denote that $f'$ is an incomplete list (like an incomplete decision tree) which corresponds to a path within $f$, not necessarily a prefix or a suffix. 
Furthermore, if $g$ is a concatenation of other (incomplete) lists $g_1, g_2, \hdots$ we will say $g = (g_1, g_2, \hdots)$.

\begin{algorithm}[H]
\caption{\ID{} - Decision Lists}
\label{alg:id-dl}
\begin{algorithmic}[1]
\STATE \textbf{Input}: Old representation $\Fold$, target $g$ learned from scratch, $\tilde{g}$ learned using $\Fold$.
\STATE Let $g = (g_p, g_s)$ where $g_p$ is the longest common prefix of $\tilde{g}$ and $g$. 
\STATE $\tildeF \gets \Fold \cup \{g_s\}$
\STATE Return $\tildeF$
\end{algorithmic}
\end{algorithm}

We now present an outline of our proof for the claim that employing \LFD{} Algorithm~\ref{alg:lfd-dt} along with \ID{} Algorithm~\ref{alg:id-dl} learns at most $\bigo{K^2}$ decision lists from scratch.  A crucial fact we use is that \LFD{} Algorithm~\ref{alg:lfd-dt} learns any list iff it belongs to $\dtspace(\Pref(\tildeF))$.  Now, observe that there must exist an $f \in \F$ such that $f$ is a part of $g$ and furthermore, \LFD{}  was able to learn upto a prefix $f_p$ of $f$ after which it failed to learn the remaining suffix of $f$, say $f_s$. Our result follows if we can show that 
there can only be $O(K)$ failures of \LFD{} that correspond to a particular $f$ in this manner. To prove this, we will categorize the failures of  \LFD{} corresponding to $f$ based on whether $f_p \in   \dtspace(\Pref(\Fold))$ and show that there can be only $\bigo{K}$ failures for each case, for a given $f$. 

When $f_p \in   \dtspace(\Pref(\Fold))$, after running \LFD{} Algorithm~\ref{alg:lfd-dt}, we will have that $f_s \in \dtspace(\Pref(\tildeF))$ because $g_s$ which has the prefix $f_s$ was added to our representation. Then, $f \in \dtspace(\Pref(\Fold))$, and therefore on any new target there can not be a failure corresponding to $f$. Thus, there is at most one failure corresponding to $f$, of this type.

The case where $f_p \notin   \dtspace(\Pref(\Fold))$ requires a more intricate argument which is based on identifying another $f'$ chosen carefully from an ``indirect'' representation of $g$ in terms of $\F$. In particular, on one hand there is a direct representation of $g$ in terms of $\F$. At the same time, since Algorithm~\ref{alg:lfd-dt} learned $g_p$ using $\Fold$, $g_p$ can be represented as a sequence of prefixes from $\Fold$. Since each element in $\Fold$ is also from $\dtspace(\F)$, we can indirectly represent this sequence of prefixes in terms of parts of metafeatures from $\F$. We will choose an appropriately positioned $f'$ from this representation and show that there can be only two failures corresponding to a particular $f$ and $f'$. Thus, there can only be $\bigo{K}$ failures for a particular $f$.\\

\begin{theorem}
\label{thm:dl}
The  (\LFD{} Algorithm~\ref{alg:lfd-dt}, \ID{} Algorithm~\ref{alg:id-dl})-protocol for decision lists
makes $\bigo{\samplesize(K^2 N + m (K^2 + d))}$ feature evaluations overall and runs in time $poly(m, N, K, \samplesize, d)$.
\end{theorem}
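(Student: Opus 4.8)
The plan is to split the total cost into the contribution from targets learned from scratch and the contribution from targets learned via \LFD{}. Learning a single target from scratch costs $\bigo{\samplesize N}$ feature evaluations (all $N$ features on all $\samplesize$ examples), while by Theorem~\ref{thm:lfd-dt} each target learned using the current representation $\tildeF$ costs $\bigo{|\tildeF| + d}$ per example, i.e. $\bigo{\samplesize(|\tildeF| + d)}$ in total. Since \ID{} Algorithm~\ref{alg:id-dl} adds exactly one new metafeature $g_s$ to $\tildeF$ each time \LFD{} fails, the size $|\tildeF|$ is bounded by the number of from-scratch learnings. Hence the whole theorem reduces to the single combinatorial claim that \LFD{} fails at most $\bigo{K^2}$ times over the entire stream: this gives $|\tildeF| = \bigo{K^2}$, a from-scratch cost of $\bigo{\samplesize K^2 N}$, and a remaining cost of $\bigo{\samplesize m (K^2 + d)}$, which together yield the stated bound. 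The running time is immediate, since \LFD{} and from-scratch top-down learning are polynomial, \ID{} merely slices and stores a suffix, and this is repeated over $m$ tasks.

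To bound the number of failures, I would first record the characterization that \LFD{} succeeds on a list $g$ if and only if $g \in \dtspace(\Pref(\tildeF))$: the ``if'' direction is Lemma~\ref{lem:lfd-correctness}, and for lists the ``only if'' direction follows because the list it outputs is built by concatenating prefixes of elements of $\tildeF$. Whenever \LFD{} fails on a target $g$, write $g = (g_p, g_s)$ as in Algorithm~\ref{alg:id-dl}, where $g_p$ is the longest prefix it learned. Using the direct representation of $g$ as a concatenation of metafeatures from $\F$, I would locate the metafeature $f \in \F$ that straddles the boundary between $g_p$ and $g_s$, splitting it as $f = (f_p, f_s)$ with $f_p$ a suffix of $g_p$ and $f_s$ a prefix of $g_s$. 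The goal then becomes: \emph{for each fixed $f \in \F$, at most $\bigo{K}$ failures can be attributed to $f$}, which sums to $\bigo{K^2}$.

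I would split the failures attributed to $f$ according to whether $f_p \in \dtspace(\Pref(\Fold))$. In the first case, because $g_s$ (and hence its prefix $f_s$) is freshly added to $\tildeF$, one checks that $f = (f_p, f_s) \in \dtspace(\Pref(\tildeF))$ after the update; thus $f$ becomes permanently learnable and can never again be the straddling metafeature, giving at most one failure of this kind per $f$. The delicate case is $f_p \notin \dtspace(\Pref(\Fold))$. Here I would exploit that $g_p$ \emph{was} learned from $\Fold$, so $g_p$ is a concatenation of prefixes of elements of $\Fold$, and since every element of $\Fold$ itself lies in $\dtspace(\F)$, this yields an \emph{indirect} representation of $g_p$ as a concatenation of parts of metafeatures of $\F$. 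I would select the metafeature $f' \in \F$ whose part covers the position in $g_p$ where the suffix $f_p$ begins, and argue that a fixed ordered pair $(f, f')$ can be responsible for at most two failures. Since there are at most $K$ choices of $f'$, this bounds the second case by $\bigo{K}$ failures per $f$, completing the $\bigo{K^2}$ count.

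The main obstacle is this last step: controlling the case $f_p \notin \dtspace(\Pref(\Fold))$. The difficulty is that $f_p$ need not correspond to any single metafeature of $\F$, so I cannot reason about it directly; the two-layer bookkeeping (the direct representation of $g$ producing $f$, and the indirect representation obtained by unfolding $\Fold$ back into $\F$ producing $f'$) is precisely what makes the pair $(f,f')$ well defined, and the crux is to show that once two failures have occurred for a pair $(f,f')$ the relevant suffix has been absorbed into $\tildeF$, ruling out a third failure for that pair. Verifying that this absorption argument remains valid even though $\tildeF$ changes between successive failures is the part that will demand the most care.
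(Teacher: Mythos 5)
Your high-level plan coincides with the paper's: reduce the theorem to showing \LFD{} fails at most $\bigo{K^2}$ times, use the characterization that \LFD{} learns a list iff it is a concatenation of prefixes of elements of $\tildeF$, attribute each failure to the straddling metafeature $f = (f_p,f_s) \in \F$, split on whether $f_p \in \dtspace(\Pref(\Fold))$, handle the first case by the permanent learnability of $f$ after $g_s$ is added, and handle the second case by pairing $f$ with a metafeature $f'$ read off from the indirect representation of $g_p$. All of that matches the paper, and your cost accounting is correct. The problem is that essentially all of the difficulty of the theorem lives in the one step you defer --- ``a fixed ordered pair $(f,f')$ can be responsible for at most two failures'' --- and the concrete design decision you made at that point undermines the known route to proving it.

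Specifically, you define $f'$ as the metafeature whose part covers the position where $f_p$ begins, i.e.\ the part covering the root variable $i_f$ of $f$. The paper instead takes the unique element $\sPref(\tilde{f}_{l_r})$ of the prefix decomposition of $g_p$ that contains $i_f$, and lets $f'$ be the metafeature contributing the \emph{last bit} of that element; the two choices agree only when $i_f$ lies in that last bit (the paper's Case 2b) and differ otherwise (Case 2a). The difference matters because the contradiction argument for a repeated failure is a splicing construction: keep the later failure's decomposition up to its element containing $i_f$, replace that element by a longer or shorter prefix of the same $\tildeF$-element so that it ends exactly where the earlier failure's element ended, then append the earlier failure's remaining pieces followed by $f_s$. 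The replacement is legitimate only because, with the paper's choice, both cut points lie inside prefixes of the same $f'$, and $f'$ is shown to be wholly contained in the later $\tildeF$-element, so the extension is guaranteed to keep matching the target. With your choice, the pair $(f,f')$ constrains nothing about what the later $\tildeF$-element contains beyond its own end point (that content comes from whatever old target it is a suffix of), so the element cannot in general be extended to the earlier cut point and the splice cannot be completed. You would also need the finer sub-division into the two failure types ($i_f$ inside or outside the last bit): the bound of two failures per pair comes from showing each type occurs at most once, not from a cumulative ``after two failures the suffix has been absorbed'' effect, which your sketch gestures at but never substantiates. In short: right skeleton, but the crux is missing, and the definition of $f'$ you chose is the one under which that crux appears not to be provable.
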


\begin{proof}
We show that the protocol learns at most $\bigo{K^2}$ lists from scratch. Then, from Lemma~\ref{lem:lfd-dt-feature-efficiency} our result follows.

Now, we need to understand how adding the suffix $g_s$ from a target $g$ on which \LFD{} failed, makes the representation more useful. As a warm up, we can show that when the protocol faces the same target $g$ in the future, the updated representation  $\tildeF = \Fold \cup \{g_s\}$ will be able to learn it.  A crucial fact from which this follows is that \LFD{} Algorithm~\ref{alg:lfd-dt} {\em learns any list if and only if the list can be represented as a concatenation of prefixes of elements from $\tildeF$}.  This fact holds because Lemma~\ref{lem:lfd-correctness} and the way the algorithm works. Thus, since we were able to learn $g_p$ when we first saw $g$,  $g_p$ is a concatenation of prefixes from $\Fold$ i.e., $g_p \in \dtspace(\Pref(\Fold))$. Then, since $g  = (g_p, g_s) \in \dtspace(\Pref(\Fold \cup \{g_s\}))$, we can learn $g$ using $\tildeF$.

Of course, we should show that the updated representation is more powerful than just allowing us to learn repeated tasks in the future.  To see how, note that since the target $g$ is a concatenation of metafeatures from $\F$, its suffix $g_s$ must begin with the suffix of a metafeature from $\F$. More formally, since $g \in \dtspace(\F)$, $g_s$ must begin with a suffix $f_s$ of an element $f \in \F$. Let $f_p$ be the corresponding prefix of $f$.  Now, consider a future target that contains $f$. If the learner is able to identify all nodes in the target upto the end of prefix $f_p$, the learner is also guaranteed to identify $f$ completely in the target. This tells us a little bit more about the power of the updated representation.

 Now, to prove our lemma, we use the fact that each failure of \LFD{} Algorithm~\ref{alg:lfd-dt} must correspond to a specific element $f \in \F$ as seen above. That is, there must exist an $f = (f_p, f_s) \in \F$ such that $f \dlsubseteq g$ and furthermore, \LFD{}  was able to learn upto a prefix $f_p$ of $f$ after which it failed. We claim that there can only be $O(K)$ failures of \LFD{} that corresponds to a particular $f$ in this manner. From here, our lemma immediately follows. 
To prove this claim, we will categorize the failures of  \LFD{} corresponding to $f$ into two different cases and bound the number of failures in each case. Throughout the following discussion, we will simply use the term failure to denote failure of \LFD{}. \\ 

We will divide failures corresponding to $f$ based on whether $f_p$ can be represented as a concatenation of prefixes from $\Fold$ or not. If it can be, we show that it is easy to argue that in any future target there will not be a failure corresponding to $f$.  If not, we present a more involved argument to show that there can be at most $K$ failure events corresponding to a particular $f$. Then, the bound of $\bigo{K^2}$ on the total number of failures follows.\\

\textbf{Case 1}: For the first case we assume that $f_p \in   \dtspace(\Pref(\Fold))$.  Then, clearly, this is true for the new representation $\tildeF$ i.e., $f_p \in \dtspace(\Pref(\tildeF))$. Furthermore, since there is 
 a new element $g_s$ with $f_s$ as its prefix, $f_s \in \Pref(\tildeF)$. This implies that $f \in \dtspace(\Pref(\tildeF))$. This means that we can henceforth learn an occurrence of $f$ in a new target if learning has been successful until the beginning of $f$ in that target. In other words,  there can never be another failure that corresponds to $f$. This case can hence occur only once. \\

\textbf{Case 2}: The second case corresponds to $f_p \notin  \dtspace(\Pref(\Fold))$.  We will now subdivide this case further based on another metafeature $f' \in \F$, a part of which lies in some hypothesized metafeature in $\Fold$ and was used to learn/match a part of $f$ in $g_p$.  We will fix $f'$ and argue that there can be at most two failure events characterized by $f$ and $f'$ during the lifelong learning protocol. Since there are only $K$ different $f'$, then for a fixed $f$, there can only be $2K$ failure events of this type, thus completing our proof.


We begin by informally explaining how we choose $f'$ to classify a given failure event.  We first note that there are two ways in which $g_p$ can be represented in terms of the true metafeatures $\F$. The ``direct'' representation corresponds to the fact that $g \in \dtspace(\F)$. On the other hand, there is also an ``indirect'' representation:  since Algorithm~\ref{alg:lfd-dt} could learn the prefix $g_p$ using $\Fold$, $g_p$ can be represented as a sequence of prefixes from $\Fold$. Since each element in $\Fold$ is a part of older targets from $\dtspace(\F)$, we can represent this sequence of prefixes in terms of parts of true metafeatures (that are not necessarily prefix/suffix parts).

Now, let the root variable of $f$ be $i_f$.  There must be a unique element in the sequence of prefixes that contains $i_f$. {\em We let $f'$ be the metafeature in $\F$ that contributes to the last bit of this unique element in the above-described indirect representation}. 
Before we proceed to describe this more formally, we note that this is all possible only because  $i_f$ indeed belongs to $f_p$. If it did not, it means $f_p$ is an empty string, which we have dealt with in Case 1.

We now state our choice of $f'$ more formally. Since we were able to learn $g_p$ using $\Fold$ we can write 
$g_p = (\sPref(\tilde{f}_{l_1}),  \sPref(\tilde{f}_{l_2}),  \hdots)$ for $\tilde{f}_{l_1},\tilde{f}_{l_2}, \hdots  \in \Fold$ where we use the notation $\sPref(\tilde{f})$ to denote a particular prefix of $\tilde{f}$. 
Let $\sPref(\tilde{f}_{l_r})$ be the unique element in the above sequence that contains $i_f$ (we use the index $r$ to denote that it contains the root).  
Like we stated before, since $\tilde{f}_{l_r}$ is also the suffix of some old target in $\dtspace(\F)$, $\tilde{f}_{l_r}$ must be made up of parts of true metafeatures $\F$. The same holds for $\sPref(\tilde{f}_{l_r})$ too. 
We will focus on the true metafeature that makes up the last bit of $\sPref(\tilde{f}_{l_r})$.  That is, let $f' \in \F$ be the metafeature that occurs in an older target, such that a non-empty suffix of $\sPref(\tilde{f}_{l_r})$ comes from $f'$ i.e.,  there exists suffix  $\sSuff(\sPref(\tilde{f}_{l_r}))$ such that $\sSuff(\sPref(\tilde{f}_{l_r})) \dlsubseteq f'$. Here, again $\sSuff(\tilde{f})$ is used to denote a particular suffix of $\tilde{f}$. Thus each failure event in this case can be characterized by a particular $f$ and $f'$. 

Note that $\sSuff(\sPref(\tilde{f}_{l_r}))$ need not necessarily be a suffix of $f'$  because $\tilde{f}_{l_r}$ may have stopped matching with $g$ somewhere in the middle of $f'$. It need not necessarily be a prefix of $f'$ either because $\tilde{f}_{l_r}$ is only a suffix of some target in $\dtspace(\F)$ and this suffix may have begun somewhere in the middle of $f'$ in that target.

 To show that there are at most two failure events for a given $f$ and $f'$, we will consider two sub-cases depending on
 whether $i_f \dlnotin \sSuff(\sPref(\tilde{f}_{l_r}))$. That is, when we use a part of $f'$ to learn $g_p$, we see whether we learn $i_f$ or not.
 These two cases are illustrated in Figure~\ref{fig:case21} and Figure~\ref{fig:case22}. For both these scenarios, we first analyze the structure behind the failure i.e., the locations of the different variables and how the different metafeatures align with each other.  Based  on this, we show that for each type, there can be at most one failure.\\

\textbf{Case 2a}:  $i_f \dlnotin \sSuff(\sPref(\tilde{f}_{l_r}))$.  Let us call this an $(f,f')^{(1)}$-type failure event. We first look at how the different elements are positioned when such a failure occurs, by aligning the elements in a way that the variables match. First, recall that by the definition of $\sPref(\tilde{f}_{l_r})$, $i_f \dlin \sPref(\tilde{f}_{l_r})$.  Thus, $i_f\dlin\tilde{f}_{l_r}$. Furthermore, by definition of $f'$, and because $\tilde{f}_{l_r}$ is the suffix of an older target from $\dtspace(\F)$, either a suffix  or the whole of $f'$ must occur in $ \tilde{f}_{l_r}$. We claim that  1) it is the latter, i.e., $f' \dlsubseteq \tilde{f}_{l_r}$ and furthermore, 2) the root of $f'$ is located below $i_f$ in $\tilde{f}_{l_r}$ (as illustrated in Figure~\ref{fig:case21}). If only a suffix of $f'$ occurred in $\tilde{f}_{l_r}$, it means that $\tilde{f}_{l_r}$ begins with that particular suffix and therefore by definition of   $\sSuff(\sPref(\tilde{f}_{l_r}))$ being the last bit of $\sPref(\tilde{f}_{l_r})$ that comes from $f'$, $\sPref(\tilde{f}_{l_r}) = \sSuff(\sPref(\tilde{f}_{l_r}))$. Then, since $i_f \dlin  \sPref(\tilde{f}_{l_r})$, $i_f \dlin  \sSuff(\sPref(\tilde{f}_{l_r}))$ which is a contradiction. Now, if indeed $f' \dlsubseteq \tilde{f}_{l_r}$ but the root of $f'$ was not located below $i_f$ in $\tilde{f}_{l_r}$ again by definition of $\sSuff(\sPref(\tilde{f}_{l_r}))$ being the last bit of $\sPref(\tilde{f}_{l_r})$ that comes from $f'$, $i_f \dlin  \sSuff(\sPref(\tilde{f}_{l_r}))$ which is a contradiction. Note that conclusions 1) and 2) above mean that $ \sSuff(\sPref(\tilde{f}_{l_r}))$ is a prefix of $f'$.

\begin{figure}
\centering
  \includegraphics[width=.9\linewidth]{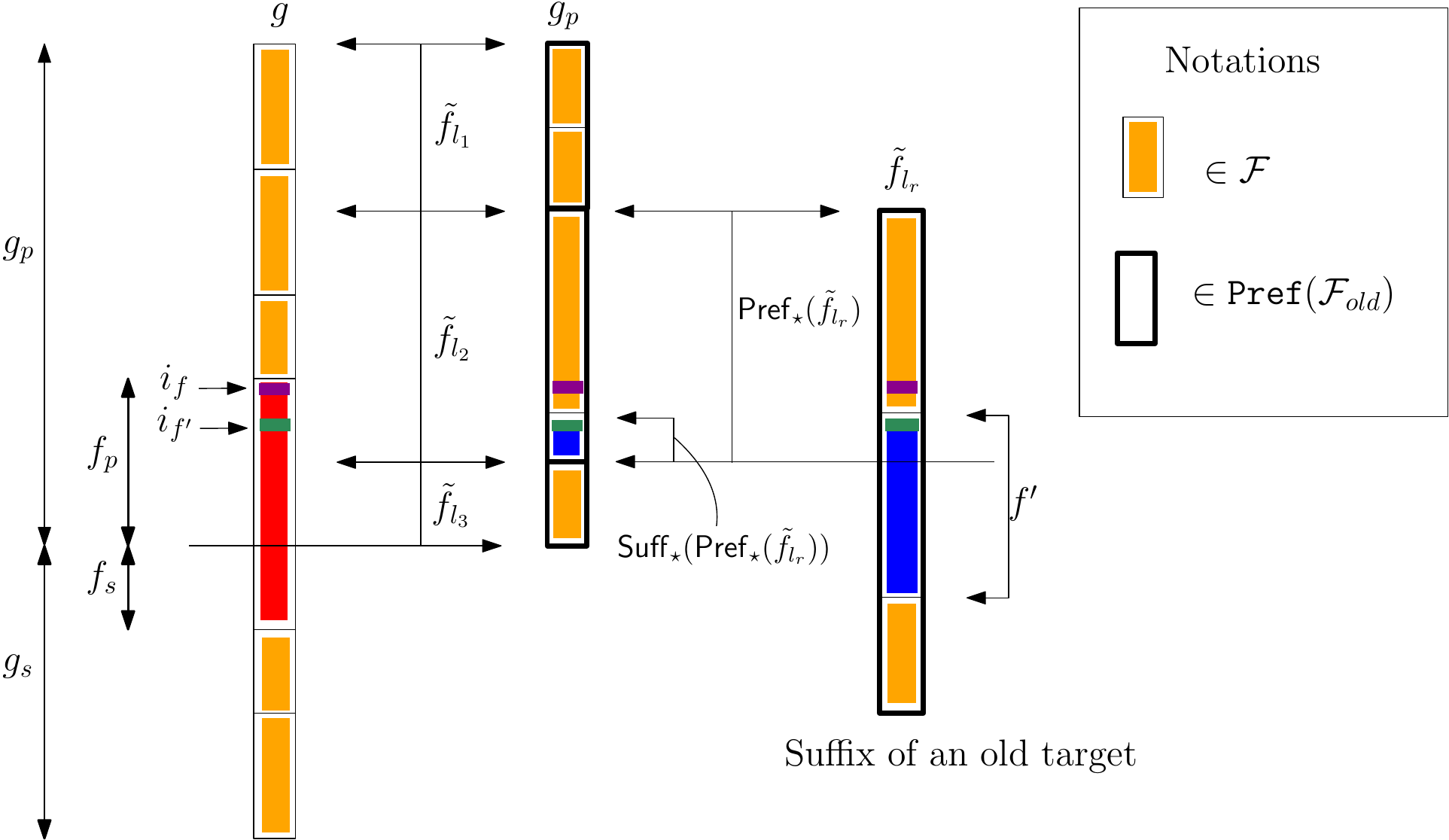}
\caption{\textbf{$(f,f')^{(1)}$-type failure where $i_f \dlnotin \sSuff(\sPref(\tilde{f}_{{l_r}}))$:} We represent the decision list $g$ on the left. Each subrectangle in this corresponds to some element from $\F$ with $f$ marked in red.  In the middle column, we represent the prefix of $g$, $g_p$ in terms of the elements of $\Pref(\Fold)$ each denoted by a thick subrectangle. We can do this because we were able to learn $g_p$ from $\tildeF$. Now each of these thick subrectangles can in turn be represented using parts of metafeatures from $\F$ because these are suffixes of actual targets. In particular, we choose the thick subrectangle that matched with the root of $f$ and show the complete metafeature from $\Fold$ on the right. In this metafeature, the thin rectangles correspond to its representation in $\F$. Observe that we have marked $f'$ in blue, and a part of it is what makes the last bit in the rectangle marked as $\sPref(f_{l_r})$ in $g_p$. Also $i_f$ is marked in magenta below which $i_{{f'}}$ is marked in green.}
\label{fig:case21}
\end{figure}

\begin{figure}
\centering
  \includegraphics[width=.9\linewidth]{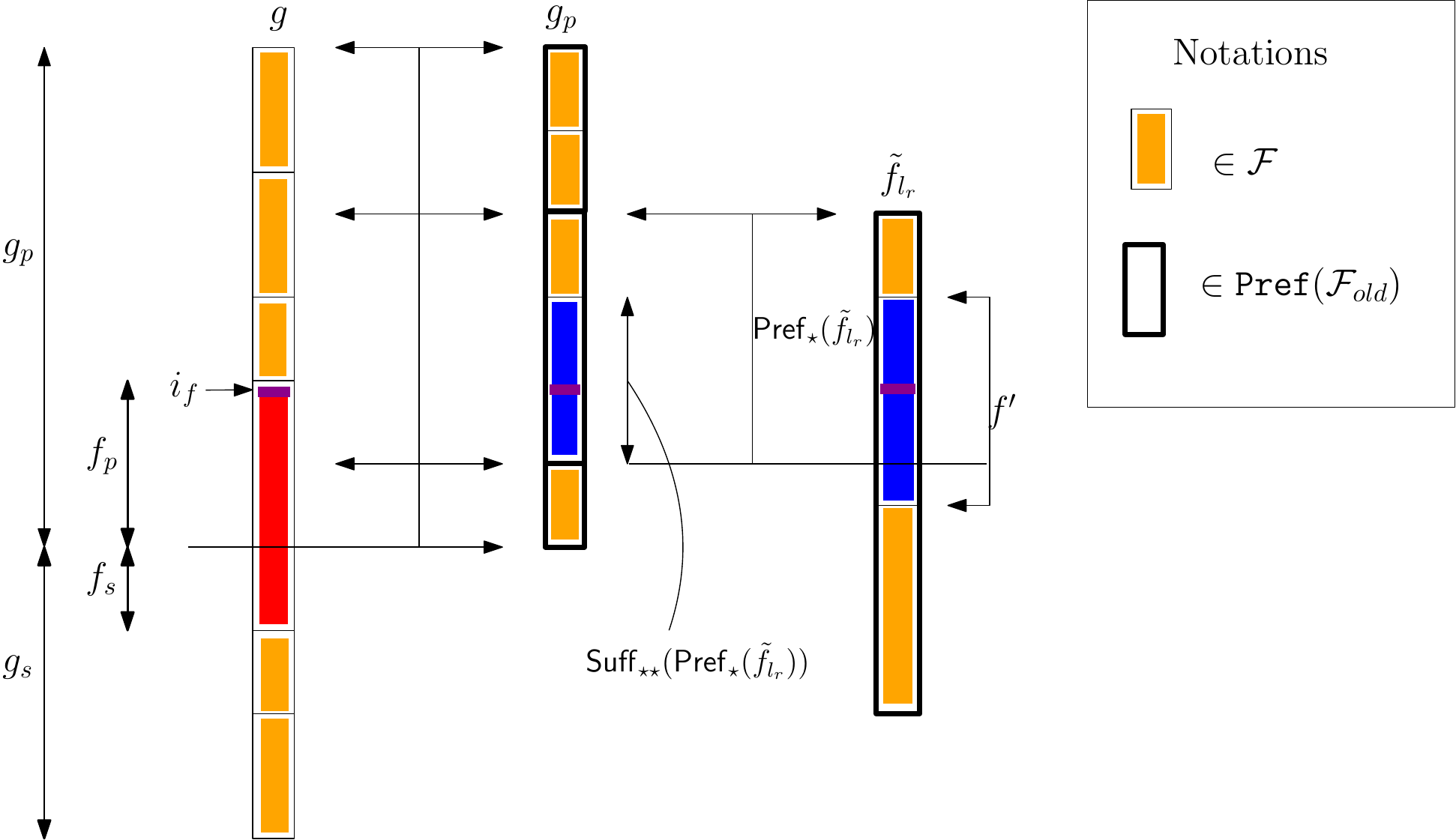}
\caption{\textbf{$(f,f')^{(2)}$-type failure where $i_f \dlin \sSuff(\sPref(\tilde{f}_{{l_r}}))$}}
\label{fig:case22}
\end{figure}

Given this, assume on the contrary that we do face a later target $g'$ with an $(f,f')^{(1)}$-type failure event. Then, we can define notations similar to the first failure.  Let $g_p'$ be the prefix we were able to learn correctly using $\tildeF$. Then, $g_p'$ can be similarly expressed as a sequence of prefixes from $\tildeF$, say $
(\sPref'(\tilde{f}_{l'_1}),  \sPref'(\tilde{f}_{l'_2}),  \hdots)$. By definition of this failure type, $f \dlsubseteq g'$. So consider the prefix that contains $i_f$, call it $\sPref'(\tilde{f}_{{l'_{{r'}}}})$. Furthermore, $\sPref'(\tilde{f}_{{l'_{{r'}}}})$ has a suffix $\sSuff'(\sPref'(\tilde{f}_{{l'_{{r'}}}}))$ that is also a part of $f'$ but is not necessarily the same as $\sSuff'(\sPref(\tilde{f}_{{l_{r}}}))$.  

We will now show that a prefix longer than $g_p'$ that includes $f$ completely can be represented using prefixes from $\tildeF$ which contradicts the fact that the algorithm failed somewhere in between $f$. To do this, we will make use of the fact that the algorithm was able to learn until $i_{f'}$ in the second failure, beyond which it can learn the rest of the target until the end of $f_p$ like it did the previous time, after which we can append $f_s$ from the representation. More specifically, observe that there is exactly one position at which $i_{f'}$ in $f'$ can match with $f$ and hence the failure will look similar to Figure~\ref{fig:case21} again; $f'$ will be contained in $\tilde{f}_{l'_{{r'}}}$ and $i_f$ will be located above $i_{f'}$. Now, since we also know that $f' \dlsubseteq \tilde{f}_{l'_{{r'}}}$, we can extend/shorten the prefix $\sPref'(\tilde{f}_{l'_{{r'}}})$ that is used to match with $g_p'$ to another prefix $\sPref''(\tilde{f}_{l'_{{r'}}})$ that has the same suffix as before, $\sSuff(\sPref(\tilde{f}_{l_{r}}))$.  On doing this, the rest of $f_p$  in $g'_p$ can be represented using the same prefixes from $\tildeF$ used to represent that part in $g_p$. Furthermore, we can append $f_s$ to this sequence because $f_s$ is a prefix of $g_s$ that was added to the representation. Thus, we take the sequence $
(\sPref'(\tilde{f}_{l'_1}),  \sPref'(\tilde{f}_{l'_2}),  \hdots)$
1) we retain the first ${r'}-1$ elements, 2) modify the ${r'}$th element so that its suffix matches with $\sSuff(\sPref(\tilde{f}_{l_{r}}))$, 3) append the $r$th, $r+1$th, $\hdots$ elements from the representation for $g_p$, 4) and finally append $f_s$. This represents a larger prefix of $g$ that includes $f$ completely, using only prefixes from $\tildeF$. Namely, this is $(\sPref'(\tilde{f}_{l'_1}),  \sPref'(\tilde{f}_{l'_2}), \hdots \sPref''(\tilde{f}_{l'_{{r'}}}),  \sPref(\tilde{f}_{l_{r+1}}), \sPref(\tilde{f}_{l_{r+2}}), \hdots, f_s)$. This contradicts the fact we failed to learn $f$ completely in $g'$.\\

\textbf{Case 2b}: $i_f \dlin \sSuff(\sPref(\tilde{f}_{{l_r}}))$.  Let us call this an $(f,f')^{(2)}$-type failure event. We now make  a similar argument. The only difference is that now $\sSuff(\sPref(\tilde{f}_{l_{r}}))$  is not necessarily a prefix of $f'$ and therefore, $i_{f'}$ is not necessarily present in $\sSuff(\sPref(\tilde{f}_{l_{r}}))$ (see Figure~\ref{fig:case22}. However it is guaranteed that a suffix of $f'$ containing $i_f$ is present in $\tilde{f}_{l_r}$. Now let $\ssSuff(\sPref(f_{l_r}))$ be an alternative shorter suffix of $\sPref(f_{l_r})$ that begins only at $i_f$. 
 
 Now, consider a new target with a similar failure with a similar $\ssSuff'(\sPref'(f_{l'_{r'}}))$ that begins with $i_f$. We will  again show how we can use the updated representation to represent a larger prefix of $g'$ , specifically a prefix that extends until the end of $f$ in $g'$. In particular, we make use of the fact that the algorithm was able to learn at least before $i_f$ in this target, beyond which we can learn $f_p$ the way we did in the previous target, and then append $f_s$ from the representation.
More specifically, we first  extend/shorten the prefix $\sPref'(f_{l'_{r'}})$ that is used to match with $g_p'$ to another prefix $\sPref''(f_{l'_{r'}})$ that it has the suffix $\ssSuff(\sPref(f_{l_{r}}))$ (which is only possible because $i_f \dlin \sPref''(f_{l'_{r'}})$).  On doing this, we can represent the rest of $f$ using $\tildeF$ like in the previous case. 

Thus, we take the sequence $
(\sPref'(\tilde{f}_{l'_1}),  \sPref'(\tilde{f}_{l'_2}),  \hdots)$
1) we retain the first ${r'}-1$ elements, 2) modify the ${r'}$th element, 3) append the $r$th, $r+1$th, $\hdots$ elements from the representation for $g_p$, 4) and finally append $f_s$. This represents a larger prefix of $g$ that includes $f$ completely, using only prefixes from $\tildeF$. Namely, this is $(\sPref'(\tilde{f}_{l'_1}),  \sPref'(\tilde{f}_{l'_2}), \hdots \sPref''(\tilde{f}_{l'_{{r'}}}),  \sPref(\tilde{f}_{l_{r+1}}), \sPref(\tilde{f}_{l_{r+2}}), \hdots, f_s)$. This contradicts the fact that we failed to learn $f$ completely in $g'$.

\end{proof}

\section{Monomials}
\label{sec:monomials}

We consider lifelong learning of degree-$d$ monomials under the belief that there exists a set of $K$ monomial metafeatures like $\{ x_1 x_2, x_1^2x_3, \hdots \}$ and each target can be expressed as a product of powers of these metafeatures e.g., $(x_1x_2)^2 (x_1^2 x_3)$. This is similar to the lifelong Boolean monomial learning discussed in \citet{eff-rep} where each monomial is a conjunction of monomial metafeatures. Since that is an NP-hard problem, they assume that the metafeatures have so-called ``anchor'' variables unique to each. We will however not need this assumption.

Formally, for any input  $\x = (x_1, x_2, \hdots x_\N) \in \mathbb{R}^N$, we denote the output of a $d$-degree target monomial $\G = (g_1, g_2, \hdots, g_N)$ by the function
 $P_{\G}(\x) = x_1^{\g_1} x_2^{\g_2} \hdots x_N^{g_N}$ where $\g_i \in \mathbb{N} \cup \{ 0 \}$ and the {degree} $\sum_i \g_i \leq d$. The unknown metafeature set $\F = \{\f_1, \f_2, \hdots \f_K\}$ consists of $K$ monomials.
 To simplify notations, we also consider $\F$ to be a matrix where column $i$ is  $\f_i$. Therefore, if $\G$ can be expressed using $\F$, then $\G$ lies in the column space of $\F$ denoted by $\C(\F)$. Then, our problem setup is as follows:

\begin{assumption}\label{ass:monomial-general} The $m$ $d$-degree targets $\G^{(1)}, \hdots \G^{(m)}$ and the training data (each of at most $S$ examples) drawn from unknown distributions $\distr^{(1)}, \hdots, \distr^{(2)}$ satisfy the following conditions:

\begin{enumerate}
\item There exists an unknown $N \times K$ matrix $\F$  ($K \ll N$) such that  $\G^{(j)} \in \C(\F)$. 
\item 
Each $\distr^{(j)}$ is a product distribution  (as assumed in  \citealp{eff-rep,andoni})
that is not too concentrated (explained in Appendix~\ref{app:pm}). 
\end{enumerate}
\end{assumption}

Unlike the decision tree problem, where we only considered an abstraction of the learning routine, here we present a particular technique for learning a monomial exactly. We show that under product distributions that are not too concentrated, it is possible to \emph{exactly} learn the power of a given feature in a target by examining {\em only that} feature on polynomially many samples (Lemma~\ref{lem:g-hat-g} in Appendix~\ref{app:pm}). Naturally, we can learn the monomial exactly from scratch as presented in Algorithm~\ref{alg:monomials-scratch} in Appendix~\ref{app:pm} from only polynomially many samples. Then, in the lifelong learning model, by merely keeping a record of the features that have been seen so far, it is fairly straightforward to
learn only $K$ targets from scratch while learning the rest by examining $\bigo{Kd}$ features per example (Theorem~\ref{thm:naive-monomials}).   

Here, we present a significantly better protocol that learns only $K$ targets from scratch and on the rest, evaluates only $\bigo{K}$ features on all examples and $d$ features on one example. This is an improvement especially for cases where $d$ is large. 

The key idea is that we store a list of targets that have been learned from scratch as columns of the matrix $\tildeF$. Also, we learn a new target from scratch only if $g \notin \C(\tildeF)$. Therefore, after learning $K$ targets from scratch, we can show that we have a  rank-$K$ matrix $\tildeF$  such that $\C(\tildeF) = \C(\F)$. Therefore all future targets can be learned using $\tildeF$.

Now, the idea for \LFD{} is as follows (see Figure~\ref{fig:monomials} for illustration). If we have learned $k$ targets from scratch, then $\tildeF$ is of rank $k$. Then,  we identify a set of $k$ features $\I$ that correspond to linearly independent rows in $\tildeF$. We first learn only the powers of $\I$ (which we will denote by $\G[\I]$) by examining $\I$ on all samples. Then we learn a monomial $\tildeG$ by using the equation $\tildeG = \tildeF (\tildeF[\I])^{-1}\G[\I]$. If indeed $\G \in \C(\tildeF)$, then  $\tildeG $ equals $ \G$. This is because the power of each monomial metafeature in $\G$ is recovered through $(\tildeF[\I])^{-1}\G[\I]$.  However, we do not know if $\G \in \C(\tildeF)$ and it may be that $\tildeG\neq\G$. To address this, we can show using Lemma~\ref{lem:pit} that we only need to draw a single sample $\x$, examine $d$ features relevant to $\tildeG$ and check if our prediction $P_{\tildeG}(\x)$ equals the true label $P_{\G}(\x)$.
 If this fails, we conclude that $\tildeG \neq \G$ and therefore, $\G \notin \C(\tildeF)$. We learn $\G$ from scratch and add it to $\tildeF$. 
 Thus, \LFD{} examines only at most $K$ features on all but one sample and $d$ features on one final sample. In fact, after learning $K$ targets from scratch, we do not need to examine the $d$ features and do the verification step because we are guaranteed that $\G \in \C(\tildeF)$.

\begin{figure}
\centering
\includegraphics[scale=0.75]{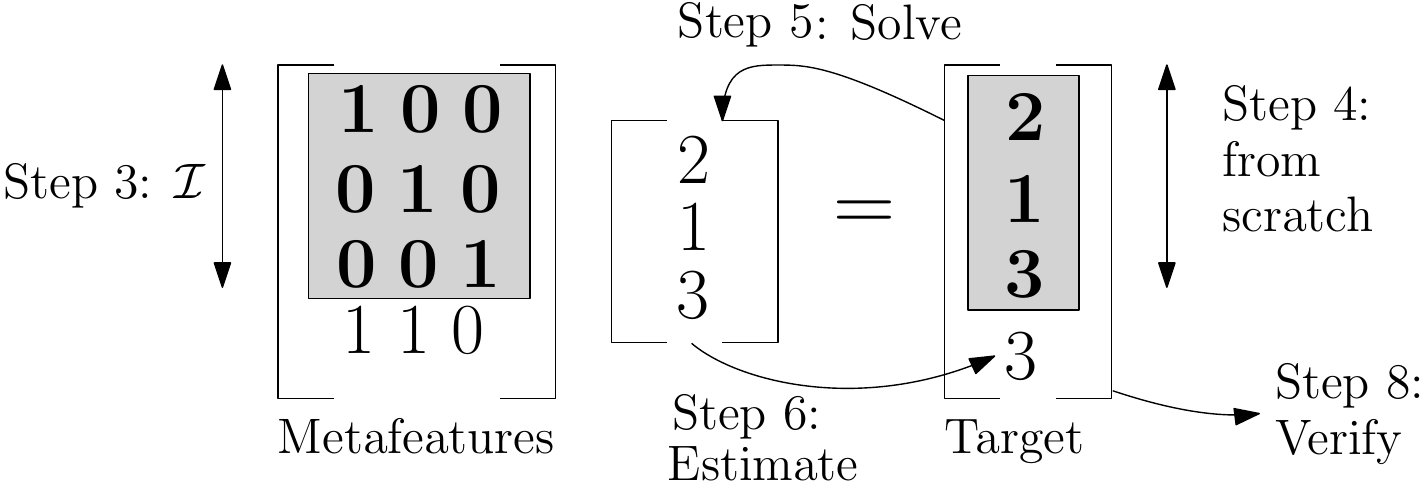}
\caption{\textbf{Illustration of \LFD{} Algorithm~\ref{alg:lfd-prod-mon}}}
\label{fig:monomials}
\end{figure}

\begin{algorithm}
\caption{\ID{} - Monomials}
\label{alg:id-prod-mon}
\begin{algorithmic}[1]
\STATE Input: Old representation $\Fold$ and $\G$ learned from scratch
\STATE Return $\tildeF = [\Fold, \G]$
\end{algorithmic}
\end{algorithm}

\begin{algorithm}
\caption{\LFD{} - Learning a Monomial from Metafeatures}
\label{alg:lfd-prod-mon}
\begin{algorithmic}[1]
\STATE Input: Metafeatures $\tildeF = [ \tilde{\f}_1, \hdots, \tilde{\f}_k ]$ ($k \leq K$),  sample set $\sample$ of size $\samplesize$. 
\STATE Halt with failure if $\tildeF$ is empty.
\STATE Let $\I$ be the indices of those rows in $\tilde{\F}$ that are linearly independent and let  $\tildeF[\I]$ be the corresponding $k \times k$ sub-matrix of $\tildeF$. 
\STATE Examine features $\I$ on all samples and use Lemma~\ref{lem:g-hat-g} to learn and round off estimates $\tilde{\g}_{i}$ for each $i \in \I$.
\STATE Solve for $\w_{\tildeF[\I]}(\G[\I])$ in $\tildeF[\I] \w_{\tildeF[\I]}(\G[\I]) = \G[\I]$. If no solution exists, halt with failure.
\STATE Estimate $\tilde{\G} \gets \tildeF \w_{\tildeF[\I]}(\G[\I])$.
\STATE Halt with failure if the degree of $\tilde{\G}$ is greater than $d$.
\STATE \label{step:verify} Draw a single sample $(\x, P_{\G}(\x))$, examine the features relevant to $\tilde{\G}$. If $P_{\G}(\x) \neq P_{\tilde{\G}}(\x)$, halt with failure.
\STATE Return $\tilde{\G}$.
\end{algorithmic}
\end{algorithm}

\begin{restatable}{theorem}{monomials}
\label{thm:monomials}
The  (\LFD{} Algorithm~\ref{alg:lfd-prod-mon}, \ID{} Algorithm~\ref{alg:id-prod-mon})-protocol for monomials makes $\bigo{S(KN + mK)+md}$ feature evaluations overall and runs in time $poly(m,N,K,\samplesize,d)$. 
\end{restatable}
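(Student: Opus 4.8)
My plan is to prove the theorem by induction on the number of tasks processed, carrying the invariant that after $k$ scratch-learns the learned matrix $\tildeF$ has full column rank $k$ with $\C(\tildeF) \subseteq \C(\F)$. From this invariant I would derive the three quantitative claims — at most $K$ targets learned from scratch, the feature count, and the runtime — essentially as corollaries.

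First I would verify the positive direction of \LFD{}: when the current target satisfies $\G \in \C(\tildeF)$, the algorithm returns $\G$ exactly. Since $\tildeF$ has full column rank $k$, the index set $\I$ of linearly independent rows yields an invertible $k\times k$ submatrix $\tildeF[\I]$; writing $\G = \tildeF\w^\star$ and restricting to $\I$ gives $\G[\I] = \tildeF[\I]\w^\star$, so the linear solve recovers exactly $\w = (\tildeF[\I])^{-1}\G[\I] = \w^\star$ and hence $\tildeG = \tildeF\w = \G$. Here Lemma~\ref{lem:g-hat-g} supplies that the powers $\G[\I]$ are learned exactly from examining only the features $\I$, the degree check passes, and the verification in Step~\ref{step:verify} passes trivially because $\tildeG = \G$.

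Next I would handle the converse — if $\G \notin \C(\tildeF)$, \LFD{} must fail so that we fall back to scratch-learning. The computed $\tildeG$ always lies in $\C(\tildeF)$, hence $\tildeG \neq \G$ as exponent vectors, so $P_{\tildeG}$ and $P_{\G}$ are distinct functions and the single-sample check in Step~\ref{step:verify} rejects with high probability via Lemma~\ref{lem:pit}. Given both directions, \ID{} appends each scratch-learned target as a fresh column that is linearly independent of the existing ones (precisely because it lay outside $\C(\tildeF)$), so the rank strictly increases and the invariant is preserved; since $\C(\tildeF)\subseteq\C(\F)$ has dimension at most $K$, at most $K$ targets are ever learned from scratch, and once rank $K$ is reached $\C(\tildeF)=\C(\F)$ and no further failures (nor verification samples) occur. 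The feature count then follows by summing $\bigo{SN}$ over the $\le K$ scratch-learns with $\bigo{SK}$ (the features $\I$ on all samples) plus $\le d$ (the single verification sample) over the $m$ calls to \LFD{}, giving $\bigo{S(KN+mK)+md}$; the runtime is polynomial since every step is a Gaussian elimination to locate $\I$, a $k\times k$ linear solve, a matrix--vector product, or a monomial evaluation.

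The hard part will be the negative direction of \LFD{} correctness: certifying that a single sample reliably exposes $\G \notin \C(\tildeF)$. This rests entirely on the two distributional lemmas — Lemma~\ref{lem:g-hat-g} for exact recovery of the powers and Lemma~\ref{lem:pit} for separating distinct monomials under a not-too-concentrated product distribution — and will require a union bound over the $m$ tasks and the $\bigo{SN}$ power estimates to fix the sample size $S$, which is the source of the logarithmic factors suppressed in Table~\ref{table:upper}. A secondary technicality I would need to dispatch is that when $\G \notin \C(\tildeF)$ the candidate $\tildeG$ may carry negative or non-integer exponents; I must confirm it is caught by the degree check or by the verification rather than spuriously passing.
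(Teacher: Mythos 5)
Your proposal is correct and follows essentially the same route as the paper: your two-directional analysis of \LFD{} is exactly the paper's Lemma~\ref{lem:pm-lfd} (exact recovery via the invertible submatrix $\tildeF[\I]$ and Lemma~\ref{lem:g-hat-g}, plus the single-sample identity test of Lemma~\ref{lem:pit}), and your rank-$k$ induction invariant is just the contrapositive packaging of the paper's contradiction argument that more than $K$ scratch-learns would yield $K+1$ linearly independent columns inside the $K$-dimensional space $\C(\F)$. The two technicalities you flag are the right ones: the paper indeed handles the sample-size/union-bound issue by applying Lemma~\ref{lem:pm-lfd} over the $m$ tasks, and your concern about $\tildeG$ carrying negative or non-integer exponents is a genuine point the paper glosses over (Lemma~\ref{lem:pit} as stated concerns integer-exponent monomials, though its conclusion extends since $P_{\tildeG}=P_{\G}$ would force a nontrivial linear relation among the $\log x_i$, an event of probability zero under the assumed densities).
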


\begin{proof}  
We show in Lemma~\ref{lem:pm-lfd} that with high probability $1 - \bigo{\frac{\delta}{m}}$ for any given target\footnote{It is only with high probability because the algorithm for learning the power of a particular features works correctly only with high probability.}, if we add a metafeature to $\tildeF$ then this increases the rank of $\tildeF$. 
Applying Lemma~\ref{lem:pm-lfd} over at most $m$ problems, this then holds over the whole sequence of $m$ problems. Assume we fail to learn from our representation on more than $K$ targets.	 This means that there will be at least $K+1$ targets (the columns of $\tildeF$) that are linearly independent. However,  since all targets belong to $\C(\F)$, there cannot be more than $K$ targets that are linearly independent. Thus, we achieve a contradiction. Now, since we learn only at most $K$ targets from scratch, applying Theorem~\ref{thm:pm-scratch} (from Appendix~\ref{app:pm}) over these we get that we learn them correctly with probability $1- \bigo{\delta}$. Also, since $\tildeF$ has at most $K$ columns, from Lemma~\ref{lem:pm-lfd} we have that each time we learn using the representation, we examine $K$ features per example. Besides, we examine $d$ features that are relevant to $\G$ in Step~\ref{step:verify}. 
\end{proof}

\begin{lemma}
\label{lem:pm-lfd}
Let $\tildeF$ be an $N \times k$ matrix. Then, with high probability $1 - \bigo{\frac{\delta}{m}}$, a) if $\G \in \C(\tildeF)$, then Algorithm~\ref{alg:lfd-prod-mon} correctly learns and outputs $\tilde{\G}=\G$ b) if Algorithm~\ref{alg:lfd-prod-mon} does output some $\tilde{\G}$, then $\tilde{\G}= \G$, c) Algorithm~\ref{alg:lfd-prod-mon} examines only at most $k$ features per sample point and at most $d$ features on a single sample.
\end{lemma}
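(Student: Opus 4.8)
The plan is to handle the three parts in order, using the two distributional guarantees---Lemma~\ref{lem:g-hat-g} for learning individual powers and Lemma~\ref{lem:pit} for the verification step---as black boxes, and reducing everything else to linear algebra on the chosen rows $\I$. The one structural fact underlying all three parts is that, by the choice of $\I$ in Step~3, the submatrix $\tildeF[\I]$ is a $k \times k$ \emph{invertible} matrix.

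For part (a), I would assume $\G \in \C(\tildeF)$, so $\G = \tildeF \w^{\star}$ for some weight vector $\w^{\star}$. Restricting to the rows in $\I$ gives $\G[\I] = \tildeF[\I]\,\w^{\star}$, and since $\tildeF[\I]$ is invertible this system has the unique solution $\w^{\star} = (\tildeF[\I])^{-1}\G[\I]$. Invoking Lemma~\ref{lem:g-hat-g} on each of the $k \le K$ features in $\I$ (Step~4), with high probability every rounded estimate $\tilde{\g}_i$ equals the true power, so Step~5 recovers exactly this $\w^{\star}$ and Step~6 reconstructs $\tildeG = \tildeF \w^{\star} = \G$. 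Because $\G$ is a genuine degree-$d$ target, the degree check in Step~7 passes, and the verification in Step~8 trivially succeeds since $P_{\G}(\x) = P_{\tildeG}(\x)$; hence the algorithm outputs $\tildeG = \G$ in Step~9.

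For part (b), the entire burden is soundness of the verification step, and here I would lean completely on Lemma~\ref{lem:pit}. Suppose the algorithm reaches Step~9 and outputs some $\tildeG$; then $P_{\G}(\x) = P_{\tildeG}(\x)$ on the single fresh sample $\x$ drawn in Step~8. If $\tildeG \neq \G$, then $P_{\G}/P_{\tildeG} = P_{\G - \tildeG}$ is a nonconstant (Laurent) monomial, which by Lemma~\ref{lem:pit} differs from the constant $1$ with high probability under the not-too-concentrated product distribution; equivalently, $P_{\G}$ and $P_{\tildeG}$ agree on a random point only with small probability. Taking the contrapositive, agreement on $\x$ forces $\tildeG = \G$ except with probability $\bigo{\frac{\delta}{m}}$. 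Note this part does not even require $\G[\I]$ to have been recovered correctly---the verification is self-certifying.

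For part (c) I would simply count feature evaluations: Step~4 probes exactly the $|\I| = k$ features of $\I$ on every one of the $\samplesize$ samples, while Step~8 probes only the at most $d$ features relevant to $\tildeG$ (whose degree is at most $d$) on a single sample, giving the claimed $k$ features per sample plus $d$ extra on one sample. Finally, the $1 - \bigo{\frac{\delta}{m}}$ bound follows from a union bound over the $k \le K$ invocations of Lemma~\ref{lem:g-hat-g} together with the single application of Lemma~\ref{lem:pit}. I expect the main obstacle to be the verification analysis of part (b): making precise that a single random draw suffices to distinguish two distinct monomials, and checking that the ``not too concentrated'' condition on the product distribution simultaneously supports both the power-recovery guarantee of Lemma~\ref{lem:g-hat-g} and the identity-testing guarantee of Lemma~\ref{lem:pit}, so that all failure probabilities can be collected into $\bigo{\frac{\delta}{m}}$.
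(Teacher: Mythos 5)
Your proposal is correct and follows essentially the same route as the paper's proof: part (a) via the invertibility of $\tildeF[\I]$ and uniqueness of the weight vector combined with Lemma~\ref{lem:g-hat-g}, part (b) via the single-sample identity test of Lemma~\ref{lem:pit}, and part (c) by direct counting of probes in Steps 4 and 8. Your added observation that the verification step is self-certifying (so soundness holds even if the power estimates fail) is a slight refinement of the paper's framing, which only discusses the case $\G \notin \C(\tildeF)$, but the substance of the argument is the same.
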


\begin{proof}
a.  Given that $\tildeF$ is of rank $k$, then if $\G \in \C(\tildeF)$, there exists a unique solution for $\w_\tildeF(\G)$ in $\tildeF \w_\tildeF(\G) = \G$. Note that this is a system of $N$ linear equations in $k$.  Therefore, if the Algorithm picked any set of $k$ linearly independent rows $\I = \{i_1, i_2, \hdots i_k\}$ from $\tildeF$, there must exist a unique solution to $\tildeF[\I] \w_{\tildeF[\I]}(\G[\I]) = \G[\I]$ where the solution is $ \w_{\tildeF[\I]}(\G[\I]) =  \w_{\tildeF}(\G)$. Thus, solving this system will give us the value of $\w_{\tildeF}(\G)$ from which we can compute $\G$ correctly using $\tildeF \w_\tildeF(\G) = \G$. This however requires that we determine the values of $\g_{i_1}, \g_{i_2}, \hdots, \g_{i_k}$ from scratch, which we can do accurately with high probability of $1- \bigo{\frac{\delta}{m}}$ from  Lemma~\ref{lem:g-hat-g} (from Appendix~\ref{app:pm}) using polynomially many samples. 

b.  To prove our second claim, observe that the only event in which the learner may potentially have an incorrect output is when $\G \notin \C(\tildeF)$ but we still do learn a $\w_{\tildeF[\I]}$ because it so happens that $\G[I] \in \C(\tildeF[\I])$. However, $\tilde{\G} = \tildeF\w_{\tildeF[\I]}(\G[\I]) \neq \G$. If $\tilde{g}$ has a degree greater than $d$, the algorithm halts with failure. Otherwise, we can show using Lemma~\ref{lem:pit} (see Appendix~\ref{app:pm}) that by drawing a single sample and checking whether $P_{\tilde{\G}}(\x) = P_{{\G}}(\x)$ we can conclude whether $\G = \tilde{\G}$.

c. This follows directly from the design of the algorithm: we examine only $K$ features on all samples, and then on a single new sample we examine features relevant to $\tildeG$ provided $\tildeG$ has degree at most $d$.\end{proof}

\subsection{Polynomials}

In this section, we study lifelong learning of real-valued polynomial targets each of which is a sum of at most $t$ degree-$d$ monomials. Similar to the Boolean model in \citet{eff-rep}, our belief is that there exists a set of monomial metafeatures such that each monomial in the polynomial can be expressed as a product of these metafeatures like we described in the previous section. As an example, given $\F = \{ x_1 x_2, x_1^2x_3, \hdots \}$, one possible target is $3(x_1 x_2)( x_1^2x_3) - 5(x_1 x_2)^2( x_1^2x_3) $. Again, we assume that each $\distr^{(j)}$ is a product distribution over $\mathbb{R}^N$. Since polynomial learning is a hard problem, we will have to make a strong assumption that 
each $\distr^{(j)}$ is {\em known}, which then enables us to adopt the polynomial learning technique from \cite{andoni}. Note that we can relax this assumption when all the distributions are common (like it is assumed in \citealp{eff-rep}), so that the common distribution can first be learned using $\bigo{poly(N)}$ feature evaluations on unlabeled examples. However, if the distributions were all different, learning them may need $\bigo{poly(mN)}$ feature evaluations, which would be feature-inefficient. 

Formally, for any input  $\x \in \mathbb{R}^N$, we denote the output of a $t$-sparse $d$-degree target polynomial $\setG = \{(\G_1, a_{\G_1} ), (\G_2, a_{\G_2} ),  \hdots  \}$   ($|\setG| \leq t$) by the function $P_{\setG}(\x) =\sum_{(\G, a_{\G}) \in \setG} a_{\G} P_{\G}(\x)$ where for each $(\G, a_{\G}) \in \setG$, $\G$ is a monomial of degree $d$ and co-efficient $a_{\G} \in \mathbb{R}$. Our belief is that  there exists a set of monomial metafeatures $\F$, and each polynomial can be represented as a sum of monomials, each of which can be represented using $\F$ as described in Section~\ref{sec:monomials}. More formally, a polynomial $\setG$ can be represented using $\F$ if for each $(\G, a_{\G}) \in \setG$, $\G \in \C(\F)$.  More compactly, $\setG^{(j)} \subset \C(\F) \times \mathbb{R}$. Then, our problem setup is as follows.

\begin{assumption}[\textbf{Lifelong polynomial learning}] \label{ass:polynomial-general} The $m$  $d$-degree $t$-sparse targets $\setG^{(j)}$ and data $\sample^{(j)}$  (each of at most $S$ examples) satisfy the following conditions:
\vspace{-6pt}
\begin{enumerate}
\item There exists an unknown $N\times K$ matrix $\F$ ($K \ll N$) such that each $\setG^{(j)} \in \C(\F) \times \mathbb{R}$.
\item The samples in $\sample^{(j)}$ are drawn i.i.d from a \textbf{known} product distribution $\distr^{(j)}$ \footnote{This is the model considered in \cite{andoni}. An upper bound on $\samplesize$ can be found in \cite{andoni}.}. 
\end{enumerate}
\end{assumption}

\subsubsection{Learning a polynomial from scratch}

We now briefly discuss the algorithm in \citet{andoni} for learning a polynomial from scratch from a known distribution. The basic idea is to use correlations between the target and some cleverly chosen functions to detect the presence of different monomials in $\setG$. For the sake of convenience, assume there exist correlation oracles that when provided as input some function $P'$, return the exact value of the correlations $\langle P'(\x), P_{\setG}(x)\rangle$, $\langle P'(\x), P^2_{\setG}(x)\rangle$  etc., In practice these oracles can be replaced by approximate estimates based on the sample $\sample$. We will limit our analysis to the exact scenario noting that it can be extended to the sample-based approach in a manner similar to \cite{andoni}. Our guarantees will then hold good with high probability, given sufficiently many samples. 

To simplify the discussion we will assume like in \cite{andoni} that the distribution over each variable is identical i.e., $\distr = \mu^\N$.  Then, as a first step, given $\distr$, the learner creates an inventory of polynomials in each variable $x_i$ such that these polynomials represent an ``orthornormal bases'' with respect to $\distr$.  More formally, the inventory will consist of polynomials $H_{d'}(x_i)$ of degree $d'$ (identical for each $i \in [N]$) for each $0 \leq d' \leq d$, such that $\E[H_{d'}(x_i) H_{d''}(x_i)]$ is zero when $d' \neq d''$ and is one when $d' = d''$.  

Equipped with this machinery, we then set out to perform $t$ iterations extracting one monomial from $\setG$ at a time. Assume that from the iterations performed so far, we have extracted a set of monomials and their coefficients $\tilde{\setG} \subseteq \setG$. Now, for the next iteration, we first find the largest power of $x_1$ that is present in $\setG - \tilde{\setG}$ by testing whether $\langle H_{2d'}(x_1), (P_{\setG} - P_{\tilde{\setG}})^2  \rangle > 0$ for $d' = d, d-1, \hdots$ in that order. These tests detect the presence of $x_1^{d}$, $x_1^{d-1}, \hdots$ respectively. We stop when the test is positive for some $x_1^{d_1}$. The curious reader can refer \cite{andoni} to understand why this particular test works, but all we need to know for our discussion is that if these tests are done in this particular order, we are guaranteed to find the highest power of $x_1^{d_1}$ in $\setG - \tilde{\setG}$.  Then, we find the largest power of $x_2$ that ``co-occurs'' with $x_1^{d_1}$ in some monomial, by testing whether $\langle H_{2d_1}(x_1) H_{2d'}(x_2), (P_{\setG} - P_{\tilde{\setG}})^2  \rangle > 0$ for $d' = d, d-1, \hdots$ to detect  the presence of $x_1^{d_1}x_2^d, x_1^{d_1}x_2^{d-1}, \hdots$ and so on in that particular order. In this manner, the algorithm builds a monomial over $N$ sub-iterations which turns out to be the \textit{lexicographically largest} $\G$ present in $\setG - \tilde{\setG}$. Now, to compute the co-efficient $a_{\G}$ we find $\langle \prod_{i=1}^{N} ({b_{g_{i}}}{H_{g_i}(x_i)}),  P_{\setG} \rangle$ where $b_{g_i}$ is the co-efficient of $x_i^{g_i}$ in $H_{g_i}(x_i)$. The algorithm then adds $(a_{\G}, \G)$ to $\tilde{\setG}$ before proceeding to the next of $t$ iterations.

The above summary differs from that original algorithm presented in \cite{andoni} in the precise quantity that it extracts in each iteration. \cite{andoni} consider a representation of the polynomial in the orthornormal bases such that it is a weighted sum of terms of the form $H_{d_1}(x_1) H_{d_2}(x_2) \hdots H_{d_N}(x_N)$, and in each iteration they extract one such term. We however use the representation in the orthonormal bases only to detect the lexicographically largest monomial and its corresponding co-efficient and then remove the monomial itself.\\

\subsubsection{Lifelong Polynomial Learning} 
As a baseline in the lifelong learning model, we can learn the targets by making $\bigo{\samplesize(KN + mKd)}$ feature evaluations by simply remembering what features have been seen so far  (Theorem~\ref{thm:naive-poly} in Appendix~\ref{app:poly}).
Below, we present an approach 
that makes only $\bigo{\samplesize(KN + m(K+td))}$ feature evaluations. This is an improvement for sparse polynomials $t < K$ e.g., when $t = \bigo{1}$.

 The high level idea is to maintain a metafeature set of ``linearly independent monomials'' picked from previously seen targets, like we did in the previous section. When learning a target using $\tildeF$, we perform $t$ iterations to extract the monomials, but now in each iteration we find the lexicographically largest power restricted to at most $K$ features. 
These $K$ features, say $\I$, correspond to linearly independent rows in $\tildeF$. Given the powers of these features, say $\G[\I]$, we can determine powers of all the features using the representation like we did in the case of monomials. Then, as before, we extract $\G$ from the polynomial and proceed to the next iteration. After $t$ iterations, our estimate of the polynomial is complete, so we draw a single example to verify it. If our verification fails, we learn the polynomial from scratch and update the representation with more linearly independent monomials from the learned polynomial.

Note that the restricted lexicographic search examines only a fixed set of $K$ features per example. Besides this, in each of the $t$ iterations, we evaluate $d$ features relevant to the extracted monomial, accounting for $K+td$ feature evaluations per example.

\begin{algorithm}
\caption{\ID{} - Polynomials}
\label{alg:id-poly}
\begin{algorithmic}[1]
\STATE Input: Representation $\Fold$ and a target $\setG$ learned from scratch.
\STATE $\tildeF \gets \Fold$
\FOR{$\G \in \setG$}
	\STATE If $\G \notin \C(\tildeF)$, add $\G$ as a column to $\tildeF$.
\ENDFOR
\STATE Return $\tildeF$
\end{algorithmic}
\end{algorithm}

\begin{algorithm}
\caption{\LFD{} - Learning Polynomial from Metafeatures}
\label{alg:lfd-poly}
\begin{algorithmic}[1]
\STATE Input: Metafeatures $\tildeF = [ \tilde{\f}_1, \hdots, \tilde{\f}_k ]$ ($k \leq K$), distribution $\distr$
\STATE Halt with failure if $\tildeF$ is empty.
\STATE Let $\I$ be the indices of those rows in $\tilde{\F}$ that are linearly independent and let  $\tildeF[\I]$ be the corresponding $k \times k$ sub-matrix of $\tildeF$. 
\STATE Query for only the features $\I$ on all samples.
\STATE Initialize $\tilde{\setG}$ to be empty.
\FOR{$t$ iterations}
	 \STATE Let $\G$ be the lexicographically largest monomial  in $\setG - \tilde{\setG}$ with respect to $\I$. Find $\G[\I]$ using the lexicographic search technique from \cite{andoni} using the correlation oracle (in practice, estimate this using the $\sample$). 
	 \STATE Solve for $\w_{\tildeF[\I]}(\G[\I])$ in $\tildeF[\I] \w_{\tildeF[\I]}(\G[\I]) = \G[\I]$. If no solution exists, halt with failure.
    \STATE Estimate $\tilde{\G} \gets \tildeF \w_{\tildeF[\I]}(\G[\I])$.
	\STATE Halt with failure if the degree of $\tilde{\G}$ is greater than $d$.
	\STATE $a_{\tilde{\G}} \gets \langle  \prod_{i=1}^{N} (b_{g_{i}}H_{g_i}(x_i)),  (P_{\setG}-P_{\tilde{\setG}}) \rangle$ 
	\STATE $\tilde{\setG} \gets  \tilde{\setG} \cup \{ \tilde{\G} \}$
\ENDFOR
\STATE Draw a single sample $(\x, P_{\setG}(\x))$ from $\distr$, query the $td$ features that are relevant to $\tilde{\setG}$. If   $P_{\setG}(\x)\neq P_{\tilde{\setG}}(\x)$, halt with failure.
\STATE Return $\tilde{\setG}$.
\end{algorithmic}
\end{algorithm}

\begin{restatable}{theorem}{poly}
\label{thm:poly}
 The  (\LFD{} Algorithm~\ref{alg:lfd-poly}, \ID{} Algorithm~\ref{alg:id-poly})-protocol for polynomials makes  $\bigo{S(KN + m(K+dt))}$ feature evaluations overall and runs in time $poly(m, N, K, \samplesize, t)$. 
\end{restatable}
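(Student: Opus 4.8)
The plan is to mirror the proof of Theorem~\ref{thm:monomials} exactly, replacing the single-monomial guarantee of Lemma~\ref{lem:pm-lfd} with an analogous guarantee for \LFD{} Algorithm~\ref{alg:lfd-poly} and then closing with the same rank-counting argument. Concretely, I would first prove a lemma stating that, for an $N\times k$ matrix $\tildeF$ of rank $k$: (a) if $\setG \subset \C(\tildeF)\times\mathbb{R}$, then Algorithm~\ref{alg:lfd-poly} outputs $\tilde{\setG} = \setG$; (b) if it outputs any $\tilde{\setG}$ without halting, then $\tilde{\setG} = \setG$; and (c) it queries at most $k\le K$ features on every sample plus at most $td$ additional features on a single sample. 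Everything in Theorem~\ref{thm:poly} then follows from this lemma together with the observation that $\C(\F)$ has dimension at most $K$.

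The core of part (a) is that when $\I$ indexes $k$ linearly independent rows of $\tildeF$, the restriction map $\G \mapsto \G[\I]$ is injective on $\C(\tildeF)$: any $\G\in\C(\tildeF)$ is $\tildeF\w$ for a unique $\w$, and since $\tildeF[\I]$ is invertible, $\G = \tildeF(\tildeF[\I])^{-1}\G[\I]$ is recovered from $\G[\I]$ alone. Hence the distinct monomials of $\setG$ have distinct $\I$-projections, and the restricted lexicographic peeling of \cite{andoni} can isolate them one at a time: because $\distr$ is a \emph{known} product distribution and the $\{H_{d'}\}$ form an orthonormal basis, each residual-squared correlation test can be carried out with a test function supported on $\I$, the contributions of the features outside $\I$ being corrected for using $\distr$; the reconstruction $\tilde{\G} = \tildeF(\tildeF[\I])^{-1}\G[\I]$ and the analogous coefficient correlation then recover each monomial and its coefficient exactly. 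For part (b), if the target is not representable, the linear solve or degree check may still pass and yield some $\tilde{\setG}\ne\setG$; but then $P_{\tilde{\setG}}$ and $P_{\setG}$ are distinct polynomials of degree at most $d$, so by Lemma~\ref{lem:pit} the single verification sample drawn from the not-too-concentrated $\distr$ separates them and the algorithm halts with failure. Part (c) reads off the algorithm: the restricted search touches only the $\le K$ coordinates of $\I$ on each sample, and only the final verification touches the $\le td$ features relevant to $\tilde{\setG}$, on one sample.

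With the lemma established, the counting is identical to the monomial case. Arguing by contradiction, whenever \LFD{} fails on a target, the contrapositive of part (a) forces some monomial of that target to lie outside $\C(\tildeF)$; \ID{} Algorithm~\ref{alg:id-poly} then adds such a monomial as a new column, strictly increasing $\mathrm{rank}(\tildeF)$. Since every monomial of every target lies in $\C(\F)$, whose dimension is at most $K$, the rank can grow at most $K$ times, so at most $K$ targets are learned from scratch. The $K$ scratch targets cost $O(SKN)$ (at most $N$ features on each of $S$ examples), and each target learned from the representation costs, by part (c), $O(K)$ features on each of $S$ examples plus $td$ features on one example; summing over the stream gives $O(SKN) + O(SmK) + O(mtd) = O(S(KN + m(K+dt)))$ feature evaluations, using $S\ge 1$ to absorb the $mtd$ term. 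The running time is polynomial because each \LFD{} call performs $t$ peeling iterations, each using $O(Kd)$ correlation tests over the $K$ coordinates of $\I$ and one $k\times k$ linear solve, while each of the $K$ scratch calls runs the \cite{andoni} learner in $poly(N,d,t,S)$ time. As in Lemma~\ref{lem:pm-lfd}, I would carry the high-probability caveat through: in the sample-based realization each correlation estimate is correct only with probability $1-O(\delta/m)$, so a union bound over the $m$ tasks keeps every step correct over the whole stream with probability $1-O(\delta)$, while in the exact correlation-oracle model the guarantees are unconditional.

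The step I expect to be the main obstacle is part (a), and specifically the claim that restricting the \cite{andoni} extraction to the $K$ coordinates of $\I$ still isolates exactly one monomial of the target at a time and recovers its coefficient. This needs both the injectivity of the $\I$-projection on $\C(\tildeF)$ and a careful check that the residual-squared correlation tests, run with test functions supported on $\I$ and corrected by the known marginals of $\distr$, faithfully detect the lex-largest $\I$-restricted power even though the true monomials involve features outside $\I$. Once this is in place, soundness via Lemma~\ref{lem:pit}, the rank counting, and the arithmetic of the feature and time bounds are all routine.
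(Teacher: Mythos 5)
Your proposal is correct and follows essentially the same route as the paper: your key lemma is exactly the paper's Lemma~\ref{lem:poly-lfd} (correctness of \LFD{} Algorithm~\ref{alg:lfd-poly} via invertibility of $\tildeF[\I]$ and uniqueness of the monomial in $\C(\tildeF)$ with a given $\I$-projection, soundness via Lemma~\ref{lem:pit}, plus the feature accounting), and your closing rank-counting argument is precisely the paper's proof of Theorem~\ref{thm:poly}. The one place you over-claim is the assertion that the $td$ features relevant to $\tilde{\setG}$ are touched on only a single verification sample, with the residual-squared tests ``corrected for using $\distr$'': in the sample-based realization one must evaluate $P_{\tilde{\setG}}(\x)$ on every sample in order to form the residuals, which is why the paper charges $k+td$ features per sample point (part (c) of Lemma~\ref{lem:poly-lfd}) --- but this weaker accounting still yields the stated $\bigo{\samplesize(KN+m(K+dt))}$ bound, so nothing in your argument breaks.
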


\begin{proof}
Below in Lemma~\ref{lem:poly-lfd}, we show that we increase the rank of $\tildeF$ by at least one every time we fail to learn using $\tildeF$ on some target. If \LFD{} has failed on more than $K$ targets it means that there are at least $K+1$ monomials from $\C(\F)$ that were added as columns to $\tildeF$ and are linearly independent. However,  since  $\C(\F)$ is a $K$-dimensional subspace in $\mathbb{R}^{N}$, this results in a contradiction, thus proving that at most $K$ failures of \LFD{} can occur. The result then follows from Lemma~\ref{lem:poly-lfd} and the fact that $|\tildeF|$ contains only at most $K$ targets.
\end{proof}

\begin{lemma}
\label{lem:poly-lfd}
Let $\tildeF$ be an $N \times k$ matrix. Then, a) if $\setG^{(j)} \in \C(\tildeF)$, then Algorithm~\ref{alg:lfd-poly} correctly learns and outputs $\tilde{\setG}^{(j)}=\setG^{(j)}$ b) if Algorithm~\ref{alg:lfd-poly} does output some $\tilde{\setG}^{(j)}$, then $\tilde{\setG}^{(j)} = \setG^{(j)}$. Also, Algorithm~\ref{alg:lfd-poly} examines only at most $k+td$ features per sample point.
\end{lemma}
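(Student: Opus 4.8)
The plan is to mirror the proof of Lemma~\ref{lem:pm-lfd} for single monomials, lifting that argument to a $t$-sparse polynomial by tracking the $t$ iterations of the restricted lexicographic search. For part (a) I would first establish the combinatorial backbone: since $\tildeF$ has rank $k$ and $\I$ indexes $k$ linearly independent rows, the submatrix $\tildeF[\I]$ is invertible, so the restriction map $\G \mapsto \G[\I]$ is injective on $\C(\tildeF)$ — any $\G = \tildeF\w$ is recovered as $\G = \tildeF\,\tildeF[\I]^{-1}\G[\I]$. Consequently, when $\setG \in \C(\tildeF)$, distinct monomials of $\setG$ have distinct $\I$-projections, so the lexicographic order on these projections is a strict total order on the monomials of $\setG$. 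I would then argue by induction on the $t$ iterations that $\tilde{\setG}$ grows to equal $\setG$: in each iteration the restricted search from \cite{andoni}, run over only the $k$ features $\I$, returns the $\I$-projection of the unique lexicographically-largest remaining monomial in $P_{\setG} - P_{\tilde{\setG}}$; solving $\tildeF[\I]\w = \G[\I]$ and forming $\tilde{\G} = \tildeF\w$ recovers the full monomial $\G$ exactly by the injectivity above. Orthonormality of the $H$ basis under the product distribution then yields the correct coefficient $a_{\G}$, so subtracting $a_{\G}P_{\G}$ removes exactly one monomial, and after $t \geq |\setG|$ iterations all monomials have been extracted and $\tilde{\setG} = \setG$.

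For part (b) I would enumerate the ways the algorithm halts versus outputs and show that any surviving output is correct. When $\setG \notin \C(\tildeF)$ the recovered $\tilde{\G} = \tildeF\,\tildeF[\I]^{-1}\G[\I]$ need not match the true monomial whose projection was found, so $\tilde{\setG}$ may differ from $\setG$; the degree check discards the cases where $\tilde{\G}$ has degree exceeding $d$, and the crucial safeguard is the final verification step. Here I would invoke Lemma~\ref{lem:pit}: if $\tilde{\setG} \neq \setG$ as polynomials then $P_{\tilde{\setG}} - P_{\setG}$ is a nonzero polynomial, so a single sample drawn from the product distribution $\distr$ satisfies $P_{\setG}(\x) \neq P_{\tilde{\setG}}(\x)$ with high probability, forcing the algorithm to halt. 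Hence, conditioned on producing an output, $\tilde{\setG} = \setG$ with probability $1 - \bigo{\delta/m}$.

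For the feature bound I would simply tally the queries by their source. The restricted lexicographic search touches only the fixed $k$ features in $\I$ — the only features appearing in its correlation tests — and this set is reused across all $t$ iterations and all samples. Estimating each of the $t$ coefficients $a_{\tilde{\G}}$ requires evaluating $\prod_i b_{g_i}H_{g_i}(x_i)$, i.e.\ the at most $d$ features in the support of that degree-$d$ monomial, contributing $td$ further features per sample, and the single verification sample uses the $td$ features relevant to $\tilde{\setG}$. Summing gives at most $k + td$ features per sample point.

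I expect the main obstacle to be part (a), specifically justifying that running \cite{andoni}'s correlation-based extraction restricted to the $k$ features $\I$ genuinely returns the $\I$-projection of an actual remaining monomial, and does so in the correct lexicographic order. This requires connecting the marginalized correlation tests over $\I$ to Andoni's guarantees for the full variable set and leaning on the injectivity of $\G \mapsto \G[\I]$ to rule out collisions between distinct monomials' projections; once that is in place, everything else reduces to the single-monomial argument of Lemma~\ref{lem:pm-lfd} applied iteration by iteration.
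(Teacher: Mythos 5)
Your proposal follows essentially the same route as the paper's proof: uniqueness of the recovered monomial via invertibility of $\tildeF[\I]$ (the paper reuses the argument of Lemma~\ref{lem:pm-lfd} here), lexicographic extraction restricted to $\I$ per \cite{andoni}, the final-sample verification via Lemma~\ref{lem:pit} for part (b), and the same $k+td$ accounting for part (c). The one step you gloss is coefficient correctness: the inner product returns the coefficient of $\prod_{i} H_{g_i}(x_i)$ in the orthonormal-basis expansion of $P_{\setG}-P_{\tilde{\setG}}$, and the paper explicitly rules out contamination of this term by other remaining monomials --- any contributor would dominate $\G$ coordinatewise, hence be lexicographically at least $\G$ on $\I$, contradicting unique maximality --- which is precisely the ``collision'' argument you defer to your closing paragraph rather than carry out.
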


\begin{proof}
a. Assume $\setG^{(j)} \in \C(\tildeF)$. The fact that in each iteration, we find the lexicographically largest value for the features $\I$ follows directly from the discussion in \cite{andoni}. However, we do have to prove that there is a unique $\G$ in $\setG$ such that $\G[\I]$ corresponds to the above value. This follows from the proof of Lemma~\ref{lem:pm-lfd} where we showed that for $\I$ corresponding to linearly independent rows, $ \w_{\tildeF[\I]}(\G[\I]) =  \w_{\tildeF}(\G)$ and hence given $\w_{\tildeF[\I]}(\G[\I])$ there is a unique $\G \in \C(\F)$ defined by $\G = \tildeF \w_{\tildeF[\I]}(\G[\I])$. 

Now, we need to prove that we find a co-efficient $a_{\tilde{\G}}$ for the to-be-extracted monomial, that satisfies $a_{\tilde{\G}} = a_{\G}$. We first note that $ \langle  \prod_{i=1}^{N} {H_{g_i}(x_i)},  (P_{\setG}-P_{\tilde{\setG}})  \rangle$ returns the co-efficient of $\prod_{i=1}^{N} {H_{g_i}(x_i)}$ in $(P_{\setG}-P_{\tilde{\setG}})$, say $a'_{\G}$, in the basis representation of the polynomial. Next, we claim that the co-efficient $a'_{\G}$ in the bases representation is contributed to purely by the co-efficient $a_{\g}$ in the monomial representation. If there was any other monomial that contributed to $a'_{\G}$, 
 then it had to have a lexicographically larger value than $\G$ with respect to $\I$ or equal to $\G$ with respect to $\I$. However, this contradicts the fact that $\G$ was chosen to be the unique lexicographically largest value with respect to $\I$. Thus, we only need to account for the contribution of the co-efficient of $\prod_{i=1}^{N} {H_{g_i}(x_i)}$ with an extra factor of $b_{g_i}$ which corresponds to the co-efficient of $x_i^{g_i}$ within $H_{g_i}(x_i)$.

b. This follows from the proof of Lemma~\ref{lem:pm-lfd} and Lemma~\ref{lem:pit} applied to polynomials.

c.  First of all, we examine $k$ features when we query $\I$ on all samples. Now, note that when we execute the algorithm using samples for the correlation oracles, we will have to compute $P_{\tilde{\setG}}(\x)$ on each sample $\x$. This however will only require evaluation of features relevant to $\tilde{\setG}$. Since $\setG$ consists of at most $t$ monomials each of degree at most $d$, this can be only as large as $td$. 
 \end{proof}

\noindent
\textbf{Sample-based estimation}: We note that when we replace the oracles by estimation using random samples,  we should be careful about approximation errors that may affect the lifelong learning protocol. For example, if we were to infer that a monomial term exists in $\setG$, when in reality it does not, we may incorrectly add it to our representation $\tildeF$ when it should not be. However, if the co-efficients of each term in the polynomial were not too small, we can overcome this problem by learning the co-efficient of the monomial, and checking whether it is above a small threshold, before deducing that it indeed is a term in the polynomial.

\section{The Agnostic Case}
\label{sec:agnostic}
We propose a novel agnostic lifelong learning model where the learner faces $m+r$ learning tasks of which $m$ tasks are guaranteed to be related through the $K$ metafeatures in $\F$ while the other $r$ tasks are arbitrary.
Note that this is different from the conventional sense of agnostic learning where each individual task may involve model misspecification or noisy labels. What makes this challenging is that the $r$ ``bad'' targets can be chosen and placed adversarially in the stream of tasks. Since in the worst case there is no hope of minimizing feature evaluations done on the bad targets, we adopt the natural goal of reducing the feature evaluations on the training data of the $m$ good targets. 


\begin{assumption}
\label{ass:agnostic}
In the agnostic model, the learner is faced with a series of $m+r$ targets such that:
\begin{enumerate}
\item $m$ (good) targets are guaranteed to be related to each other through a set of at most $K$ metafeatures, while the remaining  $r$ (bad) targets can be adversarially chosen and placed.
\item the learner has to reduce the feature evaluations done on the samples for the $m$ related targets.
\end{enumerate}
\end{assumption}

We focus our discussion on learning decision trees with depth $d = \bigo{1}$ noting that it is straightforward to extend it to learning more general decision trees and to  other targets. In fact, in the following discussion, it may be helpful to imagine the targets to be decision stumps over just one feature and the metafeature set $\tildeF$ to simply be a set of $K$ features. Now, recall that in the original setup, $\tildeF$ consisted of $\bigo{K}$ useful metafeatures from at most $K$ targets that were learned from scratch \LFD{} failed to learn them. A problem that arises now is that $\tildeF$ may have been updated with metafeatures from bad targets. Then, even if $\tildeF$ contained $K$ metafeatures, we cannot guarantee that future good targets can be learned using $\tildeF$. What should we do then? 
 
To address this, we present two simple computationally-efficient solutions below 
that highlight an interesting trade-off between the number of targets learned from scratch and the number of features evaluated on the remaining targets. In the {\em $r$-expansion} technique, we allow the learner to update $\tildeF$ on every failure of \LFD{} allowing the representation to get as large as it can. In the {\em $r$-restart} technique, we restrict the size of the representation but however, whenever the representation is ``bad'', we erase and start learning the representation all over again. \\

\subparagraph{$r$-expansion technique} Observe that since $m$ targets belong to $\dtspace(\F)$, there exists a representation of at most $\bigo{K+r}$ metafeatures that is sufficient to describe all the $m+r$ targets: a representation that is the union of $\tildeF$ and the $r$ bad targets as they are.  Thus, we allow the lifelong learner to update $\tildeF$ whenever its $\LFD{}$ fails, which would result in a representation of at most $\bigo{K+r}$ metafeatures. \LFD{} will fail on at most $K$ good targets (and possibly on all the $r$ bad targets which we do not care about) and learn the rest successfully evaluating $\bigo{K+r}$ features per example. Note that this protocol is essentially identical to the original protocol in Algorithm~\ref{alg:generic}. 


 \subparagraph{$r$-restart technique}  Alternatively, we enforce $|\tildeF| \leq K$ as before but when \LFD{} fails on a $K+1^{th}$ target, we learn that target from scratch after which we simply erase $\tildeF$ and effectively restart our lifelong learning from the next task. Every time \LFD{} fails on a $K+1$th target after the most recent restart, we restart similarly.  This technique learns more targets from scratch, $\bigo{rK}$ targets in particular, but evaluates only $\bigo{K}$ features per example on the remaining targets. The protocol is described more formally in Algorithm~\ref{alg:restart}. \\

\begin{algorithm}[H]
\caption{$r$-restart based $(\mathcal{A}_{\sf UR}, \mathcal{A}_{\sf IR})$-protocol for  agnostic lifelong learning in the model of Problem Setup~\ref{ass:agnostic}}
\label{alg:restart}
\begin{algorithmic}[1]
\STATE \textbf{Input}: A sequence of $m+r$ training sets $\sample^{(1)}, \sample^{(2)}, \hdots, $
corresponding to targets $g^{(1)}, g^{(2)}, \hdots $, $m$ of which can be represented using an unknown set $\F$ of $K$ metafeatures.
\STATE Let $\tildeF$ be our current learned representation. Initialize $\tildeF$ to be empty.
\FOR{$j=1,2, \hdots m+r$}
	\STATE Using $\tildeF$ and $S^{(j)}$, attempt to cheaply learn $g^{(j)}$ with 
\LFD{} algorithm $\mathcal{A}_{\sf UR}$.
	\IF{learning was not successful}
		\STATE Extract all features in $\sample^{(j)}$ and learn $g^{(j)}$ from scratch.
		\STATE If {$|\tildeF| = K$}, assign an empty representation to $\tildeF$.
		\STATE Provide $\tildeF$ and  $g^{(j)}$ as input to \ID{}  algorithm $\mathcal{A}_{\sf IR}$ to update $\tildeF$.
	\ENDIF
\ENDFOR
\end{algorithmic}
\end{algorithm}

When $r = \bigo{\max\left( \frac{m}{N}, \frac{KN}{m}, K \right)}$, it is easy to see that one of these two techniques makes only $\bigo{\samplesize(KN + mK)}$ feature evaluations, which is as good as the performance when $r=0$. To deal with larger values of $r$, we describe a combined technique that deals with the trade off carefully and does better than both the above:

\begin{restatable}{theorem}{agnostic}
 \label{thm:agnostic}
In the agnostic model where we face $m+r$ decision tree targets such that $m$ trees belong to $\dtspace(\F)$, the number of feature evaluations on the training data for the $m$ trees:
\vspace{-5pt}
\begin{itemize}
\item  the $r$-expansion technique is $\bigo{\samplesize(KN + m(K+r))}$.
 \item  the $r$-restart technique is $\bigo{\samplesize(rKN+mK)}$.
 \item a combination of $c$-expansion and $r/c$-restart  is  $O(\samplesize(\sqrt{rKNm} + Km))$, for $c = \sqrt{rKN/m}$ provided $r = \bigomega{ \max\left( m/n, KN/m, K  \right) }$.
\end{itemize}
 \end{restatable}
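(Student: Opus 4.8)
The plan is to charge feature evaluations only to the $m$ good targets, as required by Problem Setup~\ref{ass:agnostic}, and to reduce the whole analysis to two counts: the number of good targets learned from scratch (each costing $\bigo{\samplesize N}$) and the per-example cost $\bigo{|\tildeF|+d}$ that \LFD{} spends on each of the at most $m$ remaining good targets (Lemma~\ref{lem:lfd-dt-feature-efficiency}). The first thing I would establish is a robustness version of the induction in Theorem~\ref{thm:dt}: as long as $\tildeF$ is only grown and never erased, every failure of \LFD{} on a \emph{good} target captures a fresh true metafeature of $\F$. The point is that junk subtrees added from bad targets only enlarge $\dtspace(\Pref(\tildeF))$, so by Theorem~\ref{thm:lfd-dt} they can neither cause \LFD{} to output an incorrect tree nor prevent a good target from being learned once its metafeatures are present; in particular the inclusion $\dtspace(\F')\subseteq\dtspace(\Pref(\tildeF))$ used in Theorem~\ref{thm:dt} is untouched. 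Hence within any maximal stretch over which $\tildeF$ is only grown---call it a \emph{growth epoch}---at most $K$ good targets can trigger a failure. Following the paper's simplification I take $d=\bigo{1}$ and think of $\tildeF$ as a set of features, so that $|\tildeF|$ is, up to constants, the number of failures absorbed in the current epoch.

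For the \textbf{$r$-expansion} technique, $\tildeF$ is never erased, so the entire run is a single growth epoch and at most $K$ good targets are learned from scratch, costing $\bigo{\samplesize KN}$. Since each of the $\le K$ good failures and $\le r$ bad failures adds only $\bigo{d}=\bigo{1}$ subtrees, the representation never exceeds $\bigo{K+r}$ features, so \LFD{} spends $\bigo{K+r}$ evaluations per example on each of the $\le m$ remaining good targets, i.e. $\bigo{\samplesize m(K+r)}$ in all. Summing gives $\bigo{\samplesize(KN+m(K+r))}$.

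For the \textbf{$r$-restart} technique (Algorithm~\ref{alg:restart}) I would cut the run into \emph{phases} delimited by the erasures of $\tildeF$, each a growth epoch with $|\tildeF|$ capped at $K$. A completed phase uses exactly $K$ failures; if fewer than $K$ are on good targets it already contains a bad failure, and otherwise its $K$ good failures capture all of $\F$, so the subsequent restart-triggering failure cannot be on a good target and is therefore bad. Charging the ``fewer-than-$K$-good'' phases to their first internal bad failure and the ``all-good'' phases to their bad trigger---two families each consisting of distinct failures---bounds the number of completed phases by $2B\le 2r=\bigo{r}$, where $B\le r$ is the number of bad failures. Thus at most $\bigo{r}\cdot K=\bigo{rK}$ good targets are learned from scratch (cost $\bigo{\samplesize\,rKN}$), while \LFD{} costs $\bigo{K}$ per example on the rest (cost $\bigo{\samplesize mK}$), for a total of $\bigo{\samplesize(rKN+mK)}$.

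For the \textbf{combination}, the cap is raised to $K+c$ and the protocol restarts on overflow. Now each completed phase uses $K+c$ failures of which at most $K$ are good, so it contains at least $c$ bad failures, all internal and distinct across phases; hence there are only $\bigo{r/c}$ phases and at most $\bigo{r/c}\cdot K=\bigo{rK/c}$ good targets are learned from scratch (cost $\bigo{\samplesize\,rKN/c}$), while \LFD{} costs $\bigo{K+c}$ per example on the $\le m$ remaining good targets (cost $\bigo{\samplesize m(K+c)}$). The resulting total $\bigo{\samplesize(rKN/c+m(K+c))}$ is minimized by balancing $rKN/c$ against $mc$, i.e. $c=\sqrt{rKN/m}$, at which both equal $\sqrt{rKNm}$, giving $\bigo{\samplesize(\sqrt{rKNm}+mK)}$. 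The hypotheses $r=\bigomega{\max(m/N,\,KN/m,\,K)}$ are exactly what makes this $c$ admissible: $r=\bigomega{m/N}$ forces $c\ge 1$ (so the cap genuinely expands), and $r=\bigomega{KN/m}$ forces $r/c\ge 1$ (so at least one full restart epoch occurs). The step I expect to be the main obstacle is precisely this phase-counting argument---certifying that every completed phase must absorb the required number of \emph{distinct} bad targets---because it rests entirely on the robustness of the ``$\le K$ good failures per growth epoch'' invariant to adversarially interleaved junk metafeatures, which is the one place the agnostic setting genuinely departs from the clean induction of Theorem~\ref{thm:dt}.
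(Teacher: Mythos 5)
Your proposal is correct and follows essentially the same route as the paper: bound the good targets learned from scratch via an at-most-$K$-good-failures-per-growth-epoch invariant, count phases by charging each completed restart phase to at least one (resp.\ at least $c$) distinct bad failures, and balance $\samplesize\,rKN/c$ against $\samplesize\,mc$ at $c=\sqrt{rKN/m}$ under the same admissibility conditions $c\geq 1$ and $r/c\geq 1$. The only difference is presentational: you make explicit the robustness of the Theorem~\ref{thm:dt} induction to junk metafeatures (which the paper uses implicitly), and your $2B$ charging for $r$-restart is a minor variant of the paper's direct observation that if all $K$ cap-filling failures were good then the restart-triggering failure must be bad.
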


\begin{proof}
In $r$-expansion, we allow $\tildeF$ to have as many as $\bigo{K+r}$ metafeatures. Now, every bad target may result in adding $\bigo{1}$ metafeatures to $\tildeF$ while the $m$ bad targets will result in adding $\bigo{K}$ metafeatures to $\tildeF$. Thus, we will be able to learn all but $m$
good targets using $\tildeF$ by examining only $\bigo{K+r}$ features per example i.e., $\bigo{\samplesize(rKN + mK)}$ features overall.

In $r$-restart, every time \LFD{} fails on a $K+1$th target, we learn that target from scratch and then erase $\tildeF$ effectively restarting our lifelong learning. Now, at least one of the $K+1$ trees learned from scratch must be a bad target. This is because if none of the $K$ trees that were used to update $\tildeF$ were bad, $\tildeF$ would have been rich enough to represent all the good targets. This means that the $K+1$th target has to be a bad target. Thus, every restart corresponds to a failure of \LFD{} on at least one bad target and at most $K$ good targets. Then, we will face at most $r$ such restarts, learning at most $rK$ targets from scratch during the process and the rest from only $\bigo{K}$ features per example i.e., $\bigo{\samplesize(KN + m(K+r))}$ features overall.

Now when $r = \bigo{\max\left(\frac{KN}{m}, K \right)}$ observe that $r$-expansion makes only $\bigo{\samplesize(KN+mK)}$ feature evaluations. Similarly, when $r = \bigo{\frac{m}{N}}$, $r$-restart makes $\bigo{\samplesize(KN+mK)}$ feature evaluations. This is as good as our performance when $r=0$.

To deal with $r =  \bigomega{ \max\left( \frac{m}{N}, \frac{KN}{m}, K \right)}$, we can combine the above techniques, in particular, we combine $\frac{r}{c}$-restart with $c$-expansion. That is, between every restart we allow $\tildeF$ to accommodate $\bigo{K+c}$ metafeatures and when \LFD{} fails on the $K+c+1$th target we restart the representation. Recall that each bad target may contribute $\bigo{1}$ metafeatures while all the good targets contribute to $\bigo{K}$ metafeatures. Thus, between every restart \LFD{} would have failed on at most $K$ good targets and at least $c+1$ bad targets. Since there are only $r$ bad targets, we then face only $\bigo{\frac{r}{c}}$ restarts. Since we learn only $\bigo{\frac{r}{c}} \cdot K$ targets from scratch and learn the rest by examining only $\bigo{K+c}$ features per example, we evaluate $\bigo{\samplesize(\frac{r}{c}KN + m(K+c)})$ features overall.

The value of $c$ that optimizes the above bound is $c^* = \sqrt{\frac{rKN}{m}}$ and the minimum is $\bigo{\samplesize(\sqrt{rKNm} + mK)}$. But note that $c^*$ must take a meaningful value for this bound to hold good. That is, for $c$-expansion to make sense, we need $c^* \geq 1$ and for $\frac{r}{c^*}$-restart to make sense, $\frac{r}{c^*} \geq 1$. That is, we need $c^* \in [1,r]$, which can be verified to hold good when $r =  \bigomega{ \max\left( \frac{m}{N}, \frac{KN}{m}, K  \right)}$.
\end{proof}

\section{Lower bounds}
\label{sec:lb}

We prove lower bounds on the performance of any lifelong learner under different ranges of $r$ in the agnostic model.  In particular, we prove tight lower bounds for sufficiently small and large values of $r$, ignoring other problem-specific parameters and the sample size parameter $\samplesize$ (that scaled only logarithmically with $N$ for most of our target classes). An interesting insight here is that when $r$ is too large,
 we prove that no learner is guaranteed to succeed by making $\bigo{mN}$ feature queries, which means that lifelong learning is no longer meaningful for really large values of $r$. 

Our main idea is a randomized adversary that poses decision stumps (trees with only the root node) or degree-1 monomials to the learner. In particular, we use Lemma~\ref{lem:adversary} where we show that when the adversary picks one feature at random from a pool of $N'$ features to be the decision stump/monomial, if the learner examines only $o(N')$ features, the learner will fail to identify the correct feature for the target with probability $\bigomega{1}$. Thus, for the learner to successfully complete the task, it must examine $\bigomega{N'}$ features. Then to force a learner to examine $\bigo{KN + mK}$ features, the adversary picks $K$ distinct features at random from the pool of $N$ features for the first $K$ targets. Then it assigns these $K$ features as the metafeatures  and picks the remaining targets at random from this chosen set of $K$ features. 

\begin{restatable}{theorem}{lb}
\label{thm:lb}
Let $r_{\min} = \max \left( \frac{ m}{N}, \frac{K N}{m} , K \right)$,  $r_{\max}  = \min \left( \frac{mN}{K} , \frac{(N-K)^2 m}{K N}\right)$. In the agnostic model of Section~\ref{sec:agnostic}, 
there exists an adversary such that, on the $m$ good trees, any lifelong learner makes:
\vspace{-4pt}
\begin{itemize}
\item $\bigomega{NK + Km}$ feature evaluations when $0 \leq r \leq r_{\min}$.
\item $\bigomega{\max\left( \frac{r}{N-K}, 1 \right) KN + Km}$  feature evaluations when $r_{\min} \leq r \leq r_{\max}$.
\item $\bigomega{mN}$  feature evaluations when $r_{\max} \leq r$.
\end{itemize}
\end{restatable}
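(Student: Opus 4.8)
The plan is to prove every part of this theorem by Yao's principle: for each range of $r$ I would exhibit a single randomized adversary (a distribution over instances) and lower bound the \emph{expected} number of feature evaluations that any deterministic learner must spend on the good targets in order to output all targets correctly with high probability. The atomic fact driving all three bounds is Lemma~\ref{lem:adversary}: if, conditioned on the learner's history, the feature defining the current stump is uniform over a pool of $N'$ still-indistinguishable candidates, then examining $o(N')$ of them identifies it with probability $o(1)$, so each such target forces $\bigomega{N'}$ evaluations. The entire argument therefore reduces to tracking the size of this effective pool, good target by good target, as a function of $r$. The modeling point I would emphasize throughout is that evaluating feature $i$ on a stump only reveals whether $i$ is \emph{that} target's feature (a membership query), so the learner can neither binary-search nor directly decide whether a discovered feature belongs to the hidden metafeature set $\F$ or to a bad target.

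For the regime $0 \le r \le r_{\min}$ I would use the construction sketched before the theorem: draw $S\subseteq[N]$ with $|S|=K$ uniformly, make the first $K$ good targets $K$ fresh random features and the rest uniform over $S$. Lemma~\ref{lem:adversary} gives $\bigomega{N}$ per fresh target, hence $\bigomega{KN}$ to pin down $S$; and since even a learner that already knows $S$ must search among the $K$ candidates to see which one a good target uses, every good target costs $\bigomega{K}$, for $\bigomega{mK}$ in total. Because $r_{\min}$ is chosen so that the bad targets are too few to move the leading term, inserting them cannot reduce either cost, and the bound $\bigomega{KN+mK}$ survives.

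The regime $r_{\min}\le r\le r_{\max}$ is the main case. I would keep $S$ hidden of size $K$, let each good target be a uniform feature of $S$ and each bad target a uniform feature of $[N]\setminus S$, and interleave them into $\bigomega{r/(N-K)}$ epochs of $\Theta(N-K)$ bad targets each, so that marginally every target looks like a uniform feature of $[N]$. The aim is to show that within each epoch the learner is forced to re-identify the $K$ metafeatures essentially from scratch at cost $\bigomega{KN}$, giving $\bigomega{\frac{r}{N-K}KN}$ overall, which together with the baseline $\bigomega{mK}$ matches the claim. The mechanism is a recurrence-rate comparison: a feature of $S$ recurs at rate $\approx m/K$ while a bad feature recurs at rate $\approx r/(N-K)$, and in this regime these are comparable, so the $\Theta(N-K)$ fresh decoys of an epoch statistically drown out the learner's previously confirmed metafeatures — any small candidate set $C$ it carries across an epoch boundary is, with constant probability, either polluted by decoys (so good targets pay $\approx|C|$) or missing an element of $S$ (so those good targets pay $\bigomega{N}$). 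When $r\ge r_{\max}$ the decoy rate $r/(N-K)$ dominates the metafeature rate, the effective pool stays $\bigomega{N}$ for \emph{every} good target, each costs $\bigomega{N}$, and the total is $\bigomega{mN}$ — which matches the trivial cost of learning all targets from scratch and is thus tight. The thresholds are exactly the values at which $\max(\frac{r}{N-K},1)KN+Km$ meets $KN+Km$ on the left and $mN$ on the right.

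The first and third regimes are clean pool-size arguments; the genuine obstacle is the middle regime, specifically making rigorous that the learner cannot cheaply carry its knowledge of the fixed set $S$ across epochs — naively it could learn $S$ once for $KN$ and answer every later good target with $K$ queries, so I must show the decoys truly re-open $\bigomega{N}$ uncertainty. I would handle this by a coupling/indistinguishability argument that pairs the true instance with one in which a decoy the learner has confirmed is swapped for an unfound element of $S$: the two transcripts are nearly identical unless the learner has already spent $\bigomega{N-K}$ discriminating queries on that slot, so its posterior over $\F$ cannot concentrate and a constant fraction of good targets in each epoch remain over an $\bigomega{N}$ pool. Pinning the dependence to exactly $\frac{r}{N-K}$ — rather than a weaker $\min(r,N-K)$-type quantity — is the delicate quantitative step, and I expect it to be where most of the work lies.
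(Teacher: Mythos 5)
Your overall framework (Yao's principle plus Lemma~\ref{lem:adversary}) and your regime-1 construction coincide with the paper's, but your constructions for the two larger regimes contain a genuine gap---and it is exactly the one you flag as "the delicate quantitative step." The flaw is structural: you fix the metafeature set $S$ once and for all and draw every good target from $S$ and every bad target from $[N]\setminus S$. This leaks $S$ through recurrence statistics over the whole stream, and no epoch structure or coupling can undo it. Concretely, in your intermediate regime a fixed $s\in S$ is the current target's feature with probability $\frac{m}{(m+r)K}$ per step, while a fixed decoy has probability $\frac{r}{(m+r)(N-K)}$; since $r\le r_{\max}\le (N-K)^2m/(KN)$, these rates differ by a factor of at least $N/(N-K)>1$ everywhere in the regime. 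A learner that records how often each discovered feature has been a target's answer and probes candidates in decreasing order of that count will, once counts concentrate, have the $K$ elements of $S$ at the head of its list and pay only $\bigo{K}$ per good target thereafter. For instance with $N=10^3$, $K=10$, $m=10^6$, $r=5\times 10^7$ (inside $[r_{\min},r_{\max}]$), the counts separate after a vanishing fraction of the stream, and this learner spends $\bigo{KN+mK}\approx 10^7$ evaluations on good targets, far below your claimed $\bigomega{\frac{r}{N-K}KN+Km}\approx 5\times 10^8$. Your proposed repair---coupling the true instance with one in which a confirmed decoy is swapped with an unfound element of $S$---cannot work, because the likelihood ratio between the two worlds is not near $1$: it drifts geometrically in the number of occurrences of the swapped features, precisely because their recurrence rates differ. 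The same defect breaks your large-$r$ argument (there the learner identifies $S$ as the features whose counts sit near $m/K$ rather than near $r/(N-K)$), except at the single rate-matching point $r=m(N-K)/K$.

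The idea you are missing---and the one the paper uses---is to make the good/bad designation \emph{post hoc}. For the intermediate regime the paper's adversary presents $m'=rN/(N-K)$ targets whose features are i.i.d.\ uniform over all $N$ features, so during this phase there is no hidden set to learn and no cross-target correlation to exploit; Lemma~\ref{lem:adversary} forces $\bigomega{N}$ evaluations on every one of these targets. Only afterwards does the adversary pick $K$ features uniformly at random to be the metafeatures, retroactively marking $\bigtheta{m'K/N}=\bigtheta{rK/(N-K)}$ of the presented targets as good and $\bigtheta{r}$ as bad (a legal sequence, since $rK/(N-K)=\bigo{m}$ when $r\le r_{\max}$). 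Because this designation is independent of the learner's transcript, the learner cannot steer its spending toward targets that will later be called bad, so in expectation an $\bigomega{K/N}$ fraction of the $\bigomega{m'N}$ total first-phase cost is charged to good targets, i.e.\ $\bigomega{\frac{rK}{N-K}\cdot N}$; a second phase of $m$ targets drawn from the metafeatures adds $\bigomega{mK}$. The large-$r$ bound is proved the same way with $m'=mN/K$ (or $m'=\sqrt{rNm/K}$ in the sub-case $r<mN/K$). Without this post-hoc designation trick, the intermediate and large-$r$ bounds do not follow from your construction.
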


\begin{proof}
In Lemma~\ref{lem:adversary} we design our randomized adversary.
We prove Theorem~\ref{thm:lb} in the following three lemmas one for each range of $r$. 
First in Lemma~\ref{lem:realizable} we prove a lower bound of $\bigomega{KN + mK}$ that holds for any value of $r$. Then in Lemma~\ref{lem:intermediate} we prove a lower bound for intermediate values of $r$ and finally in Lemma~\ref{lem:large}, we prove a lower bound for large values of $r$. 
\end{proof}

\begin{lemma}
\label{lem:adversary} \textbf{(Randomized adversary)}
For a particular task, if the adversary picks a feature from a pool of $N'$ features ($N'\leq N$) to pose a single-feature target\footnote{It does not matter if the learner knows these $N'$ features or not.}, if the learner examines only $o(N')$ features, the learner will fail (i.e., pick the wrong feature) with probability $\bigomega{1}$. 
\end{lemma}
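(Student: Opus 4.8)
The plan is to exhibit a single randomized adversary and then show, by an indistinguishability (averaging) argument, that any learner respecting the probe budget must fail with constant probability. First I would have the adversary commit to a fixed label vector $y$ over the $\samplesize$ training examples and then draw the index $i^\star$ of the target feature uniformly at random from the pool of $N'$ features. The instance is built so that column $i^\star$ of the data matrix equals $y$ (so the single-feature target on $i^\star$, whether a decision stump or a degree-$1$ monomial, is consistent with the data), while every other pool column is set to a fixed vector $z \neq y$ that disagrees with $y$ on a constant fraction of the examples. This makes $i^\star$ the \emph{unique} feature consistent with the labels, so the target is identifiable if one is willing to probe all $N'$ columns; the content of the lemma is exactly the converse direction.

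The heart of the argument is the observation that the learner obtains no information about $i^\star$ until it actually probes a cell of column $i^\star$. I would first fix the learner's internal randomness (by Yao's principle it suffices to lower bound the failure probability of an arbitrary deterministic learner against the random choice of $i^\star$). As long as the learner has not yet probed column $i^\star$, every answer it receives is the corresponding entry of the fixed vector $z$, which does not depend on $i^\star$; hence the prefix of its (adaptive) query sequence up to its first probe of column $i^\star$ is a \emph{fixed} sequence of columns $c_1, c_2, \ldots$ determined entirely by the $z$-answers, and the learner probes $i^\star$ for the first time exactly at the step where $c_t = i^\star$. Let $C_0 = \{c_1, \ldots, c_B\}$ be the (fixed) set of the first $B = o(N')$ distinct columns in this sequence, so $|C_0| \le B$. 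Whenever $i^\star \notin C_0$, the learner runs entirely in the ``$z$-world'', never probes $i^\star$ within its budget, and therefore produces a fixed output column $j_0$ that is independent of $i^\star$.

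Combining the two cases then yields the bound. Since $i^\star$ is uniform over the $N'$ columns,
\[
\Pr[\text{success}] \;\le\; \Pr[i^\star \in C_0] \;+\; \Pr\!\left[j_0 = i^\star \,\middle|\, i^\star \notin C_0\right] \;\le\; \frac{B}{N'} + \frac{1}{N' - B} \;=\; o(1),
\]
so the learner fails with probability $1 - o(1) = \bigomega{1}$, and averaging back over the learner's randomness preserves this. The step I expect to be the main obstacle, and the one I would write most carefully, is the adaptivity handled in the second paragraph: one must argue that adaptively chosen probes cannot help locate $i^\star$ any faster than the fixed $z$-world sequence, which is precisely why the queried set collapses to the fixed set $C_0$ on the event $i^\star \notin C_0$. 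A secondary point to verify is that this construction is a legitimate instance for both target families and that counting feature probes, rather than label reads, is the correct cost measure, i.e.\ that being given $y$ for free leaks nothing about the index $i^\star$.
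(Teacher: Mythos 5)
Your proof is correct, and its core is the same indistinguishability argument the paper uses: if the learner never probes the target column, its output is conditionally independent of $i^\star$, so it succeeds with probability at most $o(1)$. The difference is in how explicitly the two arguments handle adaptivity and the instance construction. The paper's proof defines $\I$ as ``the set of features examined by the learner,'' asserts that all events $i^* \notin \I$ are indistinguishable, and lower-bounds the failure probability by $\sum_{i \in \I^* - \I} Pr_a(i)\,(1 - Pr_l(i)) \geq \frac{1}{N'}\left(N' - o(N') - 1\right) = \bigomega{1}$. This treats $\I$ and the learner's output distribution $Pr_l$ as fixed objects, which for an \emph{adaptive} learner is only justified by precisely the collapse argument you spell out: since every probe of a non-target column returns an answer that does not depend on $i^\star$, the learner's query sequence before it first hits the target column is a fixed sequence, so on the event $i^\star \notin C_0$ the examined set is the fixed set $C_0$ and the output is a fixed $j_0$. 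Your version also supplies an explicit hard instance (column $i^\star$ equal to the label vector $y$, all other pool columns equal to a fixed $z \neq y$), which the paper leaves implicit but which is needed for the indistinguishability claim to actually hold, and you invoke Yao's principle to dispense with the learner's internal randomness, where the paper instead reasons directly about the learner's output distribution. What the paper's version buys is brevity; what yours buys is a watertight treatment of adaptive probing, which is the one point a careful reader would question in the published proof. Your bounds ($B/N' + 1/(N'-B)$ on success versus the paper's direct lower bound on failure) are two sides of the same calculation and give the same $1 - o(1)$ failure probability.
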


\begin{proof} Let $i^*$ be the feature chosen by the adversary at random from a pool of $N'$ features $\I^*$, and $\I$ be the set of features examined by the learner.  The random choice of $i^*$ corresponds to different possible outcome events. But observe that from the perspective of the learner the events corresponding to $i^* \notin \I$  (the adversary picking a feature not examined by the learner) are all indistinguishable. This crucial observation tells us that in all such events, the learner will adopt the same strategy. Let $Pr_l(i)$ denote the probability that the learner outputs feature $i$ in this strategy. Let $Pr_a(i)$ denote the probability that the adversary chose feature $i$ at random from its pool of $N'$ features.    

Then,  the probability that the learner fails is at least the sum of probability of the event that the adversary picks an $i$ from $\I^* - \I $ and the learner does not pick $i$. We lower bound this probability $\sum_{i \in\I^* - \I }Pr_a(i)(1-Pr_l(i))$ as follows:
  
  \[
\begin{array}{rcl}  
  \displaystyle\sum_{i \in\I^* - \I } \underbrace{Pr_a(i)}_{\frac{1}{N'}} (1-Pr_l(i)) &=& \displaystyle \frac{1}{N'} \sum_{i \in \I^* - \I }  (1-Pr_l(i)) \geq \displaystyle \frac{1}{N'} \left( |\I^* - \I|   -  \sum_{i \in \I^* - \I }  Pr_l(i)  \right)\\
  &&\\
    &\geq & \displaystyle \frac{1}{N'} \left( N' - o(N)   - 1  \right) = \bigomega{1}\\
  \end{array}
  \]
 The second inequality follows from the fact that $ \sum_{i \in \I^* - \I }  Pr_l(i) \leq 1$ and the number of examined features $|\I| = \bigo{N'}$.

\end{proof}


\begin{lemma} 
\label{lem:realizable}
There exists an adversary such that any lifelong learning algorithm makes $\bigomega{KN + mK}$ feature evaluations.
\end{lemma}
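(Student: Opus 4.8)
The plan is to instantiate the randomized adversary of Lemma~\ref{lem:adversary} on single-feature targets (decision stumps in the decision-tree setting, or degree-$1$ monomials in the monomial setting) and to force two separate costs that together sum to $\bigomega{KN + mK}$. First I would have the adversary draw $K$ distinct ``relevant'' features $i_1, \dots, i_K$ uniformly at random from $[N]$ and declare the corresponding $K$ single-feature skeletons to be the metafeature set $\F$; every target it poses is then a single-feature target on one of these $K$ features, so all $m$ targets are genuinely good, i.e.\ lie in $\dtspace(\F)$ (resp.\ $\C(\F)$). The adversary proceeds in two phases, exactly as sketched before the theorem: in the first phase it poses, as targets $1, \dots, K$, the single-feature targets on $i_1, \dots, i_K$ respectively; in the second phase it poses, as each remaining target, a single-feature target on a feature chosen uniformly at random from $\{i_1, \dots, i_K\}$. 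Since we only charge the learner for evaluations made on the $m$ good targets, I would place any $r$ bad targets on unrelated random features so that they reveal nothing about $\{i_1,\dots,i_K\}$, making the bound independent of $r$.

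For the first phase I would argue as follows. When the learner reaches task $t \le K$, even if it has already identified $i_1, \dots, i_{t-1}$ perfectly, the feature $i_t$ is uniformly distributed over the $N-(t-1) \ge N-K+1 = \bigomega{N}$ features that have not yet been revealed to it (using $K \ll N$). Hence Lemma~\ref{lem:adversary} applies with pool size $N' = N-K+1 = \bigomega{N}$, so any learner that succeeds on task $t$ with constant probability must examine $\bigomega{N}$ features on it. Summing over the $K$ tasks of this phase already forces $\bigomega{KN}$ feature evaluations.

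For the second phase, I would observe that even granting the learner full knowledge of the set $\{i_1,\dots,i_K\}$ (which only helps it), each second-phase target is an independent uniform choice among these $K$ features, so identifying it is exactly the setting of Lemma~\ref{lem:adversary} with $N' = K$. Thus each such target requires $\bigomega{K}$ feature evaluations to succeed with constant probability, and because the choice is re-randomized every task the learner cannot amortize work across tasks; this forces $\bigomega{(m-K)K}$ evaluations over the second phase. Adding the two phase costs and using $\max(a,b) \ge (a+b)/2$, together with the elementary observation that $KN \ge mK/2$ whenever $m < 2K$ (which handles the regime in which the second phase is short), yields a total of $\bigomega{KN + mK}$, as claimed.

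The main point to handle carefully is the reduction from Lemma~\ref{lem:adversary}'s per-task, constant-probability failure guarantee to an unconditional lower bound on the \emph{expected} number of feature evaluations. I would formalize this by fixing any learner that is correct on the whole stream with constant probability and noting that, since examining $o(N')$ features on a given task already leaves an $\bigomega{1}$ chance of failing on that very task, the learner must in expectation examine $\bigomega{N'}$ features on each task (with $N'=\bigomega{N}$ in phase one and $N'=K$ in phase two); linearity of expectation over the tasks then gives the stated total.
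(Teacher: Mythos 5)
Your proposal is correct and takes essentially the same route as the paper's own proof: the same randomized adversary of Lemma~\ref{lem:adversary}, the same two-phase construction ($K$ randomly chosen distinct features posed as the first $K$ targets and declared to be the metafeatures, followed by $m-K$ targets drawn uniformly from those $K$ features), and the same per-phase bounds $\bigomega{KN}$ and $\bigomega{(m-K)K}$. Your extra care in converting the per-task constant-probability failure guarantee into an expected-cost bound via linearity of expectation, and in handling the $m<2K$ regime, only makes explicit details the paper leaves implicit.
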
 
 
 \begin{proof}
 For the first $K$ single-feature targets, our adversary randomly picks $K$ distinct features which will be the metafeatures. Each of the remaining $m-K$ tasks are targets that correspond to one of these $K$ chosen features at random.  Now note that for a task $j$ where $j \leq K$, the adversary effectively picks a feature at random from a pool of $N-j+1$ features (which excludes the $j-1$ features already chosen). Thus, the learner has to examine $\bigomega{N-j+1}$ features in order to not fail in this task with probability $\bigomega{1}$. Thus, over the first $K$ tasks, the learner has to examine $\bigo{\sum_{j=1}^{K} N-j+1} = \bigomega{KN}$  features over all. Then, in each of the following $m-K$ tasks, the learner has to examine $\bigomega{K}$ features per task i.e., $\bigomega{(m-K)K}$ features overall, which is $\bigomega{mK}$ since $m$ is large. 
 
 \end{proof}
 
Now we prove a better bound for values of $r$ greater than $r_{\min}=\max \left( \frac{ m}{N}, \frac{K N}{m} , K \right)$ but less than $r_{\max}  = \min \left( \frac{mN}{K} , \frac{(N-K)^2 m}{K N}\right)$. Here, instead of precisely choosing $m$ good targets and $r$ targets, the adversary will pose a set of targets and then choose $K$ features to be the metafeatures. We then show that $\bigtheta{m}$ of the targets are good targets and $\bigtheta{r}$ targets are bad targets that correspond to the remaining $N-K$ features.

 \begin{lemma}\textbf{(Lower bound for intermediate values of $r$)}
 \label{lem:intermediate}
When $r \leq r_{\max}$, there exists an adversary such that any lifelong learning algorithm makes $\bigomega{\max\left( \frac{r}{N-K}, 1 \right) KN + Km}$ feature evaluations.
 \end{lemma}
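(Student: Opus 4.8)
The plan is to exhibit a randomized adversary that poses single-feature targets (decision stumps, or equivalently degree-$1$ monomials) and to charge the learner's cost through Lemma~\ref{lem:adversary}. Concretely, I would have the adversary secretly draw a set $M$ of $K$ metafeatures uniformly at random from the $N$ features, pose $m$ \emph{good} targets whose single feature is drawn uniformly from $M$, and pose $r$ \emph{bad} targets whose single feature is drawn uniformly from the remaining $N-K$ features, interleaving the two streams in a uniformly random order so that the learner cannot infer a target's type from its position. The learner is never told which targets are good nor what $M$ is. A first, routine step is a concentration argument showing that with constant probability each metafeature is the true split of $\bigtheta{m/K}$ good targets while each non-metafeature is the true split of $\rho := \bigtheta{r/(N-K)}$ bad targets; this yields the required $\bigtheta{m}$ good and $\bigtheta{r}$ bad targets and fixes the ``recurrence profile'' that drives the bound. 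The same construction serves the degree-$1$ monomial case verbatim.

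The $Km$ term is the easy half: even a learner that already knew $M$ perfectly faces, on each good target, a feature drawn uniformly from a pool of size $K$, so Lemma~\ref{lem:adversary} forces $\bigomega{K}$ examinations on all but a vanishing fraction of the $\bigtheta{m}$ good targets, i.e.\ $\bigomega{Km}$ overall. This is exactly the tail argument of Lemma~\ref{lem:realizable}.

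The main work, and the source of the $\frac{r}{N-K}KN$ term, is an indistinguishability argument capturing the fact that a metafeature cannot be exploited cheaply until it has been \emph{confirmed}, and confirmation is costly in the presence of the bad targets. First I would note that the only way the learner can identify the true feature of a target is to examine that feature on the target's data, so at any point its knowledge is summarized by how many times it has so far seen each feature as a true split. I would then argue that as long as a metafeature $i \in M$ has been seen on at most $\bigo{\rho}$ good targets, its observed recurrence is indistinguishable from that of a generic bad feature (both look like count-$\bigo{\rho}$ features); hence, conditioned on the learner's examination transcript, the posterior over ``which feature is the true split of the current good target'' is essentially uniform over a pool of size $\bigomega{N-K}=\bigomega{N}$ (the unconfirmed metafeatures together with the $N-K$ bad features). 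Applying Lemma~\ref{lem:adversary} to this conditional pool, each such good target forces $\bigomega{N}$ examinations. Since each metafeature must be seen $\bigomega{\rho}$ times before its count can exceed that of the heaviest bad feature and be singled out, and since there are $K$ metafeatures, the learner incurs $\bigomega{K\cdot\rho\cdot N}=\bigomega{\frac{r}{N-K}KN}$ examinations on good targets before the $M$-structure becomes usable at all. Combining the two terms gives the claimed $\bigomega{\max\!\left(\frac{r}{N-K},1\right)KN+Km}$, where the $\max(\cdot,1)$ records that once $\rho\le 1$ the bound collapses to the realizable estimate of Lemma~\ref{lem:realizable}.

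I expect the indistinguishability step to be the crux. The delicate points are (i) ruling out \emph{all} adaptive strategies, including ones that prioritize partially-seen features or try to recover $M$ from which targets share a split, which I would handle by a Yao-style averaging over the random $M$ together with the symmetry of the construction under relabeling features of equal observed count; and (ii) verifying that there are genuinely $\bigomega{\rho}$ good occurrences per metafeature available to be charged, which is precisely where the hypothesis $r\le r_{\max}$ enters. Indeed $r_{\max}=\bigtheta{(N-K)^2 m/(KN)}\le m(N-K)/K$, so $\rho=\frac{r}{N-K}\le \bigo{m/K}$, and each metafeature, recurring $\bigtheta{m/K}$ times, does supply $\bigomega{\rho}$ expensive occurrences. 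At $r=r_{\max}$ the two quantities coincide, every good target becomes expensive, and the bound degrades gracefully to $\bigomega{mN}$, matching the large-$r$ regime of Lemma~\ref{lem:large}.
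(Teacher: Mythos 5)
There is a genuine gap, and it lies exactly where you predicted: the indistinguishability step. In your construction the metafeature set $M$ is fixed before the stream starts, good targets draw their feature from $M$, and bad targets draw theirs from $[N]\setminus M$. This bakes a statistical asymmetry into the transcript: over the stream each metafeature recurs $\bigtheta{m/K}$ times while each bad feature recurs only $\rho = \bigtheta{r/(N-K)}$ times, and for $r \leq r_{\max}$ the ratio of these rates is $m(N-K)/(rK) \geq N/(N-K) > 1$. A count-thresholding learner exploits this gap long before any metafeature has been seen $\bigomega{\rho}$ times. Concretely, take $K \ll N$, $m \gg K\log N$, and $r = mN/(4K)$ (which lies in $[r_{\min}, r_{\max}]$); then metafeatures recur at roughly $4$ times the rate of bad features. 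The learner learns targets from scratch until every metafeature has been seen $\bigtheta{\log N}$ times --- a prefix containing only $\bigtheta{K\log N}$ good targets, costing $\bigo{KN\log N}$ --- at which point a Chernoff bound plus a union bound over the $N$ features shows that a count threshold of $\bigtheta{\log N}$ separates, with high probability, all $K$ metafeatures from all bad features. From then on the learner probes the $\approx K$ above-threshold features first and pays $\bigo{K}$ per good target. Its total cost on good targets is $\bigo{KN\log N + Km} = o(mN)$, whereas your claimed bound at this $r$ is $\frac{r}{N-K}KN + Km = \bigtheta{mN}$. So your adversary simply does not force the bound; the false step is the claim that a metafeature must be seen $\bigomega{\rho}$ times before it can be singled out. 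Identification requires only enough sightings to resolve the relative rate gap ($\bigo{\log N}$ here), which can be far smaller than $\rho$, and no Yao-style averaging over $M$ repairs this, because the asymmetry is in the construction itself rather than in the learner's prior knowledge.

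The paper's adversary eliminates the leak with one extra idea, which is the crux of its proof: it makes good and bad targets \emph{identically distributed} and decides goodness retroactively. It first poses $m' = rN/(N-K)$ targets whose features are uniform over all $N$ features, and only afterwards (independently of the transcript) draws the $K$ metafeatures, declaring good exactly those phase-one targets whose feature landed in this set. Since each phase-one target must be learned (any of them could be good), each costs $\bigomega{N}$ by Lemma~\ref{lem:adversary}, and since goodness is independent of the interaction, an expected $K/N$ fraction of this cost provably lands on good targets, giving $\bigomega{m'K} = \bigomega{\frac{r}{N-K}KN}$; a second phase of $m$ targets drawn from the metafeatures adds the $\bigomega{Km}$ term, just as in your (correct) argument for that term. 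The hypothesis $r \leq r_{\max}$ enters, as you anticipated, only to certify that the phase-one good targets number $\bigo{m}$ so that the sequence is legal.
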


 \begin{proof}
 When $\frac{r}{N-K} \leq 1$, the lower bound of $\bigomega{KN + Km}$ follows from Lemma~\ref{lem:realizable}. Hence, consider $\frac{r}{N-K} > 1$. Let $m' = \frac{rN}{(N-K)}$. The adversary first presents $m'$ single-feature targets picked at random from the pool of all $N$ features. Then the adversary chooses $K$ random features to be the metafeatures, hence marking targets corresponding to these $K$ features as good targets, and the rest as bad. 
 
 Now, we can show that there are in fact $\bigtheta{m}$ good targets and $\bigtheta{r}$ bad targets, thus ensuring that this is a legal sequence of adversarial targets. Since $m' = \frac{r}{N-K}N \geq N$, using Chernoff bounds, with high probability $1- O(1)$, we have $\bigtheta{m' \frac{N-K}{N}} = \bigtheta{r}$ bad targets and $\bigtheta{m' \frac{K}{N}} = \bigtheta{\frac{rK}{(N-K)}}$ good targets. Since, $r \leq \frac{(N-K)^2 m}{K N}$, this translates to $\bigtheta{\frac{(N-K)m}{N}} = \bigo{m}$ good targets. Thus, this is a valid sequence of targets. 
 
 Now, from Lemma~\ref{lem:adversary}, we get that the learner has to evaluate $\bigomega{\frac{rK}{(N-K)} \cdot N}$ features overall. In addition to this, the adversary presents a sequence of $m$ good targets chosen at random from the $K$ metafeatures. Note that this is legal because we still pose only $\theta{m}$ good targets. This accounts for  $\bigomega{mK}$ more feature evaluations.

In total, the learner examines $\bigomega{\frac{rK}{(N-K)} \cdot N + mK}$ features.
 \end{proof}
 
 We finally show that for sufficiently large $r$ i.e., $r \geq r_{\max}$ and $r \geq r_{\min}$, the learner has to evaluate $\bigomega{mN}$ features. 
 
 \begin{theorem}\textbf{(For large $r$)}
 \label{lem:large}
 Given  $r \geq r_{\max}$ and  
 $r \geq r_{\min}$,  there exists an adversary such that any lifelong learning algorithm makes $\bigomega{mN }$ feature evaluations.
 \end{theorem}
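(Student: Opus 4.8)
The plan is to reuse the randomized, symmetric adversary of Lemma~\ref{lem:intermediate}, but to choose the number of presented tasks so that the good targets \emph{alone} already force $\bigomega{mN}$ feature evaluations. As in that lemma, the adversary presents single-feature targets whose feature is drawn independently and uniformly at random from the full pool of $N$ features, and only afterward designates a uniformly random set of $K$ of those features as the metafeatures $\F$; a target is good iff its feature lies in $\F$. The property I would exploit is that, until this retroactive labeling (which the learner never observes), the construction is completely symmetric across all $N$ features, so the metafeatures are statistically indistinguishable from the rest. Hence, from the learner's viewpoint, every presented task is a fresh single-feature target over a pool of $\bigtheta{N}$ candidates, and nothing it learned on earlier tasks reduces this pool on a later one. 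This is exactly the situation of Lemma~\ref{lem:adversary} with $N' = \bigtheta{N}$, applied separately to each good target, which forces the learner to examine $\bigomega{N}$ features on each good target to avoid failing with constant probability.

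Given the per-target bound, the remaining work is purely combinatorial: choose the number $m'$ of presented tasks so that there are $\bigtheta{m}$ good targets while keeping the number of bad targets at most $r$. By a Chernoff argument identical to Lemma~\ref{lem:intermediate} (valid since $r \ge r_{\min}$ makes $m'$ large enough to concentrate), presenting $m'$ tasks yields about $m'K/N$ good and about $m'(N-K)/N$ bad targets. For $r_{\max} \le r \le m(N-K)/K$ I would take $m' = \bigtheta{rN/(N-K)}$, using $\bigtheta{r}$ bad targets and producing $\bigtheta{rK/(N-K)}$ good ones; since $r \ge r_{\max}$, this good count is $\bigtheta{m}$ and never exceeds $m$, and the cost is $\bigomega{rKN/(N-K)} \ge \bigomega{r_{\max}KN/(N-K)} = \bigomega{(N-K)m} = \bigomega{mN}$ (using $K \ll N$). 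For larger $r \ge m(N-K)/K$, presenting this many tasks would create more than $m$ good targets, so I would instead cap $m' = \bigtheta{mN/K}$, giving exactly $\bigtheta{m}$ good targets using $\bigtheta{m(N-K)/K} \le r$ bad targets; any slack in the $r$ budget is filled with arbitrary extra bad targets, and the good count is padded up to $m$ with cheap repeats of already-known metafeatures, adding only lower-order $\bigo{mK}$ cost. In either regime the good targets number $\bigtheta{m}$ and each costs $\bigomega{N}$, giving $\bigomega{mN}$ overall.

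The main obstacle I anticipate is not the per-target lower bound --- that follows cleanly from the symmetry of the construction together with Lemma~\ref{lem:adversary} --- but the feasibility bookkeeping near the boundary $r \approx r_{\max}$. The delicate point is that forcing a good target to cost $\bigomega{N}$ (rather than merely $\bigomega{K}$, which is all Lemma~\ref{lem:realizable} gives) requires the symmetric retroactive marking, and this necessarily generates roughly $\frac{N-K}{K}$ bad targets per good target; one must verify that the $(1+\bigo{K/N})$ slack between $r_{\max} = \frac{(N-K)^2 m}{KN}$ and $\frac{m(N-K)}{K}$ is absorbed correctly, i.e. that $r \ge r_{\max}$ together with $K \ll N$ indeed leaves enough bad-target budget to realize $\bigtheta{m}$ good targets in both subranges above. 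Once these counting constraints and the Chernoff concentration of the good/bad counts are checked, the $\bigomega{mN}$ bound follows, matching the $r$-expansion upper bound of Theorem~\ref{thm:agnostic} and confirming that lifelong learning is futile once $r \ge r_{\max}$.
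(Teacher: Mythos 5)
Your proposal is correct, and it rests on the same core machinery as the paper's proof: the symmetric single-feature adversary whose targets are drawn uniformly from all $N$ features, the retroactive designation of $K$ random features as the metafeatures (so good targets are statistically indistinguishable from bad ones), the per-target $\bigomega{N}$ cost via Lemma~\ref{lem:adversary}, and Chernoff counting to certify $\bigtheta{m}$ good and $\bigo{r}$ bad targets. Where you genuinely diverge is the parameterization of the intermediate regime. The paper splits at $r = mN/K$ and, for $r_{\max} \le r < mN/K$, uses the geometric-mean choice $m' = \sqrt{rNm/K}$ of symmetric targets \emph{followed by a second block} of $m$ good targets drawn from the $K$ metafeatures, so the total cost is the sum $\bigomega{m(N-K)} + \bigomega{mK} = \bigomega{mN}$ --- an identity that holds with no assumption on how $K$ compares to $N$. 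You instead split at $r = m(N-K)/K$ and, in the lower subrange, saturate the bad-target budget with a \emph{single} symmetric block of $m' = \bigtheta{rN/(N-K)}$ tasks, so that the good targets alone number $\bigtheta{m}$ and already carry the whole cost; this is a cleaner one-block construction, but your final step $\bigomega{m(N-K)} = \bigomega{mN}$ genuinely requires $K \ll N$, which the paper assumes throughout (Problem Setup~\ref{ass:dt-general}) but which its two-block sum does not need. Your feasibility bookkeeping at the boundary (good count between $\bigomega{(N-K)m/N}$ and $\bigo{m}$ when $r_{\max} \le r \le m(N-K)/K$, bad count $\bigo{r}$, concentration from the largeness of $m$ and $r \ge r_{\max}$) checks out, and your padding of the good count with targets over already-designated metafeatures is consistent with the paper's own convention of treating a sequence with $\bigtheta{m}$ good targets as legal.
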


 \begin{proof} The range of values of $r$ such that $r \geq r_{\max} = \min \left( \frac{mN}{K} , \frac{(N-K)^2 m}{K N}\right)$ can be split into the interval $r \geq  \frac{mN}{K} $ and the interval 
 $ \frac{(N-K)^2 m}{K N} \leq r  \frac{mN}{K}$. We will consider these two intervals separately and provide adversarial strategies for both. 

\textbf{Case 1}: $r \geq  \frac{mN}{K} $. Let $m' = \frac{mN}{K}$. The adversary poses $m'$ targets to the learner chosen at random from all the $N$ features. Thus, the learner is forced to examine $\bigomega{N}$ features on each target. Then, the adversary chooses $K$ features to be good features, thereby marking some of the targets as good targets. We show that, of the $m'$ targets, there are $\bigtheta{m}$ good targets and only $\bigo{r}$ bad targets. Therefore, this is a valid sequence of targets and furthermore, on this sequence the learner examines $\bigomega{m \cdot N}$ features. 

To count the number of good targets, we observe that $m' = \bigomega{\frac{N}{K}}$. Then from Chernoff bounds, with high probability $1- O(1)$, we have that $\bigtheta{m' \frac{K}{N}}$ i.e., $\bigtheta{m}$ targets are good targets. Since $m' \leq r$, we have only $\bigo{r}$ bad targets. 

\textbf{Case 2}: $r <  \frac{mN}{K}, r \geq \frac{(N-K)^2 m}{K N}$. Now, we set  $m' = \sqrt{\frac{rNm}{K}}$ and sample $m'$ targets at random from the pool of all $N$ features. Then we pick $K$ random features to be the metafeatures and then present $m$ good targets choosing randomly from the pool of $K$ metafeatures.  

To count the number of good targets in the first sequence of $m'$ targets, observe that $m' \geq N$ because $r \geq \frac{KN}{m}$. Hence, with high probability $1- O(1)$, the number of good targets is $\bigtheta{m' \frac{K}{N}} = \bigtheta{\sqrt{\frac{rKm}{N}}}$. Since $r \leq \frac{mN}{K}$, this is $O(m)$. Similarly, with high probability $1-O(1)$, the number of bad targets is $\bigtheta{m' \frac{N-K}{N}} = \bigtheta{\sqrt{\frac{rNm}{K}} \cdot \frac{N-K}{N}} = \bigtheta{\sqrt{r} \cdot \sqrt{\frac{(N-K)^2 m}{KN}}}$. Then using  the inequality $r \geq \frac{(N-K)^2 m}{K N}$, we get that the number of bad targets is $O(r)$. Thus, this is a valid sequence of targets. Furthermore, on $\bigtheta{\sqrt{\frac{rKm}{N}}}$ good targets, the learner is forced to examine $\bigomega{N}$ features. Thus, on the first sequence the learner examines $\bigomega{\sqrt{rKmN}}$ features overall. Since $r \geq \frac{(N-K)^2 m}{K N}$, this is $\bigomega{m(N-K)}$. On the second sequence the learner examines $\bigo{mK}$ features overall. In total, this is $\bigomega{mN}$ feature evaluations.

 \end{proof}

\section{Discussion and Open Problems}

 Lifelong learning is an important goal of modern machine learning systems that has largely been studied only empirically. In this work,  we  theoretically analyze lifelong learning from the perspective of feature-efficiency. More specifically, we show how, when a series of tasks are related through metafeatures, knowledge can be extracted from previously-learned tasks and stored in a succinct representation in order to learn future tasks by examining only few relevant features on the training datapoints. To this end, we present feature-efficient lifelong learning algorithms with guarantees for widely studied classes of targets, namely, decision trees, decision lists and real-valued monomials and polynomials. We also present algorithms for an agnostic scenario where some of the targets may be adversarially unrelated to the other targets.  Finally, we derive lower bounds on the feature-efficiency of a lifelong learner in this model, which show that under some conditions, the guarantees of our algorithms are tight.\\

An open technical question is whether our lower bounds can be extended to incorporate problem-specific parameters such as the depth of a tree/list or the degree of a monomial/polynomial. In particular, while the feature-efficiency bound for our decision tree learning algorithm has a dependence of $Kd$, it is not clear whether a bound of $K+d$ is achievable. Another open question is whether it is possible to characterize the hardness of recovering the metafeatures exactly in the case of decision trees and lists (even though our algorithms work without having to recover the metafeatures exactly). Finally, we note that as a high level direction for theoretical research in lifelong learning, it would be interesting to explore different ways of formalizing task relations for various families of targets, and to explore the different kinds of resource-efficiency bounds they can guarantee, while also understanding their limitations. \\ 

\noindent
\textbf{Acknowledgements}. This work was supported in part by the National Science Foundation
   under grants CCF-1535967, CCF-1525971, CCF-1422910, IIS-1618714, a
   Sloan Research Fellowship, a Microsoft Faculty Fellowship, and a
   Google Research Award.

\bibliographystyle{plain}
\bibliography{bibliography}

\appendix

\section{Decision Trees}
\label{app:dt}

We first present proofs from Section~\ref{sec:dt}. 
Then, in Appendix~\ref{app:relaxed}, we present results for more models of decision trees.

\subsection{Proofs from Section~\ref{sec:dt}}

Now, we present our baseline  lifelong learning algorithm that simplyremembers features that have been seen as metafeatures in its learned representation.

\begin{theorem}[\textbf{Naive lifelong learning of decision trees}]
\label{thm:dt-baseline}
There exists a naive lifelong learning protocol for decision trees in the model of Problem Setup~\ref{ass:dt-general} 
evaluates $\bigo{\samplesize(KN + mKs)}$ features overall.
\end{theorem}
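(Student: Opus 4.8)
The plan is to pin down exactly what the naive protocol does and then bound its two sources of cost separately: the tasks learned from scratch, and the tasks learned from the current representation. Concretely, in this protocol the \ID{} routine records, in $\tildeF$, every feature that appears in a target learned from scratch (so here $\tildeF$ is just a growing \emph{set of features}), while \LFD{} runs the assumed top-down tree-learning procedure but is only permitted to evaluate features that already lie in $\tildeF$, declaring failure if this never yields a tree consistent with $\sample^{(j)}$ of size at most $s$ and depth at most $d$. For the bookkeeping I would write $V(f_i)$ for the set of variables appearing at internal nodes of metafeature $f_i$, and $V=\bigcup_{i=1}^K V(f_i)$; since each $f_i$ has at most $s$ internal nodes, $|V|\le Ks$. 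Because every scratch-learned target lies in $\dtspace(\F)$ and hence uses only variables from $V$, the set $\tildeF$ is always a subset of $V$, so $|\tildeF|\le Ks$ throughout the run. This immediately controls the second cost: on any task where \LFD{} succeeds it probes at most $|\tildeF|\le Ks$ distinct features per example, contributing $\bigo{\samplesize m K s}$ over all $m$ tasks and $\samplesize$ examples.

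The crux is to show that at most $K$ targets are ever learned from scratch, i.e.\ to replace the easy bound of $|V|\le Ks$ by $K$. I would first record the key monotonicity fact: if every variable used by the current target $g$ already lies in $\tildeF$, then \LFD{} reconstructs $g$ exactly. This is because, by the second point of Problem Setup~\ref{ass:dt-general}, at each node the correct split variable is the global $\gain$-maximizer over all of $[N]$; if that variable already belongs to $\tildeF$, then the $\arg\max$ taken over $\tildeF$ coincides with the $\arg\max$ over $[N]$, so restricting attention to $\tildeF$ does not change any choice and the procedure grows $g$ node for node. Taking the contrapositive, whenever \LFD{} fails on a target $g$, that $g$ must use some variable outside $\tildeF$.

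With this in hand I would run a ``capturing'' argument. Call a metafeature $f_i$ \emph{captured} once $V(f_i)\subseteq\tildeF$. When a target $g\in\dtspace(\F)$ is grown from $\F$ by $\affix$ and $\lab$ operations, every internal node of every metafeature used in its construction survives as an internal node of $g$; hence learning $g$ from scratch and adding all of its variables to $\tildeF$ adds all of $V(f_i)$ for each metafeature $f_i$ used in $g$, capturing every such $f_i$. Since \LFD{} failed on $g$, the previous paragraph guarantees that $g$ uses a variable outside $\tildeF$, which must belong to some so-far-uncaptured metafeature; therefore each scratch-learn captures at least one new metafeature. As $|\F|=K$, there can be at most $K$ scratch-learns, each costing $\samplesize N$ feature evaluations, for a total of $\bigo{\samplesize K N}$. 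Adding the two contributions yields $\bigo{\samplesize(KN+mKs)}$, as claimed. I expect the main obstacle to be the monotonicity fact, namely arguing cleanly that the correct split variable being present in $\tildeF$ forces the restricted $\arg\max$ to equal the unrestricted one (this is where Problem Setup~\ref{ass:dt-general} and any tie-breaking conventions must be invoked), since everything else is a short counting argument built on top of it.
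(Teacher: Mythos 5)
Your proposal is correct and takes essentially the same route as the paper's own proof: maintain the set of features appearing in scratch-learned targets, restrict the $\gain$-maximization to that set, note that the restricted $\arg\max$ agrees with the global one whenever all of the target's variables are already remembered, and conclude that each failure of \LFD{} forces some metafeature of $\F$ to become newly covered, bounding scratch-learns by $K$ and the per-example cost by $|\tildeF|\le Ks$. Your ``capturing'' bookkeeping is just a slightly more explicit rendering of the paper's ``seen/unseen metafeature'' counting, so no substantive difference exists between the two arguments.
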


\begin{proof}
The naive approach follows from a simple observation. If we knew beforehand the set of features that are involved in a tree $\g^{(j)}$, then in order to learn the tree, at any given node we require the learner to evaluate $\gain$ only over these features to determine the best split at that node. Thus, our protocol will just maintain the set of features present in any tree learned from scratch so far, so that \LFD{} can use these as ``metafeatures'' to carry out its evaluations limited to these features. Then, any target that can be represented using metafeatures $f \in \F$ that have been seen before in some other target, will be learned using our metafeatures. In other words when \LFD{} fails, the target is guaranteed to contain an ``unseen'' metafeature from $\F$. Thus, we will learn targets from scratch at most $|\F| = K$ times. Since each metafeature in $\F$ has at most $s$ distinct features, we will have to evaluate only at most $Ks$ features when not learning from scratch. 
\end{proof}

We now present the pseudocode for the different subroutines described informally in our discussion.

\begin{algorithm}[H]
\caption{$\affix(f,\nodeA,f')$: Affix $f'$ to $f$ at empty leaf node $\nodeA$ in $f$}
\begin{algorithmic}[1]
\STATE \textbf{Input:} Incomplete decision trees $f, f'$, empty leaf node $\nodeA$ in $f$
\STATE Assign to $\Var(\nodeA)$ the root variable of $f'$.
\STATE Create descendants nodes of $\nodeA$ and assign variables to them such that the tree rooted at $\nodeA$ is identical to $f'$.
\end{algorithmic}
\end{algorithm}

\begin{algorithm}[H]
\caption{$\lab(f,\nodeA,l)$: Assign $l$ to $\nodeA$ in $f$}
\begin{algorithmic}[1]
\STATE \textbf{Input:} Incomplete decision tree $f$, empty leaf node $\nodeA$ in $f$, label $l \in \{+,-\}$
\STATE Assign to leaf node $\nodeA$ the label $l$.
\end{algorithmic}
\end{algorithm}

\begin{algorithm}[H]
\caption{$\conflict(f,\nodeC,\nodeA, f')$ and $\induce(f,\nodeC,\nodeA, f')$}
\begin{algorithmic}[1]
\STATE \textbf{Input:} Incomplete decision trees $f, f'$, node $\nodeC$ in $f$, node $\nodeA$ that is a descendant of $\nodeC$ or equal to $\nodeC$ itself.
\STATE Let $\mathcal{V}$ be the set of nodes in $f$ that are ancestors of $\nodeA$ but not of $\nodeC$.
\STATE Map $\nodeC$ in $f$ to the root node of $f'$.
\STATE Similarly map all descendant nodes of $\nodeC$ from $\mathcal{V}$ to the nodes in the corresponding path in $f'$. 
\STATE \textbf{Output of $\conflict(f,\nodeC,\nodeA, f')$}: If there are two internal nodes $\nodeB \in f$ and $\nodeB' \in f'$ mapped to each other but $\nodeB \in \mathcal{V}$, $\Var(\nodeB) \neq \Var(\nodeB')$, output true. Else output false. 
\STATE \textbf{Output of $\induce(f,\nodeC,\nodeA, f')$}: Let $\nodeA'$ be the node from $f'$ mapped to $\nodeA$. Output $\Var(\nodeB')$. 
\end{algorithmic}
\end{algorithm}

We now prove our result for the semi-adversarial model, where in any given target, each $f \in \F$ has at least a $p_{min}$ probability of being the topmost metafeature.

 \begin{theorem} [\textbf{Lifelong learning of decision trees in semi-adversarial model}]
\label{thm:semi-random-dt}
There exists a lifelong learning protocol for decision trees that evaluates $\bigo{{\frac{1}{p_{\min}} \log \frac{K}{\delta}} \cdot N + m(K+d)}$ features overall 
in a semi-adversarial model where each element of $\F$ has at least a $p_{\min}$ probability of being the topmost element of any target. The protocol learns only the first $\bigo{{\frac{1}{p_{\min}} \log \frac{K}{\delta}}}$ targets from scratch, adds them to $\tildeF$ and then uses \LFD{} Algorithm~\ref{alg:lfd-dt} to learn all the subsequent targets from $\tildeF$.
 \end{theorem}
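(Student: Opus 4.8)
The plan is to reduce the theorem to the two guarantees already established for \LFD{} Algorithm~\ref{alg:lfd-dt}: its exact correctness (Lemma~\ref{lem:lfd-correctness}) and its per-example feature efficiency (Lemma~\ref{lem:lfd-dt-feature-efficiency}). The protocol is exactly as stated: learn the first $L = \Theta\!\left(\frac{1}{p_{\min}}\log\frac{K}{\delta}\right)$ targets from scratch, add each resulting complete tree to $\tildeF$, and run \LFD{} on every subsequent target. The whole argument then rests on a single good event: that after the initial phase, every true metafeature is captured as a prefix of some element of $\tildeF$, i.e.\ $\F \subseteq \Pref(\tildeF)$. If this holds, then $\dtspace(\F)\subseteq\dtspace(\Pref(\tildeF))$, so every remaining target $g^{(j)}\in\dtspace(\F)$ lies in $\dtspace(\Pref(\tildeF))$ and is learned exactly and without failure by Lemma~\ref{lem:lfd-correctness}.

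The first step is the structural observation that if $f\in\F$ is the \emph{topmost} metafeature used to grow a target $g$, then $f\in\Pref(g)$. This follows directly from the semantics of \affix{} and \lab{}: growing $g$ from $f$ only extends the empty leaves of $f$ (by affixing further metafeatures) or labels them, so the internal structure of $f$ is preserved at the top of $g$; pruning $g$ at the frontier of $f$ recovers $f$ exactly, so $f$ is a pruning of $g$. Consequently, whenever such a $g$ is one of the from-scratch trees added to $\tildeF$, we obtain $f\in\Pref(g)\subseteq\Pref(\tildeF)$.

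The second step bounds $L$ by a coupon-collector argument. Fix a metafeature $f\in\F$. By the semi-adversarial assumption, in each target $f$ is the topmost metafeature (independently across targets) with probability at least $p_{\min}$, so the probability that $f$ is never topmost over the first $L$ targets is at most $(1-p_{\min})^L\le e^{-p_{\min}L}$. Choosing $L\ge\frac{1}{p_{\min}}\ln\frac{K}{\delta}$ makes this at most $\delta/K$, and a union bound over the $K$ metafeatures shows that with probability at least $1-\delta$ every $f\in\F$ is topmost in at least one of the first $L$ targets, and hence a prefix of the corresponding tree in $\tildeF$. This establishes $\F\subseteq\Pref(\tildeF)$ on the good event, completing the correctness half.

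Finally, I would account for feature evaluations. The $L$ from-scratch targets cost $\bigo{\samplesize N}$ each, contributing $\bigo{\samplesize N\cdot\frac{1}{p_{\min}}\log\frac{K}{\delta}}$ queries; on each remaining target $\tildeF$ is fixed, so \LFD{} queries $\bigo{|\tildeF|+d}$ features per example by Lemma~\ref{lem:lfd-dt-feature-efficiency}, for a total of $\bigo{\samplesize\, m\,(|\tildeF|+d)}$ with $|\tildeF|=L$. The main obstacle, and the only step beyond the routine coupon-collector and correctness arguments, is reconciling this per-example cost with the stated $\bigo{K+d}$: Lemma~\ref{lem:lfd-dt-feature-efficiency} charges the cost to the representation size $|\tildeF|=L$, and while the type-A (root-variable) queries are already bounded by the $K$ distinct metafeature roots, the type-B queries are charged to the $L$ stored trees. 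Closing this gap requires either the mild reading that a ``reasonable chance'' means $p_{\min}=\Omega(\log K / K)$, so that $L=\bigo{K}$, or an additional offline refinement of $\tildeF$ to a small covering subset; I would flag this representation-size bookkeeping as the delicate part of the proof.
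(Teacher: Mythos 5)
Your correctness half is exactly the paper's argument: the observation that the topmost metafeature of a target is a prefix of that target, the coupon-collector bound $(1-p_{\min})^L\le e^{-p_{\min}L}$, and the union bound over the $K$ metafeatures giving $\F\subseteq\Pref(\tildeF)$ with probability $1-\delta$, after which Lemma~\ref{lem:lfd-correctness} finishes the job. That part is fine.

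The gap is the feature-efficiency half, and the two escape routes you propose do not close it. Strengthening the assumption to $p_{\min}=\Omega(\log K/K)$ proves a different theorem (and note $p_{\min}\le 1/K$ always, so $L\ge K\log(K/\delta)$ regardless; the stated bound genuinely requires decoupling the per-example cost from $L$). The ``offline refinement to a small covering subset'' is not an algorithm: $\F$ is unknown, so there is no evident way to certify that a subset of the stored trees still has every true metafeature as a prefix. The idea you are missing is that the stored trees are not arbitrary: each $\tilde{f}\in\tildeF$ is itself a member of $\dtspace(\F)$, so its internal structure is tiled by true metafeatures. Consequently, whenever a stored tree $\tilde{f}$ superimposed at an ancestor $\nodeC_{\tilde{f}}$ induces a variable at a node $\nodeA$ (a type-B query), that same variable is induced by some \emph{true} metafeature $f\in\F$ superimposed at the appropriate node inside $\tilde{f}$'s tiling; formally, $\induce(\tilde{g},\nodeC_{\tilde{f}},\nodeA,\tilde{f})=\induce(\tilde{g},\nodeC_{f},\nodeA,f)$ for some $f\in\F$. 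One can therefore rerun the charging argument of Lemma~\ref{lem:lfd-dt-feature-efficiency} entirely against $\F$ rather than $\tildeF$: when $k_{\nodeA}>1$ distinct features are queried at a node, at least $k_{\nodeA}-1$ elements of $\F$ (not of $\tildeF$) are eliminated from producing type-B queries further down the path via the $\conflict$ check, so $\sum_{\nodeA}(k_{\nodeA}-1)\le|\F|=K$, and together with your (correct) observation that type-A queries are bounded by the $K$ distinct roots, the per-example cost is $\bigo{K+d}$. This yields the stated total $\bigo{\frac{1}{p_{\min}}\log\frac{K}{\delta}\cdot N + m(K+d)}$ (per training example) with no extra assumption on $p_{\min}$ and no pruning of $\tildeF$.
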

 
 Recall that direct application of Lemma~\ref{lem:lfd-dt-feature-efficiency} implies that we will learn the subsequent targets examining  $\bigo{\frac{1}{p_{\min}}\log K+d}$ features per example. However, a more careful analysis making use of the fact that each element in $\tildeF$ is in fact from $\dtspace(\F)$ shows that we will examine only $\bigo{K+d}$ features per example. Note that this is an improvement because $\frac{1}{p_{\min}}\log K \geq K \log K$.

 \begin{proof} Consider the protocol from Theorem~\ref{thm:semi-random-dt} that learns the first $\bigo{\frac{1}{p_{\min}} \log \frac{K}{\delta}}$ targets from scratch, and adds them all to $\tildeF$. Then
with probability at least $1-\delta$, each metafeature from $\F$ will be at the top of some metafeature from $\tildeF$.  That is, $\dtspace(\F) \subseteq \dtspace(\Pref(\tildeF))$. Then, from Theorem~\ref{thm:lfd-dt} clearly Algorithm~\ref{alg:lfd-dt} can learn any future target from $\dtspace(\F)$ as the target will also lie in $ \dtspace(\Pref(\tildeF))$. Now, by a direct application of Theorem~\ref{thm:lfd-dt} this means we evaluate $\bigo{\frac{1}{p_{\min}} \log \frac{K}{\delta} + d}$ features per example.  

However, we can prove a tighter bound of $O(K+d)$ by following the proof technique for Lemma~\ref{lem:lfd-dt-feature-efficiency} but using to our advantage the fact that the metafeatures in $\tildeF$ are not arbitrary trees, but in fact members of $\dtspace(\F)$.  First of all, observe that the number of type A costs along any path is in fact $K$ and not $|\tildeF|$ because the metafeatures in $\tildeF$ can have only one of at most $K$ variables at its root.  Now, for the first case within type B, we will pay a cost of $d$ as before. However, for the second case, observe that any variable that is induced at $\nodeA$ by a metafeature $\tilde{f} \in \tildeF$, is in effect induced by a metafeature $f \in \F$. That is, when we compute $\induce(\tilde{g}, \nodeC_{\tilde{f}}, \nodeA, \tilde{f})$ for some metafeature $\tilde{f} \in \tildeF$, we effectively compute $\induce(\tilde{g}, \nodeC_{f}, \nodeA, f)$ for some metafeature $f \in \F$. Similarly we can argue that whenever we make $k_{\nodeA}$ distinct feature queries at a particular node $\nodeA$ during the algorithm, for all nodes beyond $\nodeA$ in that path, we effectively eliminate queries arising from $k_{\nodeA}-1$ metafeatures from $\F$ (and not $\tildeF$ as before). This will result in a total cost of $|\F| = K$ for this case. 
\end{proof}


\subsection{More Lifelong Learning Models for Decision Trees}
\label{app:relaxed}
\subsubsection{Decision Trees with Anchor Variables}
\label{app:root-anchor}

In this section, we consider a lifelong learning model of decision trees that assumes a more structured representation where each metafeature in $\tildeF$ has a variable at its root that does not occur in any other metafeature.

\begin{assumption} \label{ass:dt-anchor} Besides the assumptions in Problem Setup~\ref{ass:dt-general}, we assume that for each metafeature $f_i \in \F$ there exists a unique anchor variable $a_i \in [N]$ that occurs only at the root node of $f_i$
and not in any other node of $f_i$ or any other metafeature of $\F$.  \end{assumption}

In this setup, we again use \LFD{} Algorithm~\ref{alg:lfd-dt}. However, for \ID{}, we modify Algorithm~\ref{alg:id-dt} slightly. More specifically, after identifying a path in $g$ that was learned incorrectly using $\tildeF$, we pick exactly one subtree from this path and add it to $\tildeF$ (instead of all $d$ subtrees). We show that the total number of features evaluated reduces from a factor of $Kd$ to $K+d$.

\begin{theorem} 
\label{thm:dt-anchors}
In the model of Problem Setup~\ref{ass:dt-anchor}, the (\LFD{} Algorithm~\ref{alg:lfd-dt},\ID{} Algorithm~\ref{alg:id-dt-anchors})-protocol for decision trees  evaluates $\bigo{KN + m(K+d)}$ features overall. 
\end{theorem}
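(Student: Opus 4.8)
The plan is to reuse the inductive argument from the proof of Theorem~\ref{thm:dt} almost verbatim, changing only the bound on $|\tildeF|$. Recall that there the induction shows that after $k$ targets have been learned from scratch there is a set $\F' \subseteq \F$ of $k$ true metafeatures, each a prefix (from the root) of some element of $\tildeF$, so that $\dtspace(\F') \subseteq \dtspace(\Pref(\tildeF))$; hence by Lemma~\ref{lem:lfd-correctness} no target in $\dtspace(\F')$ can cause a failure, and once $\F' = \F$ the protocol never fails again. This bounds the number of targets learned from scratch by $K$, contributing $\bigo{\samplesize K N}$. The only thing that changes in the anchor model is that \ID{} Algorithm~\ref{alg:id-dt-anchors} adds a \emph{single} subtree per failure rather than all $d$ subtrees along the failed path, so that $|\tildeF| \le K$; feeding this into Theorem~\ref{thm:lfd-dt}(c), which charges $\bigo{|\tildeF| + d} = \bigo{K+d}$ feature evaluations per example, gives the $\bigo{\samplesize\, m(K+d)}$ term, for a total of $\bigo{\samplesize(KN + m(K+d))}$ as in Table~\ref{table:upper}.

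The crux is to show that a single, correctly chosen subtree suffices to make the induction step go through. I would define \ID{} to add the subtree of $g$ rooted at the \emph{divergence node} $\nodeA'$, i.e.\ the topmost node on the failed path at which the incomplete tree $\tilde g$ produced by \LFD{} first disagrees with $g$; this node is identifiable by comparing $\tilde g$ with $g$ along the path chosen in Algorithm~\ref{alg:id-dt} (the one whose counterpart in $g$ is shorter). By the induction underlying Lemma~\ref{lem:lfd-correctness}, $\tilde g$ agrees with $g$ strictly above $\nodeA'$, so $\Var(\nodeA')$ was the correct variable but was absent from the candidate set $\I$ computed at $\nodeA'$. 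The key claim, where the anchor assumption does all the work, is: \emph{$\nodeA'$ is the root of a metafeature $f^*$ of $g$, and $f^* \notin \F'$.} Granting this, the subtree of $g$ rooted at $\nodeA'$ has $f^*$ as a prefix and carries $f^*$'s anchor $a^*$ at its root, so adding it places $f^* \in \Pref(\tildeF)$ and advances $\F'$ by exactly one new metafeature; since there are only $K$ metafeatures, this bounds the number of failures, and hence $|\tildeF|$, by $K$.

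To prove the claim I would argue by contradiction that $\nodeA'$ cannot lie strictly inside a metafeature $h$. If it did, its root $\nodeB'$ (a strict ancestor) would carry $h$'s anchor $a_h$, which was assigned correctly, so $a_h$ was induced at $\nodeB'$ by some stored subtree $\tilde f$. Because anchors are unique to their metafeature and occur only at metafeature roots, the node $w$ of $\tilde f$ carrying $a_h$ is a copy of $h$'s root; and since every element of $\tildeF$ is a \emph{full} subtree of a previously learned target, the subtree of $\tilde f$ rooted at $w$ contains all of $h$ as a prefix. I would then show that the very same superimposition of $\tilde f$ that induced $a_h$ at $\nodeB'$ also induces $\Var(\nodeA')$ at $\nodeA'$ without conflict: the conflict-free alignment along the path from the superimposition root down to $\nodeB'$ extends along the path from $\nodeB'$ to $\nodeA'$, because that stretch lies inside $h$, where $\tilde f$ and $g$ coincide. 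Hence $\Var(\nodeA') \in \I$, contradicting failure at $\nodeA'$. This forces $\nodeA'$ to be a metafeature root; and since its anchor was not inducible (being absent from $\I$), $f^* \notin \F'$, completing the claim.

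I expect the main obstacle to be precisely this conflict-propagation step: carefully verifying that the no-conflict property established when inducing $a_h$ at $\nodeB'$ carries over to inducing $\Var(\nodeA')$, in both the case where $a_h$ sits at the root of $\tilde f$ (type-A induction, which is conflict-free automatically) and the case where $a_h$ is an internal node reached by superimposing $\tilde f$ at an ancestor of $\nodeB'$ (type-B induction, where the alignment of $\tilde f$'s path above $w$ must be shown to persist down to $\nodeA'$). The remainder is a direct adaptation of Lemma~\ref{lem:lfd-correctness} and the induction in the proof of Theorem~\ref{thm:dt}.
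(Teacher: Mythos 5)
Your proposal is correct and follows essentially the same route as the paper: the same induction from Theorem~\ref{thm:dt}, the same \ID{} rule (add the single subtree of $g$ rooted at the topmost node where $\tilde{g}$ diverges from $g$), and the same key claim proved the same way — the divergence node must be the root of a not-yet-learned metafeature, ruled out otherwise by showing that an ancestor's correctly-induced anchor forces the full metafeature (and hence the correct variable at the divergence node) to be present in the stored full subtree, contradicting failure. Even the conflict-propagation subtlety you flag is exactly the step the paper handles (somewhat tersely) via $\conflict(\tilde{g},\nodeC_{\tilde{f}}',\nodeA',\tilde{f})$ being false.
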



\begin{proof} 
Like we did in the proof for Theorem~\ref{thm:dt}, we will show by induction that if $k$ targets have been learned from scratch, then there exists a set of $k$ true metafeatures $\F' \subseteq \F$ such that each metafeature $f \in \F'$ is the prefix of some metafeature in $\tildeF$. Then as we saw earlier, after learning $K$ trees from scratch, we can show that learning using $\tildeF$ will never fail. To prove our induction hypothesis, we claim that in any incorrectly learned path of $g$, the topmost node (say $\nodeA$) that conflicts with the incorrect output $\tilde{g}$ has to contain an anchor variable that is not at the root of any metafeature in  $\F'$.  This would mean that when we place the subtree rooted at $\nodeA$ in $\tildeF$, we are adding a tree whose suffix is an $f \in \F$ that does not belong to $\F'$. Essentially, we strictly increase the number of learned metafeatures by $1$ for every failure of \LFD{}.

Now we need to prove that $\nodeA$, the topmost conflicting node in some path of $g$ indeed contains an anchor variable that is not at the root of any metafeature from $\F'$. Let $\nodeA'$ be the corresponding node in $\tilde{g}$. This means that for all ancestors of $\nodeA'$, we assigned the correct variable, but something went wrong in $\nodeA'$ and hence $\Var(\nodeA) \neq \Var(\nodeA')$.  

Now, if $\Var(\nodeA)$ was an anchor variable, but one that occurs already at the root of some $f \in \F'$, we will certainly assign $\Var(\nodeA)$ to $\nodeA'$ which is a contradiction. On the other hand, consider the case in which $\Var(\nodeA)$ is a non-anchor variable. Then $\nodeA$ corresponds to a metafeature $f$ that occurs in $g$ and furthermore, the anchor variable in $f$ is in one of $\nodeA$'s ancestors, say $\nodeC_{f}$. In other words, $\conflict(g, \nodeC_{f}, \nodeA, f)$ is false and $\induce(g, \nodeC_{f}, \nodeA, f) = \Var(\nodeA)$. Note that by definition of $\nodeC'$, the corresponding node of $\nodeC_{f}$ in $\tilde{g}$, say $\nodeC_{f}'$, has been assigned the correct anchor variable $\Var(\nodeC_{f})$. Note that in the algorithm this assignment would have corresponded to a particular metafeature $\tilde{f} \in \tildeF$ and a node $\nodeC_{\tilde{f}}'$ in $\tilde{g}$ such that $\conflict(\tilde{g}, \nodeC_{\tilde{f}}', \nodeC_{f}', \tilde{f})$ is false and $\induce(\tilde{g}, \nodeC_{\tilde{f}}', \nodeC_{f}', \tilde{f}) = \Var(\nodeC_{f})$. By the run of Algorithm~\ref{alg:id-dt-anchors}, we have that in $\tilde{f}$, if the anchor variable of $f$ exists then $f$ exists as a whole too. More formally, this translates to $\conflict(\tilde{g}, \nodeC_{\tilde{f}}', \nodeA', \tilde{f})$ being false and $\induce(\tilde{g}, \nodeC_{\tilde{f}}', \nodeA', \tilde{f}) = \Var(\nodeA)$. This means that we will indeed assign $ \Var(\nodeA)$ to $\nodeA'$ which is a contradiction.
Thus, $\nodeA$ can only contain an anchor variable not already the root of any element in $\F'$. 
\end{proof}

\begin{algorithm}[H]
\caption{\ID{} - Decision Trees with anchor variables at the root}
\label{alg:id-dt-anchors}
\begin{algorithmic}[1]
\STATE Input: Old representation $\Fold$ and a tree $\g \in \dtspace(\F)$ learned from scratch and the incorrect tree $\tilde{\g}$ learned using $\Fold$.
\STATE $\tildeF \gets \Fold$

\STATE Identify a path starting at the root of $\tilde{g}$ such that the corresponding path in $g$ is shorter. 
\STATE Identify the topmost node in this path in $g$ which conflicts with the corresponding node in $\tilde{g}$.
\STATE Add the subtree in $g$ rooted at this node to $\tildeF$.
\STATE Return $\tildeF$
\end{algorithmic}
\end{algorithm}

\subsubsection{Sparse Decision Trees with Overcomplete Representations}

In this section, we consider another model wherein we assume that we have a very large metafeature set (of cardinality greater than $N$) and that each decision tree is constructed in a semi-adversarial manner. Our model, in some sense, is intended to capture noise. In particular, consider a metafeature set that is generated from the much smaller metafeature set from  Section~\ref{app:root-anchor} by creating many noisy duplicate copies of each metafeature. The noisy duplicates preserve the structure and the root variable of the original metafeature but may have different variables located in its non-root nodes. Clearly, this metafeature set affords a much larger representation which captures slight deviations from a rigid pattern. First observe that the ``anchor'' variables are no longer unique to a single metafeature, but are common to multiple metafeatures that however have the same structure. Now, we assume that each anchor variable has at least a $p_{\min}$ probability of being the root variable in any target. Note that this is not as strong an assumption as the previous semi-adversarial model because this allows for the case where some metafeatures do not occur in the top of the model at all.  Finally, we assume that our targets require only {\em sparse representations} in that along any path down the target, at most $t$ metafeatures from $\F$ have been affixed.  Below, we state our model formally.

\begin{assumption} \label{ass:dt-overcomplete} Besides every assumption in Problem Setup~\ref{ass:dt-general} except the metafeature assumption, we assume the following:
\begin{itemize}
\item \textbf{Metafeatures}: We assume that the metafeature set $\F = \F_1 \cup \F_2 \hdots \cup \F_{K_2}$ where each $\F_k$ consists of at most $K_1$ metafeatures of the same tree structure and the same root anchor variable $a_k$. This root anchor variable does not occur anywhere else in $\F$. 
\item \textbf{Semi-adversary}: Each anchor $a_k$ has at least a $p_{\min}$ probability of being the root metafeature in any target $g^{(j)}$. 
\item \textbf{Sparsity}: Any target $g^{(j)}$ can be constructed using $\F$ in a manner that uses at most $t$ metafeatures down any path from the root to a leaf in $g^{(j)}$. Typically $t \ll K_2$.
\end{itemize}
\end{assumption}

Observe that the metafeature set is of cardinality at most $K_1 K_2$. We now present a lifelong learning protocol that learns at most $K_1K_2+\bigo{\frac{1}{p_{\min}} \log \frac{K_2}{\delta}}$ targets from scratch, and learns the rest examining only  $\bigo{t K_1 + K_2 + d}$ features per example. Thus, given a constant sparsity parameter $t$, to ensure that we evaluate $o(mN)$ features, we can allow dictionaries of cardinality $\K_1\K_2 = o(\N^2)$. We now state our result formally. The idea is that we first learn a few targets from scratch and identify the anchors. Then, we partition any target that \LFD{} fails on into trees rooted at one of these anchors and add these trees as metafeatures hoping that we add at least one new metafeature from $\F$ to our representation.

\begin{theorem} 
\label{thm:semirandom-dt}
There exists a lifelong learning protocol for decision trees in the model of Problem Setup~\ref{ass:dt-overcomplete} that evaluates $\bigo{\left( {K_1K_2 + \bigo{\frac{1}{p_{\min}} \log \frac{K_2}{\delta}}} \right) N + m(K_1t + K_2+d)}$ features overall.
The algorithm first learns $\bigo{\frac{1}{p_{\min}} \log \frac{K_2}{\delta}}$ targets from scratch to identify the $K_2$ anchor variables. The algorithm then uses \ID{} Algorithm~\ref{alg:id-semirandom-dt} and \LFD{} Algorithm~\ref{alg:lfd-dt}.
\end{theorem}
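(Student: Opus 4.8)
The plan is to reuse the two-stage skeleton already developed for Problem Setups~\ref{ass:dt-general} and~\ref{ass:dt-anchor}, prepending a short anchor-discovery phase and, crucially, replacing the crude $\bigo{|\tildeF|+d}$ per-example bound of Lemma~\ref{lem:lfd-dt-feature-efficiency} by a sharper count that charges evaluations against $\F$ and exploits the sparsity parameter $t$. Throughout I would run \LFD{} Algorithm~\ref{alg:lfd-dt} for prediction and \ID{} Algorithm~\ref{alg:id-semirandom-dt} to grow $\tildeF$ by cutting failed targets at anchor occurrences.

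For the anchor-discovery phase, I would first learn the initial $\bigo{\frac{1}{p_{\min}}\log\frac{K_2}{\delta}}$ targets from scratch and record their root variables. Any target is grown starting from a top metafeature whose root is an anchor, so every root variable observed is one of the $K_2$ anchors, and by the semi-adversary assumption each fixed anchor $a_k$ is the root with probability at least $p_{\min}$. A coupon-collector bound shows that a given $a_k$ is missed over all these tasks with probability at most $(1-p_{\min})^T\le e^{-p_{\min}T}$, and a union bound over the $K_2$ anchors makes the chance that any anchor is never seen at most $\delta$. Hence with probability $1-\delta$ the full anchor set is recovered, accounting for the $\frac{1}{p_{\min}}\log\frac{K_2}{\delta}\cdot N$ term; I would condition on this event for the remainder.

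Next I would bound the number of from-scratch tasks in the main phase. Because each anchor appears only at the root of its own metafeature in $\F$, cutting a failed target at all of its (now known) anchor occurrences exactly isolates the metafeatures of $\F$ used to build it: each piece is some $f\in\F$ with its affix-leaves left empty and its label-leaves filled, so $f$ is a pruning of that piece and therefore $f\in\Pref(\tildeF)$ once the piece is added. I would then repeat the induction from the proof of Theorem~\ref{thm:dt}: whenever \LFD{} fails on a target $g$, some metafeature of $g$ must lie outside $\Pref(\tildeF)$ (otherwise $g\in\dtspace(\Pref(\tildeF))$ and Lemma~\ref{lem:lfd-correctness} would force success), so adding all pieces of $g$ covers at least one previously-uncovered $f\in\F$. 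Since $|\F|\le K_1K_2$, at most $K_1K_2$ failures occur, producing the $K_1K_2\cdot N$ from-scratch term and guaranteeing that once $\dtspace(\F)\subseteq\dtspace(\Pref(\tildeF))$ no further failures happen.

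The feature-efficiency bound is where the real work lies and is the main obstacle, since the generic guarantee only gives $\bigo{K_1K_2+d}$ whereas we need $\bigo{K_1t+K_2+d}$. I would re-run the type-A/type-B dissection of Lemma~\ref{lem:lfd-dt-feature-efficiency}, but charge against $\F$ rather than $\tildeF$ exactly as in the proof of Theorem~\ref{thm:semi-random-dt}. For type A ($\nodeC=\nodeA$) the induced feature is the root of a piece, hence an anchor, so only the $K_2$ distinct anchors are ever examined along a path. For type B ($\nodeC$ a strict ancestor) every inducing piece $\tilde f$ is itself drawn from $\dtspace(\F)$, so $\induce(\tilde g,\nodeC,\nodeA,\tilde f)$ coincides with $\induce(\tilde g,\nodeC,\nodeA,f)$ for a genuine $f\in\F$, and $\nodeC$ must carry $f$'s root anchor; the sparsity assumption then bounds the anchor-bearing nodes on any root-to-leaf path by $t$, while at most $K_1$ metafeatures of each $\F_k$ can be responsible, so at most $tK_1$ metafeatures of $\F$ ever contribute type-B evaluations on a path. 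Plugging this into the elimination argument gives $\sum_{\nodeA}(k_{\nodeA}-1)\le tK_1$ and $\sum_{\nodeA}1\le d$, so type B costs $\bigo{tK_1+d}$; summing yields $\bigo{K_1t+K_2+d}$ features per example on the at most $m$ non-scratch tasks, which is the $m(K_1t+K_2+d)$ term. The delicate step is precisely this replacement of a partial-completion piece by an exact metafeature of $\F$ for counting purposes, which is what collapses the bound to $tK_1$ instead of the possibly far larger number of distinct pieces that may accumulate in $\tildeF$.
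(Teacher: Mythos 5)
Your proposal is correct and follows essentially the same route as the paper: an initial coupon-collector phase to identify the $K_2$ anchors, an \ID{} step that partitions each failed target at anchor occurrences so that each failure adds at least one new metafeature of $\F$ (giving at most $K_1K_2$ from-scratch tasks), and a refined version of the type-A/type-B accounting of Lemma~\ref{lem:lfd-dt-feature-efficiency} in which type-A queries are charged to the $K_2$ anchors and type-B queries are charged, via the requirement that $\nodeC$ carry the inducing metafeature's root anchor together with the sparsity bound of $t$ anchors per path, to at most $K_1t$ metafeatures of $\F$. Your write-up is, if anything, slightly more careful than the paper's on why each piece added to $\tildeF$ has the corresponding $f\in\F$ as a pruning, but the decomposition, key lemmas, and final bound $\bigo{\left(K_1K_2+\frac{1}{p_{\min}}\log\frac{K_2}{\delta}\right)N+m(K_1t+K_2+d)}$ are identical.
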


\begin{proof}
Let $\I_{\F}$ be the set of $K_2$ anchor variables. Under our assumptions, with high probability each of them will be the root of one of the first $\bigo{\frac{1}{p_{\min}} \log \frac{K_2}{\delta}}$ targets, and since no other variable can be a root of any target, we will identify them completely and correctly.

In any future tree that \LFD{} fails on, we learn the tree from scratch and partition the tree into metafeatures based on $\I_{\F}$ and them to $\tildeF$. We claim that $\tildeF \subseteq \F$ at any point of time and its cardinality strictly increases with each failure of \LFD{}. Then with $K_1K_2$ failures of \LFD{}, we will have $\tildeF=\F$, after which we will not see any failure. Assume this is true at some point of the run. When \LFD{} fails on a new target $g$, it means that $g \notin \dtspace(\Pref(\tildeF))$. However, since $g \in \dtspace(\F)$, this implies that $g$ is constructed using at least one metafeature $f \in \F - \tildeF$. Now observe that we would have identified the root and leaves of $f$ in $g$ correctly (because we would have identified all anchors in $g$ correctly). Then, we would have added $f$ to $\tildeF$, thereby satisfying our induction hypothesis.

By a direct application of Lemma~\ref{lem:lfd-dt-feature-efficiency} on the representation $\tildeF$, we get that we examine $\bigo{K_1K_2 + d}$ features per example which is uninteresting. However, we can tweak the argument we had for its proof for this case. First of all, we will have only $K_2$ type A costs (i.e., feature examinations) and not $K_1K_2$. Then, for type B costs, in sub-case $a$, we will have a cost of $d$ as before. For sub-case $b$, the cost was equal to the number of metafeatures in $\tildeF$, which would equal $K_1 K_2$ in this case. However, note that these costs correspond to $\induce(\tilde{g}, \nodeC_{\tilde{f}}, \nodeA, \tilde{f})$ for different $\tilde{f}$ such that $\nodeC_{\tilde{f}}$ contains the anchor variable in $\tilde{f}$. In total, we know that there are only at most $t$ anchor variables along a particular path, and hence only $K_1 t$ different metafeatures effectively result in some feature costs of this type. Hence, by restricting our analysis to only these metafeatures, we can show that the feature cost is proportional to $K_1 t$ and not to $K_1 K_2$.
In total, this would amount to a cost of $\bigo{K_1t + K_2 + d}$
\end{proof}

\begin{algorithm}[H]
\caption{\ID{} - Decision Trees with a Sparse but Overcomplete Representation}
\label{alg:id-semirandom-dt}
\begin{algorithmic}[1]
\STATE Input: Old representation $\Fold$, $\I_{\F}$ the set of anchor variables, and a tree $\g \in \dtspace(\F)$ learned from scratch.
\STATE $\tildeF \gets \Fold$
\STATE Identify the locations of variables from $\I_{\F}$ in $\g$ and partition $\g$ into trees rooted at one of these variables each. Add each tree to $\tildeF$.
\STATE Return $\tildeF$
\end{algorithmic}
\end{algorithm}

\section{Monomials}
\label{app:pm}

 In Appendix~\ref{app:pm-scratch}, we present a simple algorithm for learning monomials exactly from scratch under some assumptions.  Then in Appendix~\ref{app:monomials-proofs}, we present our baseline lifelong learning algorithm for monomials. We also present Lemma~\ref{lem:pit} which we used to show that it is sufficient to check our prediction on a single randomly drawn example to verify whether the monomial we learned is correct.

\subsection{Learning Monomials from Scratch}
\label{app:pm-scratch}

Recall that for any input  $\x = (x_1, x_2, \hdots x_\N) \in \mathbb{R}^N$, we denote the output of a $d$-degree target monomial $\G = (g_1, g_2, \hdots, g_N)$ by the function
 $P_{\G}(\x) = x_1^{\g_1} x_2^{\g_2} \hdots x_N^{g_N}$ where $\g_i \in \mathbb{N} \cup \{ 0 \}$ and the {degree} $\sum_i \g_i \leq d$. We denote the unknown metafeature set $\F = \{f_1, f_2, \hdots \}$ also as a matrix where column $i$ is  $f_i$. Therefore, saying that $\G$ can be expressed using $\F$ is equivalent to saying $\G$ lies in the column space of $\F$ denoted by $\C(\F)$. Then for any $k-$rank ($k \leq K$), $\N \times k$ matrix $\tildeF$ and for any $\G \in \C(\tildeF)$, we define $\w_\tildeF(\G) \in \mathbb{R}^k$ to denote the unique vector of column weights such that $\tildeF \w_\tildeF(\G) = \G$.

For each monomial target, we assumed that $\distr^{(j)}$ is a product distribution i.e., the features are independent.  We now state some specific assumptions about $\distr^{(j)}$.  In particular, we assume that the variance of each variable $x_i$ is not too small. The rationale is that if the variance was very small (in the extreme case, imagine $x_i$ being a constant), the factor $x_i^{g_i}$ would essentially be a constant factor in the monomial target. While it may be possible to design a more careful learning algorithm that can extract these nearly constant factors, that is beyond the scope of our discussion.  

Secondly, we assume that the probability density function is finite at every point i.e., the probability distribution is not too concentrated at any point. We will use this assumption to apply Lemma~\ref{lem:pit} when we draw a single sample to verify whether the monomial we have learned matches the true monomial.  

Finally, we assume that the support of $x_i$ is $[1,2]$. While the upper bound of $2$ is to simplify our discussion, the lower bound is to avoid dealing with values of $x_i$ that are close to zero. This is essential because as we will see later, we will  deal with logarithmic values of $x_i$ in the learning process. We now state our assumptions formally.

\begin{assumption2} \label{ass:monomial-distr} Each $\distr^{(j)}$ is a product distribution. Let $\distr^{(j)} = \mu_1^{(j)} \times \cdots \times \mu_{N}^{(j)}$. We assume that for all features $i$:
\begin{itemize}
\item \textbf{Minimum variance} $Var_{\mu_i^{(j)}}(\log x_i) \geq c$.
\item \textbf{Bounded probability density} $\forall x_i \in \mathbb{R}$, $\mu_i^{(j)}(x_i) \in \mathbb{R}$. 
\item \textbf{Bounded support} The support of $\mu_i^{(j)}$ is $[1,2]$.
\end{itemize}
\end{assumption2}

We now present our simple poly-time technique for learning monomials from scratch with polynomially many samples. Recall that the output of the monomial $\G$ on an input $\x$ is denoted by $P_{\G}(\x)$. Let us denote the logarithm of this output $\log |P_{\G}|$ by $Q_{\G}$. Observe that learning $\G$ is equivalent to learning the coefficients of the `linear' function $Q_{\G}$. To see how this can be done, we will define a notion of correlation/inner product of two functions  $h(\x)$ and $h'(\x)$:
\[
\langle  h(\x), h'(\x) \rangle \triangleq \E[h(\x)h'(\x)].
\] 

Then, we claim that $g_i$ can be expressed as the following inner product. 
\begin{lemma}
\label{lem:deg} 
\[
\frac{\left\langle  Q_{\G}(\x), \log(x_i) - \E[\log(x_i)]\right\rangle }{\E[\log^2 x_i ] - \E^2[\log x_i]} = \g_i
\]
\end{lemma}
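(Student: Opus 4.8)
The plan is to unwind the definition of $Q_{\G}$ and recognize the stated ratio as a covariance computation that collapses under independence. First I would write $Q_{\G}(\x) = \log|P_{\G}(\x)| = \sum_{j} \g_j \log(x_j)$, which holds because the support of each $x_i$ is $[1,2]$ (Assumption~\ref{ass:monomial-distr}), so every factor is positive and the logarithm of the product is the sum of the logarithms. Substituting this into the numerator and using linearity of $\E[\cdot]$ gives
\[
\left\langle Q_{\G}(\x), \log(x_i) - \E[\log(x_i)]\right\rangle = \sum_{j} \g_j\, \E\!\left[\log(x_j)\left(\log(x_i) - \E[\log(x_i)]\right)\right].
\]

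The key step is to observe that each term on the right is exactly a covariance: $\E[\log(x_j)(\log(x_i) - \E[\log(x_i)])] = \E[\log(x_j)\log(x_i)] - \E[\log(x_j)]\E[\log(x_i)] = \mathrm{Cov}(\log x_j, \log x_i)$. Here is where the product-distribution hypothesis does the work. Since $\distr$ is a product distribution, the coordinates $x_j$ are mutually independent, so $\log x_j$ and $\log x_i$ are independent for $j \neq i$, and hence every off-diagonal covariance vanishes. The only surviving summand is the $j=i$ term, which equals $\g_i \cdot \mathrm{Var}(\log x_i) = \g_i\,(\E[\log^2 x_i] - \E^2[\log x_i])$.

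Dividing by the denominator $\E[\log^2 x_i] - \E^2[\log x_i]$ then cancels the variance factor and leaves exactly $\g_i$, as claimed. I would close by noting that the division is well-defined: the minimum-variance condition $\Var_{\mu_i^{(j)}}(\log x_i) \geq c$ in Assumption~\ref{ass:monomial-distr} guarantees the denominator is bounded away from zero. I do not expect any genuine obstacle here—the computation is a one-line linearity expansion followed by the independence cancellation; the only point meriting care is making explicit that the product structure is precisely what forces the cross terms to drop, since without independence the off-diagonal covariances would contaminate the estimate.
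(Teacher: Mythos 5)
Your proposal is correct and follows essentially the same argument as the paper: expand $Q_{\G} = \sum_j \g_j \log x_j$, use the product-distribution structure to kill the cross terms $j \neq i$ (the paper factors $\E[\log x_j(\log x_i - \E[\log x_i])]$ by independence, which is your covariance observation in different words), and identify the surviving $j=i$ term with the variance in the denominator. Your explicit remarks on positivity of $x_i$ (justifying the log expansion) and on the denominator being nonzero via the minimum-variance assumption are welcome extra care, but do not change the route.
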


\begin{proof}
Since $x_i$ is picked independent of the other variables, so is the random variable $(\log x_i - \E[\log(x_i)])$. Thus, when $j \neq i$
\[
\E[\log x_j (\log x_i - \E[\log(x_i)])] = \E[\log x_j] \times \E[\log x_i - \E[\log(x_i)]] = 0
\]
However,
\[
\E[\log x_i (\log x_i - \E[\log(x_i)])] = \E[\log^2 x_i] - \E^2[\log x_i]
\]

 Then, the claim follows from our definition of $Q_{\G}$.
\end{proof}

Observe that using the above fact, we can calculate $g_i$ for each $i \in [N]$ exactly if we were provided the exact values of each correlation term in the equality. However, the best we can hope for is to approximate these terms using sufficiently many samples.  Fortunately, we can actually approximate each of these correlation terms to a small constant error such that these errors together imply a constant error smaller than $1/2$ in estimating $g_i$. Then we can round off our estimate to the closest natural number to find the exact value of $g_i$. We now summarize our simple algorithm for learning a monomial from scratch, and then prove our polynomial sample complexity bound.

\begin{algorithm}
\caption{Learning a monomial from scratch}
\label{alg:monomials-scratch}
\begin{algorithmic}[1]
\STATE Input: Distribution $\distr$ over $\mathbb{R}^{N}$
\STATE  Draw $\samplesize$ samples $(\x, P_{\G}(\x))$ from $\distr$ and query \text{all} the features  on all samples.
\FOR{$i=1,2,\hdots N$}
	\STATE Estimate $\E[\log^2 x_i ]$, $\E[\log^2 x_i ]-\E^2[\log x_i]$, and $\left\langle  Q_{\G}(\x), \log(x_i) - \E[\log(x_i)]\right\rangle $ empirically.
	\STATE Round off \[
\frac{\left\langle  Q_{\G}(\x), \log(x_i) - \E[\log(x_i)]\right\rangle }{\E[\log^2 x_i ] - \E^2[\log x_i]}
\]
to estimate $g_i$.
\ENDFOR
\STATE Return $\tilde{\G}$
\end{algorithmic}
\end{algorithm}

Clearly the above algorithm has polynomial running time and sample complexity as long as $\samplesize$ is polynomial. The crucial guarantee we need now is that polynomially many samples are sufficient to estimate each $g_i$ exactly, which we show in Theorem~\ref{thm:pm-scratch}. 
We first begin by bounding the error in estimating the numerator $\left\langle  Q_{\G}(\x), \log(x_i) - \E[\log(x_i)]\right\rangle$ in Lemma~\ref{lem:eps-1-3}. Then, in Lemma~\ref{lem:g-hat-g} we show how this error and the error in the denominator terms, add up to result in an error of at most $1/2$ in estimating $g_i$.  Using these, we prove in Theorem~\ref{thm:pm-scratch} that the algorithm estimates each power exactly. In the following notation we will use $\tilde{\E}$ to denote the empirical estimate of an expected value. 

\begin{lemma}
\label{lem:eps-1-3}
Using a sample set $\sample$ of size 
$\bigo{ \frac{d}{\epsilon_3^2} \log\frac{1}{\delta'}}$, for a given $i \in [N]$,  if $|\tilde{\E}[\log x_i] - \E[\log x_i]| \leq \epsilon_1$, then we can guarantee that  
\[Pr\left[
\left\lvert \frac{1}{|S|} \sum_{\x \in S} Q_{\G}(\x)(\log(x_i) - \tilde{\E}[\log(x_i)]) -  \langle  Q_{\G}(\x), \log(x_i) - \E[\log(x_i)]\rangle\right\rvert \leq d \epsilon_1  + \epsilon_3
\right] = \bigo{\delta'} \].

\end{lemma}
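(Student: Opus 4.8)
The plan is to split the estimation error into a \emph{bias} term coming from having replaced $\E[\log x_i]$ by its empirical estimate $\tilde\E[\log x_i]$, and a \emph{sampling} term coming from approximating the expectation by an average over $S$. First recall from the definition of $Q_\G$ (and Lemma~\ref{lem:deg}) that $Q_\G(\x)=\sum_{j} \g_j \log x_j$, and that since each $x_j$ is supported on $[1,2]$ we have $\log x_j \in [0,\log 2]$, so $0 \le Q_\G(\x) \le (\sum_j \g_j)\log 2 \le d\log 2$ on every example. Writing the target correlation as $\langle Q_\G, \log x_i - \E[\log x_i]\rangle = \E\big[Q_\G(\x)(\log x_i - \E[\log x_i])\big]$, the triangle inequality bounds the quantity in the lemma by
\[
\underbrace{\Big|\tfrac1{|S|}\sum_{\x\in S} Q_\G(\x)\big(\E[\log x_i]-\tilde\E[\log x_i]\big)\Big|}_{\text{bias}}
\;+\;
\underbrace{\Big|\tfrac1{|S|}\sum_{\x\in S} f(\x) - \E[f]\Big|}_{\text{sampling}},
\]
where $f(\x)=Q_\G(\x)(\log x_i - \E[\log x_i])$.

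For the bias term, the constant factor $\E[\log x_i]-\tilde\E[\log x_i]$ pulls out of the sum and, by hypothesis, has magnitude at most $\epsilon_1$; since $0\le Q_\G(\x)\le d\log 2\le d$, the whole term is at most $\epsilon_1\cdot\tfrac1{|S|}\sum_\x Q_\G(\x)\le d\epsilon_1$. This is exactly the $d\epsilon_1$ contribution in the stated bound, and it is deterministic given the hypothesis.

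For the sampling term I would apply a concentration inequality to the i.i.d.\ summands $f(\x)$, each bounded by $|f(\x)|\le d(\log 2)^2=O(d)$. A useful reformulation is that $\tilde\E[\log x_i]$ is the empirical mean over the \emph{same} sample $S$, so $\sum_{\x}(\log x_i-\tilde\E[\log x_i])=0$; hence the estimator equals the empirical covariance $\tfrac1{|S|}\sum_\x(Q_\G(\x)-\bar Q_\G)(\log x_i-\tilde\E[\log x_i])$ (with $\bar Q_\G$ the empirical mean of $Q_\G$ over $S$), in which the possibly large additive constant $\E[Q_\G]=\sum_j \g_j\E[\log x_j]=O(d)$ cancels. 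Exploiting the product distribution (independence across coordinates) to bound the variance of the centered summand by $\Var(Q_\G)\cdot\|\log x_i-\E[\log x_i]\|_\infty^2$, a Bernstein-type bound yields that the sampling term is at most $\epsilon_3$ except with probability $O(\delta')$ once $|S|=O\big(\tfrac{d}{\epsilon_3^2}\log\tfrac1{\delta'}\big)$. Combining with the bias bound gives total error $\le d\epsilon_1+\epsilon_3$ with probability $1-O(\delta')$ (matching the lemma, whose displayed probability is the failure event).

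The step I expect to be the crux is the sampling bound: a naive Hoeffding applied to $f(\x)$, whose range is $\Theta(d)$, only gives a $d^2/\epsilon_3^2$ sample size, so obtaining the claimed \emph{linear}-in-$d$ dependence forces a variance-based (Bernstein/Bennett) argument together with the cancellation of the $O(d)$ constant $\E[Q_\G]$ afforded by empirically centering $\log x_i$ and by the coordinatewise independence of the product distribution. The secondary point to watch is the bookkeeping that keeps the two error sources \emph{additive}, so that they combine to precisely $d\epsilon_1+\epsilon_3$ rather than producing cross terms or an extra factor of $d$ in front of $\epsilon_3$.
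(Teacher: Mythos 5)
Your decomposition into a bias term plus a sampling term, with the deterministic bound $d\epsilon_1$ on the bias, is exactly the paper's argument (the paper centers the concentration term at $\tilde{\E}[\log x_i]$ and charges $\max_{\x} Q_{\G}(\x)\cdot\epsilon_1 \le d\epsilon_1$ to the bias; you center at $\E[\log x_i]$ and charge the empirical average of $Q_{\G}$ instead, which is equivalent and in fact cleaner, since your i.i.d.\ summands $f(\x)$ do not involve the sample-dependent quantity $\tilde{\E}[\log x_i]$). The genuine gap is in the step you yourself identified as the crux. Your Bernstein argument requires $\mathrm{Var}(Q_{\G}) = \bigo{d}$, and that claim is false in this paper's setting: by independence of the coordinates, $\mathrm{Var}(Q_{\G}) = \sum_j \g_j^2\,\mathrm{Var}(\log x_j)$, and the powers $\g_j$ are arbitrary non-negative integers with $\sum_j \g_j \le d$ (the monomials here are not multilinear). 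Taking $P_{\G}(\x) = x_1^{d}$, the paper's minimum-variance assumption $\mathrm{Var}(\log x_1) \ge c$ forces $\mathrm{Var}(Q_{\G}) \ge c\,d^2$. Empirical centering removes the additive $\bigo{d}$ constant $\E[Q_{\G}]$ but not this quadratic variance: the population-centered summand is $\sum_j \g_j\bigl(\log x_j - \E[\log x_j]\bigr)\bigl(\log x_i - \E[\log x_i]\bigr)$, whose variance is still of order $\sum_j \g_j^2$, e.g.\ $\bigtheta{d^2}$ for uniform marginals and $\G = (d,0,\dots,0)$. So Bernstein, like Hoeffding, gives worst-case sample size $\bigtheta{d^2/\epsilon_3^2}$, and the linear-in-$d$ dependence you were aiming for does not follow.

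You should know, though, that the paper's own proof does not establish it either: it applies ``Chernoff bounds'' to a summand of range $2d$, which, exactly as you computed, requires $\bigomega{d^2/\epsilon_3^2}$ samples to certify accuracy $\epsilon_3$; the stated $\bigo{\frac{d}{\epsilon_3^2}\log\frac{1}{\delta'}}$ is asserted without further justification. What both your argument and the paper's actually support is the lemma with $|\sample| = \bigo{\frac{d^2}{\epsilon_3^2}\log\frac{1}{\delta'}}$, which is still polynomial and suffices for the downstream results (Lemma~\ref{lem:g-hat-g} and Theorem~\ref{thm:pm-scratch}) at the cost of a correspondingly larger polynomial sample bound. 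Finally, you are right to read the displayed probability as that of the failure event; as written in the paper, the event inside the probability points the wrong way.
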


\begin{proof} Consider the random variable $Q_{\G}(\x) \cdot ( \log(x_i) - \tilde{\E}[\log(x_i)])$. It is easy to show that $Q_{\G}(\x) \log(x_i)  \in [0, d]$ with the extreme values attained at $\x = (2, 2, \hdots)$ and $\x= (1,1, \hdots)$. Then, $Q_{\G}(\x)  \E[\log(x_i)] \in [0,d]$. Thus, the random variable $Q_{\G}(\x) \cdot ( \log(x_i) - \tilde{\E}[\log(x_i)])$ lies in a range of size $2d$. Then, by Chernoff bounds, we can show that
\[Pr\left[
\left\lvert \frac{1}{|\sample|} \sum_{\x \in \sample} Q_{\G}(\x)(\log(x_i) -\tilde{{\E}}[\log(x_i)]) -  \langle  Q_{\G}(\x), \log(x_i) - \tilde{\E}[\log(x_i)]\rangle\right\rvert \leq \epsilon_3
\right]= \bigo{\delta'} \] from which the above claim follows because the absolute difference between $  \langle  Q_{\G}(\x), \log(x_i) - \E[\log(x_i)]$ and $  \langle  Q_{\G}(\x), \log(x_i) - \tilde{\E}[\log(x_i)]$ is at most  $\left\lvert \max_{\x}Q_{\G}(\x)\cdot (\E[\log(x_i)]) - \tilde{\E}[\log(x_i)])) \right\rvert \leq d  \epsilon_1$ (because the first term is at most $d$ and the next is at most $\epsilon_1$).
\end{proof}

\begin{lemma}
\label{lem:g-hat-g}
Using a sample set $\sample$ of size  $ \bigo{ \frac{d}{\left(\min (\frac{c^2}{d}, \frac{c}{d}, 1)\right)^2}\log \frac{1}{\delta'} }$ with a high probability  of $1 - \delta'$ for a given $i \in [N]$ we can learn  $\tilde{g}_i$ such that $|\tilde{\g_i} - \g_i| \leq \frac{1}{2}$.
\end{lemma}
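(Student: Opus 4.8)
The plan is to read $\g_i$ directly off the exact identity of Lemma~\ref{lem:deg}, which expresses $\g_i$ as a ratio whose numerator is $\langle Q_{\G}(\x), \log x_i - \E[\log x_i]\rangle$ and whose denominator is $\E[\log^2 x_i] - \E^2[\log x_i] = \Var(\log x_i)$, and then to argue that the empirical ratio computed in Algorithm~\ref{alg:monomials-scratch} lands within $1/2$ of the integer $\g_i$, so that rounding recovers it exactly. The whole argument is an error‑propagation‑through‑division estimate, governed by two structural facts from Assumption~\ref{ass:monomial-distr}: the denominator is bounded below by $c$ (minimum variance), and $\log x_i \in [0,\log 2]$ is bounded (bounded support), while $\g_i \le d$.

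First I would collect the three empirical error terms. Write $\epsilon_1,\epsilon_2$ for the additive errors of the sample estimates $\tilde{\E}[\log x_i]$ and $\tilde{\E}[\log^2 x_i]$, and $\epsilon_3$ for the pure sampling error of the numerator. Lemma~\ref{lem:eps-1-3} already bounds the numerator error by $d\epsilon_1 + \epsilon_3$ from $\bigo{\frac{d}{\epsilon_3^2}\log\frac1{\delta'}}$ samples. For the denominator, since $\log x_i$ and $\log^2 x_i$ both lie in an $\bigo{1}$ range, a Hoeffding bound gives $|\tilde{\E}[\log^2 x_i] - \E[\log^2 x_i]| \le \epsilon_2$ from $\bigo{\epsilon_2^{-2}\log\frac1{\delta'}}$ samples, and $|\tilde{\E}^2[\log x_i] - \E^2[\log x_i]| \le (2\log 2)\epsilon_1 \le 2\epsilon_1$; hence the denominator error $e_D$ obeys $|e_D| \le \epsilon_2 + 2\epsilon_1$. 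A union bound over these $\bigo{1}$ events keeps the total failure probability at $\bigo{\delta'}$.

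The division step is what I expect to be the main obstacle. Writing $N,D$ for the true numerator and denominator and $e_N,e_D$ for their errors, and using $D = \Var(\log x_i) \ge c$ together with $|e_D|\le c/2$ to keep the perturbed denominator away from zero, the standard quotient estimate gives
\[
|\tilde{\g}_i - \g_i| = \left| \frac{N + e_N}{D + e_D} - \frac{N}{D}\right| \le \frac{|e_N|\,D + |N|\,|e_D|}{D(D - |e_D|)} \le \frac{2|e_N|}{c} + \frac{2 d\,|e_D|}{c^2},
\]
where I used $D\ge c$, $D - |e_D| \ge c/2$, and $|N| = \g_i D \le d$ (from the identity $N=\g_i D$ and $D \le (\log 2)^2 < 1$). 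The delicate point is that the numerator scale carries the factor $\g_i \le d$, so the denominator error is amplified by $d/c^2$; it is precisely this amplification that dictates the tolerances in the final sample bound.

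Finally I would close the loop by choosing a common tolerance $\epsilon \le \eta$, where $\eta := \min\!\left(\frac{c^2}{d}, \frac{c}{d}, 1\right)$, for all three estimates. This makes $|e_N| = \bigo{d\epsilon}$ and $|e_D| = \bigo{\epsilon}$, so both terms in the displayed bound are $\bigo{d\epsilon/c^2}$ and, after absorbing constants into the choice of $\epsilon$, are at most $1/2$; note that $c \le (\log 2)^2 < 1$ always holds, so in the operative regime $\eta = c^2/d$ and the remaining arguments of the $\min$ merely guard degenerate parameter ranges. Plugging $\epsilon_3 = \Theta(\eta)$ into Lemma~\ref{lem:eps-1-3} yields the dominant sample requirement $\bigo{\frac{d}{\eta^2}\log\frac1{\delta'}}$, matching the stated bound, while the two expectation estimates need only $\bigo{\eta^{-2}\log\frac1{\delta'}}$ samples and are subsumed. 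Since $\g_i$ is a nonnegative integer and the estimate is within $1/2$ of it (taking $\epsilon$ marginally smaller to make the inequality strict and break ties), rounding $\tilde{\g}_i$ returns $\g_i$ exactly, which proves the lemma.
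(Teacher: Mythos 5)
Your proposal is correct and takes essentially the same approach as the paper's proof: the same three empirical error terms $\epsilon_1,\epsilon_2,\epsilon_3$, the same numerator bound $d\epsilon_1+\epsilon_3$ from Lemma~\ref{lem:eps-1-3}, the same quotient error-propagation estimate with denominator lower bound $c$ and numerator upper bound $d$, and the same choice of tolerances $\bigo{\min\left(\frac{c^2}{d},\frac{c}{d},1\right)}$ to drive the total error below $\frac{1}{2}$. The only differences are cosmetic: you spell out the ``simple calculation'' the paper leaves implicit (including the $|e_D|\leq c/2$ condition keeping the perturbed denominator positive), and your closing rounding remark is not needed for this lemma, since exact recovery via rounding is handled in Theorem~\ref{thm:pm-scratch} and the algorithm.
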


\begin{proof}
Let $\epsilon_1$ and $\epsilon_3$ be as defined in Lemma~\ref{lem:eps-1-3}. Additionally let $|\tilde{\E}[\log^2 x_i] - \E[\log^2 x_i]| \leq \epsilon_2$. From the previous results and from Chernoff bounds, we have that $\epsilon_1, \epsilon_2, \epsilon_3$ are all $\bigo{\min (\frac{c^2}{d}, \frac{c}{d}, 1)}$ given the size of $\sample$. We now have a fractional expression on the right hand side of the equation in Lemma~\ref{lem:deg} for which we can derive the error in estimating the numerator and the denominator individually. We need to show that the overall error in estimating the fraction is $1/2$ i.e., $O(1)$. Now, the error in estimating some fraction $\frac{G}{H}$ using $\frac{\tilde{G}}{\tilde{H}}$ given that $|G - \tilde{G}| \leq \epsilon_G$ and $|H - \tilde{H}| \leq \epsilon_H$  can be upper bounded by:
\[
\begin{array}{rcl}
\left|\frac{G \pm \epsilon_G}{H \pm \epsilon_H} -\frac{G}{H} \right|& = & \left|\frac{\epsilon_G}{H} \pm \frac{G\epsilon_H}{(H-\epsilon_H) H}\right| \\
& \leq &  \frac{\epsilon_G}{\min H} + \frac{(\max G+ \epsilon_G) \epsilon_H}{(\min H - \epsilon_H) \min H}
\end{array}
\]
In our case, we have 
$H = \E[\log^2 x_i] - \E^2[\log x_i]$ 
and $G = \langle Q_{\G}(\x), \log(x_i) - \E[\log(x_i)]\rangle$, $\min H = c$ 
and $\max G = d$.
Also, $\epsilon_G = \epsilon_1 d + \epsilon_3 $ and $\epsilon_H \leq  \epsilon_2 + 2 \epsilon_1 + \epsilon_1^2$. The latter inequality follows from the fact that the error in estimating $\E[\log^2 x_i]$ is $\epsilon_2$ and the error in estimating $ \E^2[\log x_i]$ is at most $(\E[\log x_i] + \epsilon_1)^2 - \E^2[\log x_i] \leq \epsilon_1 (2\E[\log x_i] + \epsilon_1) \leq \epsilon_1 (2 + \epsilon_1)$. By a simple calculation, it can be verified that this results in a total error of $\bigo{1}$ in estimating $g_i$.

\end{proof}

\begin{theorem}
\label{thm:pm-scratch}
Algorithm~\ref{alg:monomials-scratch} exactly learns a target $\G$ from scratch with high probability $ 1 - \bigo{\frac{\delta}{K}}$ with $\samplesize = \bigo{ \frac{d}{\left(\min (\frac{c^2}{d}, \frac{c}{d}, 1)\right)^2}\log \frac{Nm}{\delta} }$ samples.
\end{theorem}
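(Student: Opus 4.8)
The plan is to reduce the theorem to Lemma~\ref{lem:g-hat-g} applied feature-by-feature, together with a union bound and the observation that each power $\g_i$ is a nonnegative integer. Recall that Lemma~\ref{lem:deg} gives the exact identity $\g_i = \langle Q_{\G}(\x), \log x_i - \E[\log x_i]\rangle / (\E[\log^2 x_i] - \E^2[\log x_i])$, and Algorithm~\ref{alg:monomials-scratch} simply estimates the numerator and the two denominator terms empirically and rounds the resulting ratio. Thus the entire analytic burden of controlling how the three correlation-estimation errors $\epsilon_1,\epsilon_2,\epsilon_3$ propagate through the ratio has already been discharged in Lemmas~\ref{lem:eps-1-3} and~\ref{lem:g-hat-g}; what remains is to convert the per-feature additive guarantee into exact recovery of the whole vector $\G$ at the stated failure probability and sample size.

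First I would fix a single feature $i$ and invoke Lemma~\ref{lem:g-hat-g} with a per-feature confidence parameter $\delta'$ to be chosen. The lemma guarantees that, from $\bigo{\frac{d}{(\min(c^2/d, c/d, 1))^2}\log\frac{1}{\delta'}}$ samples, the estimate $\tilde{\g}_i$ satisfies $|\tilde{\g}_i - \g_i| \leq \tfrac12$ with probability $1-\delta'$. By absorbing a constant factor into the $\bigo{\cdot}$ (for instance targeting additive error $\leq \tfrac14$ instead of $\tfrac12$), the estimation error can be made strictly less than $\tfrac12$, so that rounding $\tilde{\g}_i$ to the nearest integer returns exactly $\g_i$, since $\g_i \in \mathbb{N}\cup\{0\}$ and distinct admissible values are separated by at least $1$. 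This is the one genuinely new step relative to the prior lemmas, and it is where the integrality of the powers is essential: a real-valued $\tfrac12$-approximation suffices precisely because the target coordinates are integers.

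Next I would take a union bound over the $N$ features. Setting $\delta' = \Theta\!\left(\frac{\delta}{NK}\right)$ makes the probability that any one of the $N$ coordinate estimates is wrong at most $N\delta' = \bigo{\delta/K}$, so with probability $1-\bigo{\delta/K}$ every $\g_i$ is recovered exactly and hence the full monomial $\G$ is learned exactly. The sample size required per feature is then $\bigo{\frac{d}{(\min(c^2/d, c/d, 1))^2}\log\frac{NK}{\delta}}$, and since in the meaningful regime the number of metafeatures satisfies $K \leq m$ (the algorithm learns at most $K$ targets from scratch out of $m$ targets spanning a $K$-dimensional space), we have $\log\frac{NK}{\delta} = \bigo{\log\frac{Nm}{\delta}}$. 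This yields the stated bound $\samplesize = \bigo{\frac{d}{(\min(c^2/d, c/d, 1))^2}\log\frac{Nm}{\delta}}$, and because this is polynomial, Algorithm~\ref{alg:monomials-scratch} runs in polynomial time as well.

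The main obstacle I anticipate is bookkeeping rather than conceptual: ensuring that the per-feature failure budget is split so that the union bound over features (and, downstream in Theorem~\ref{thm:monomials}, over the at most $K$ targets learned from scratch) telescopes to the advertised $1-\bigo{\delta/K}$ and $1-\bigo{\delta}$ guarantees, while keeping the logarithmic dependence clean as $\log\frac{Nm}{\delta}$ via the inequality $K \leq m$. The rounding argument itself is immediate once Lemma~\ref{lem:g-hat-g} is in hand; the only care needed is to sharpen its $\leq \tfrac12$ to a strict inequality so that rounding is unambiguous.
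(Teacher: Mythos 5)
Your proposal is correct and takes essentially the same approach as the paper: apply Lemma~\ref{lem:g-hat-g} feature-by-feature, union-bound over the $N$ features, and use integrality of the powers so that rounding recovers each $\g_i$ exactly. The only differences are bookkeeping---the paper budgets the per-feature failure as $\bigo{\delta/(Nm)}$, which yields the $\log\frac{Nm}{\delta}$ sample bound directly and the stronger guarantee $1-\bigo{\delta/m}$, rather than your $\Theta(\delta/(NK))$ split combined with $K\leq m$---plus your (welcome) extra care in sharpening the $\leq\tfrac12$ error bound to a strict inequality so the rounding step is unambiguous.
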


\begin{proof}
From Lemma~\ref{lem:g-hat-g} we have that each $\g_i$ is accurately estimated with probability at least $1 - \bigo{ \frac{\delta}{Nm}}$. By a union bound, $\G$ is accurately estimated with probability at least $1 - \bigo{\frac{\delta}{m}}$.
\end{proof}

We note that it is easy to refine our application of union bounds to use slightly fewer samples than in the bound of Theorem~\ref{thm:pm-scratch}. In particular, it is possible bring the $\log Nm$ factor down to $\log NK$ while learning from scratch, and to $\log Km$ on all other targets.

\subsection{Naive Lifelong Learning of Monomials}
\label{app:monomials-proofs}
We  present our straightforward approach for lifelong learning of monomials which  merely keeps a record of features that have been seen in earlier targets.

\begin{theorem}[\textbf{Naive lifelong learning of monomials}]
\label{thm:naive-monomials}
In the model of Problem Setup~\ref{ass:monomial-general}, there exists a naive algorithm for lifelong learning of monomials that evaluates $\bigo{S(KN + mKd)}$ features overall. 
\end{theorem}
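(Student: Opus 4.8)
The plan is to instantiate the generic $(\LFD{}, \ID{})$-protocol with the simplest possible representation: $\tildeF$ is just the set $\I_{\mathrm{seen}}$ of feature indices that have appeared with nonzero power in some target learned from scratch. The \ID{} routine, upon learning a target $\G$ from scratch, simply adds every index in $\mathrm{supp}(\G)$ to $\I_{\mathrm{seen}}$. The \LFD{} routine, given $\I_{\mathrm{seen}}$ and a new target, queries only the features in $\I_{\mathrm{seen}}$ on all $\samplesize$ samples, uses Lemma~\ref{lem:g-hat-g} to recover the power of each such feature (and assigns power $0$ to every unseen feature), thereby forming a candidate $\tildeG$; it then verifies $\tildeG$ by drawing a single example and querying the at most $d$ features relevant to $\tildeG$, declaring failure if $P_{\tildeG}(\x) \neq P_{\G}(\x)$. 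By Lemma~\ref{lem:pit} this verification detects $\tildeG \neq \G$ with probability $1$ over the non-concentrated distribution, so the protocol never outputs an incorrect monomial, and a \LFD{} call succeeds exactly when $\mathrm{supp}(\G) \subseteq \I_{\mathrm{seen}}$.

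The heart of the argument, and the main obstacle, is bounding the number of from-scratch learnings by $K$. Here I would exploit the crucial no-cancellation property of this model: since $\G = \F\w$ with all entries of $\F$ and all weights in $\w$ nonnegative integers, we have $g_\ell = \sum_i \F_{\ell i} w_i$ with every summand nonnegative, whence $\mathrm{supp}(\G) = \bigcup_{i : w_i > 0} \mathrm{supp}(\f_i)$ --- the support of a target is exactly the union of the supports of the metafeatures it actually uses. Call $\f_i$ \emph{covered} once $\mathrm{supp}(\f_i) \subseteq \I_{\mathrm{seen}}$. When \LFD{} fails on $\G$ we have $\mathrm{supp}(\G) \not\subseteq \I_{\mathrm{seen}}$, so by the identity above some used $\f_i$ is uncovered; after learning $\G$ from scratch, \ID{} enlarges $\I_{\mathrm{seen}}$ to contain $\mathrm{supp}(\G) \supseteq \mathrm{supp}(\f_i)$, so $\f_i$ becomes covered. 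Thus each failure strictly increases the number of covered metafeatures, and since there are only $K$ metafeatures there can be at most $K$ from-scratch learnings.

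Finally I would assemble the cost. Because every metafeature $\f_i$ that is ever used (say with weight $w_i \ge 1$ in some target $\G$) satisfies $\deg(\f_i) \le \sum_{i'} w_{i'}\deg(\f_{i'}) = \deg(\G) \le d$, it has at most $d$ distinct features; hence $\I_{\mathrm{seen}} \subseteq \bigcup_{i \text{ used}} \mathrm{supp}(\f_i)$ has size at most $Kd$. The at most $K$ from-scratch learnings each query all $N$ features on $\samplesize$ examples (Theorem~\ref{thm:pm-scratch}), contributing $\bigo{\samplesize K N}$; every other target invokes \LFD{}, querying at most $Kd$ features per example plus $d$ verification features, contributing $\bigo{\samplesize m K d}$. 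Summing yields $\bigo{\samplesize(KN + mKd)}$.

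The only non-routine care needed is the high-probability guarantee: I would take a union bound over the failure probabilities of Lemma~\ref{lem:g-hat-g} and Theorem~\ref{thm:pm-scratch} across all $N$ features and all $m$ tasks, which costs only an extra logarithmic factor in $\samplesize$ and leaves the stated feature-evaluation bound unchanged.
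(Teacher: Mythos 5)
Your algorithm is exactly the paper's: the same representation (the set of features seen in scratch-learned targets), the same \LFD{} (learn powers of seen features via Lemma~\ref{lem:g-hat-g}, set unseen powers to zero, verify on one example via Lemma~\ref{lem:pit}), the same \ID{}, and the same cost accounting. The gap is in what you call the heart of the argument. Your bound on the number of from-scratch learnings rests on the no-cancellation identity $\mathrm{supp}(\G) = \bigcup_{i\colon w_i > 0}\mathrm{supp}(\f_i)$, which requires the combination weights $\w$ to be nonnegative. But Problem Setup~\ref{ass:monomial-general} — which is what the theorem invokes — only assumes $\G^{(j)} \in \C(\F)$, i.e., membership in the column space with \emph{arbitrary real} weights, and the paper's monomial machinery (Algorithm~\ref{alg:lfd-prod-mon}, Lemma~\ref{lem:pm-lfd}) is built at that level of generality. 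With signed weights your identity fails: take $\f_1 = x_1^2x_2$ and $\f_2 = x_1x_2$, i.e., exponent vectors $(2,1)$ and $(1,1)$; the valid monomial target $\G = x_2$ satisfies $\G = 2\f_2 - \f_1 \in \C(\F)$, yet $\mathrm{supp}(\G) = \{2\}$ contains neither metafeature's support. So a failure need not "cover" any new metafeature, your induction does not go through, and the same issue infects your size bound, since $\I_{\mathrm{seen}} \subseteq \bigcup_{i}\mathrm{supp}(\f_i)$ also presumes no cancellation.

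The fix is the paper's counting argument, which is purely linear-algebraic and is in fact already available inside your setup: keep the scratch-learned targets as columns of a matrix $\tildeF$ (so $\I_{\mathrm{seen}}$ is the set of nonzero rows). If $\G \in \C(\tildeF)$, then $\G$ vanishes on every all-zero row of $\tildeF$, so $\mathrm{supp}(\G) \subseteq \I_{\mathrm{seen}}$ and \LFD{} succeeds; contrapositively, every target on which \LFD{} fails is linearly independent of all previously scratch-learned targets. Since every target lies in $\C(\F)$, a subspace of dimension at most $K$, there can be at most $K$ failures — no sign assumptions needed. Finally, the bound $|\I_{\mathrm{seen}}| \leq Kd$ should be argued directly from the targets, not the metafeatures: each scratch-learned target is itself a degree-$\leq d$ monomial, hence has at most $d$ nonzero exponents, and there are at most $K$ of them. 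With these two replacements your proof matches the paper's and is correct; the union-bound remark at the end is fine as stated.
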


\begin{proof}(Sketch)
We use \ID{} Algorithm~\ref{alg:id-prod-mon} that essentially stores the list of targets that have been learned from scratch as the columns of the matrix $\tildeF$. Now, consider the set of  features that have been ``seen'' so far i.e., these correspond to rows in $\tildeF$ that have at least one non-zero entry. Then, for a new target $\G$, we define a \LFD{} algorithm that determines the powers of only these features. This can be done by evaluating only those features on the data set using the technique in Algorithm~\ref{alg:monomials-scratch}. The unseen features are assumed to have zero power.

Now, consider a new target $\G$ that is ``linearly dependent'' on the targets that have been learned so far i.e., $\G \in \C(\tildeF)$. In this case, the unseen features should have a zero exponent in $\G$ as it is zero in all earlier targets. Thus, our \LFD{} technique would not fail on such targets. Now, if $\G$ was linearly independent, it is possible that an unseen feature has a non-zero exponent in $\G$. To verify whether this is the case, we can draw a single sample and check whether our prediction matches the true output.  If this fails, we learn the target correctly from scratch and add it to $\tildeF$. 

Thus, since we add only linearly independent targets to $\tildeF$, in a manner similar to the proof of Theorem~\ref{thm:monomials}, we can show that \LFD{} will not fail more than $K$ times.  Our result follows from here because each of the targets that we learn from scratch have at most $d$ non-zero exponents. Then, in total we only have at most $Kd$ ``seen'' features i.e., features with non-zero powers that we always examine.
\end{proof}

\subsubsection{Monomial Identity Testing}

We show here that it is sufficient to draw a single example and check whether our prediction matches the true label in order to conclude whether the monomial that we learned is indeed the true monomial. Here, we make use of the condition that the probability distribution is smooth in that the probability density function at any value of a feature is finite.

 \begin{lemma}
\label{lem:pit}  If for every feature $i$, the marginal probability density function at $x_i$ is finite for all values of $x_i$
then we have that for any $\G' \neq \G$, $Pr[P_{\G'}(\x) \neq P_{\G}(\x)] = 1$.
\end{lemma}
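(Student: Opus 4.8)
The plan is to linearize the problem by taking logarithms. Since Assumption~\ref{ass:monomial-distr} guarantees that each feature is supported on $[1,2]$, every coordinate of $\x$ is strictly positive, so both $P_{\G}(\x)$ and $P_{\G'}(\x)$ are positive and we may take logs. Writing $\G = (\g_1, \ldots, \g_N)$ and $\G' = (\g'_1, \ldots, \g'_N)$, the event $P_{\G'}(\x) = P_{\G}(\x)$ is then equivalent to $\sum_{i=1}^{N} (\g_i - \g'_i)\log x_i = 0$. Setting $c_i = \g_i - \g'_i$, the hypothesis $\G' \neq \G$ means the vector $(c_1, \ldots, c_N)$ is not identically zero, so I can fix some index $j$ with $c_j \neq 0$.

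First I would condition on the values of all features $x_i$ with $i \neq j$. Because $\distr$ is a product distribution, conditioning on these coordinates leaves the marginal of $x_j$ unchanged, and the constraint $\sum_i c_i \log x_i = 0$ reduces to the single linear equation $\log x_j = -\frac{1}{c_j}\sum_{i \neq j} c_i \log x_i$. Once the other coordinates are fixed, the right-hand side is a fixed real number $v$, so the conditional event is exactly $\{x_j = e^{v}\}$, i.e.\ that $x_j$ takes one specific value.

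The key step is that a coordinate with a finite probability density has no atoms: since the marginal density of $x_j$ is finite everywhere, $\Pr[x_j = e^{v}] = 0$ for every fixed $v$. Hence the conditional probability of the bad event is zero for every choice of the remaining coordinates, and integrating over those coordinates (the law of total probability, legitimate because of the product structure) gives $\Pr[P_{\G'}(\x) = P_{\G}(\x)] = 0$. Taking complements yields $\Pr[P_{\G'}(\x) \neq P_{\G}(\x)] = 1$, as claimed.

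The only real obstacle is the measure-theoretic bookkeeping---verifying that the conditioning on a product distribution is legitimate and that ``finite density implies no atom'' is applied correctly---but this is routine once positivity lets us pass to logarithms. The essential content is simply that the solution set of a single nontrivial linear equation in the $\log x_i$ is a hyperplane, which is a null set under any product of atomless measures.
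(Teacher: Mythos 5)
Your proof is correct for the lemma as stated, but it takes a genuinely different route from the paper. The paper proves a stronger, purely algebraic fact by induction on the number of variables: \emph{any} nonzero real polynomial $P'$ of bounded degree satisfies $Pr[P'(\x)=0]=0$ under an atomless product distribution (a real-valued Schwartz--Zippel argument), and then specializes to $P' = P_{\G} - P_{\G'}$. You instead exploit the multiplicative structure of monomials: using the positivity of the support from Assumption~\ref{ass:monomial-distr} you pass to logarithms, so the bad event becomes a single nontrivial linear equation $\sum_i (\g_i - \g'_i)\log x_i = 0$, i.e.\ a hyperplane, which is null under a product of atomless marginals. Your argument is shorter and more transparent, but two things are worth noting. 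First, it uses the bounded support $[1,2]$ (positivity) from Assumption~\ref{ass:monomial-distr}, which is not part of the lemma's stated hypothesis (finite density only); the paper's induction needs no such positivity. Second, and more substantively, the log-linearization is specific to monomials: the paper later invokes this lemma ``applied to polynomials'' in the proof of Lemma~\ref{lem:poly-lfd}, where the relevant difference $P_{\setG} - P_{\tilde{\setG}}$ is a sum of monomials and cannot be linearized by taking logs. So the paper's inductive proof buys the generality needed downstream, whereas your proof would have to be supplemented by a separate argument (essentially the paper's induction) to cover the polynomial identity-testing step.
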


\begin{proof}
We will prove by induction on $N' \leq N$ and $d' \leq d$  that for any polynomial $P'$ of degree $d'$ over $N'$ variables $Pr[P'(\x) = 0] = 0$. Then, we only need to plug in $P' = P_{\G} - P_{\G'}$ to complete the proof. 

For the base case assume the polynomial is only over one variable and any degree i.e., $N'=1$ and any $d' \leq d$.  Then the event $[P'(\x) = 0]$ corresponds to picking one of at most $d'$ zeroes of $P'$ from $\mathbb{R}$ (since $N'=1$), which amounts to a probability of $0$ according to the assumption on the probability density function.

Now assume for all $N' < N$ and $d' \leq d$, our induction hypothesis is true. The polynomial $P'$ can be expressed as a summation of terms in $x_1$:  $\sum_{i=0}^{k} P''_i(x_2, \hdots x_n) x_1^i$ where $k$ is the highest degree of $x_1$ and $P''_i$ is the coefficient of $x_1^i$.  Then, for a fixed value of $x_2, \hdots x_N$, $P'$ reduces to a polynomial of degree $k\leq d$  over one variable. Then, our induction assumption implies that conditioned on some arbitrary values of $x_2, \hdots, x_N$, the polynomial in $x_1$ attains zero with probability $0$ i.e., $Pr[P'(\x) =0 \, | \, x_2, \hdots x_N] = 0$. Then it follows that $Pr[P'(\x) = 0 ]=0$. 
\end{proof}

\subsection{Polynomials}
\label{app:poly}

We now describe our straightforward lifelong learning approach for polynomials which remembers only the features that have been seen so far.

\begin{theorem}[\textbf{Naive lifelong learning of polynomials}]
\label{thm:naive-poly}
In the model of Problem Setup~\ref{ass:polynomial-general}, there exists a naive algorithm for lifelong learning of $t$-sparse polynomials that makes $\bigo{S(KN + mKd)}$ feature evaluations in total.
\end{theorem}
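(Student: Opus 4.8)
The plan is to mirror the naive monomial protocol of Theorem~\ref{thm:naive-monomials}, swapping in the polynomial learner of \cite{andoni} and the $t$-iteration extraction of Algorithm~\ref{alg:lfd-poly}. For \ID{} I would reuse Algorithm~\ref{alg:id-poly} verbatim: after learning a polynomial $\setG$ from scratch, append to $\tildeF$ every monomial $\G \in \setG$ with $\G \notin \C(\tildeF)$, so that $\tildeF$ always consists of linearly independent columns drawn from $\C(\F)$. The ``seen'' features are exactly the indices of the nonzero rows of $\tildeF$. Since at most $K$ linearly independent monomials can ever be added (the rank of $\tildeF$ never exceeds $\dim \C(\F) = K$) and each added monomial has at most $d$ nonzero exponents, the number of seen features is at most $Kd$ at all times.

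For \LFD{} I would examine only the seen features on all $S$ samples and run the lexicographic extraction of \cite{andoni} restricted to those features, producing a candidate $\tilde{\setG}$ of at most $t$ monomials; I would then draw one fresh sample, query the (at most $td$, all already seen) features relevant to $\tilde{\setG}$, and declare failure unless $P_{\tilde{\setG}}(\x) = P_{\setG}(\x)$. The correctness argument has two halves, exactly as in Lemma~\ref{lem:poly-lfd}. First, if every monomial of the target lies in $\C(\tildeF)$, then each such $\G = \tildeF\,\w$ has $\g_i = 0$ for every unseen feature $i$ (a zero row of $\tildeF$ forces a zero entry), so $P_{\setG}$ is a function of the seen features alone and the restricted search recovers it exactly; \LFD{} then succeeds. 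Second, whenever \LFD{} does output a polynomial, the single-sample check certifies correctness: applying Lemma~\ref{lem:pit} to the degree-$\le d$ difference polynomial $P_{\setG} - P_{\tilde{\setG}}$ shows that a random sample distinguishes $\tilde{\setG} \neq \setG$ with probability $1$ (with high probability in the sample-based version).

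With correctness in hand, the failure count follows the rank argument of Theorem~\ref{thm:poly}: each failure triggers a from-scratch learn whose \ID{} step must add at least one new linearly independent monomial (otherwise all monomials of $\setG$ would already lie in $\C(\tildeF)$ and \LFD{} would not have failed), so the total number of from-scratch events is at most $K$. Summing costs then yields the bound. The at most $K$ from-scratch learns cost $O(SN)$ each, i.e.\ $O(SKN)$; and for each of the remaining $O(m)$ targets \LFD{} examines at most $Kd$ seen features on each of the $S$ samples, plus $O(td)$ features on one verification sample, which is $O(SKd)$ per target and $O(SmKd)$ overall, for a total of $O(S(KN + mKd))$.

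The hard part will be justifying that the lexicographic search of \cite{andoni} remains correct after being restricted to the seen features. The key point to nail down is that, in the favorable case, the restriction loses no information because the target's monomials genuinely have zero power on every unseen feature, so the orthonormal-basis correlation tests over the seen features behave identically to the unrestricted ones; in the unfavorable case, any resulting error is caught by the single-sample verification via Lemma~\ref{lem:pit}. Once this is argued, the remaining accounting is routine and parallels Theorem~\ref{thm:naive-monomials}.
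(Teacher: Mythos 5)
Your proposal matches the paper's proof essentially step for step: the same \ID{} routine (Algorithm~\ref{alg:id-poly}) maintaining linearly independent monomials, the same \LFD{} idea of running the lexicographic search of \cite{andoni} restricted to the ``seen'' (nonzero-row) features with a single-sample verification via Lemma~\ref{lem:pit}, and the same rank-based argument bounding the number of from-scratch learns by $K$, yielding $\bigo{\samplesize(KN + mKd)}$. The paper's version is only a sketch, so your more detailed accounting (including the observation that a zero row of $\tildeF$ forces a zero exponent in any monomial of $\C(\tildeF)$) is a faithful elaboration of the same argument rather than a different route.
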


\begin{proof} (Sketch)
This approach is very similar to the naive approach for lifelong learning of monomials. We will use \ID{} Algorithm~\ref{alg:id-poly} which, as we know already, maintains a list of linearly independent monomial targets that have been seen in the polynomials learned from scratch so far. Now, for a new target $\setG$, we will perform the ``lexicographic search'' method from \cite{andoni} over only the features that have been seen i.e., during the search we skip features that correspond to an all zero row in $\tildeF$. Essentially, we assume that the unseen features do not occur in the target polynomial. We again check whether the polynomial computed this way is correct by verifying it on a single sample.

Using this approach we are guaranteed that if $\setG \subset \C(\tildeF) \times \mathbb{R}$, \LFD{} does not fail because such a target will not contain unseen features in any of its monomials. Then, we can use an argument similar to Theorem~\ref{thm:naive-monomials} and show by contradiction that \LFD{} can fail at most $K$ times, and hence evaluate only $Kd$ features per example. 
\end{proof}

\end{document}